\definecolor{colorhkust}{RGB}{20,43,140}
\definecolor{colortsinghua}{RGB}{116,52,129}
\definecolor{color1}{RGB}{128,0,0}
\newtheorem{lemma}{Lemma}
\newtheorem{theorem}{Theorem}
\newtheorem{corollary}{Corollary}
\newtheorem{proposition}{Proposition}
\newtheorem{remark}{Remark}
\newtheorem{assumption}{Assumption}
\newcommand{\eqdef}{\coloneqq}
\def\<#1,#2>{\left \langle #1,#2 \right \rangle}
\newcommand{\equa}[1]{\begin{equation}\begin{aligned}
			#1
		\end{aligned}
	\end{equation}
}
\begin{document}

\title{Federated Learning over Hierarchical Wireless \\ Networks: Training Latency Minimization \\ via Submodel Partitioning}
\author{
Wenzhi Fang, Dong-Jun Han, and Christopher G. Brinton
\thanks{
Wenzhi Fang and Christopher G. Brinton are with the Elmore Family School of Electrical and Computer Engineering, Purdue University, West Lafayette, IN, 47906 USA email:\{fang375, cgb\}@purdue.edu

D.-J. Han is with the Department of Computer Science and Engineering, Yonsei University, Republic of Korea. email: djh@yonsei.ac.kr.

This work was supported in part by the National Science Foundation (NSF) under grant CNS-2146171, the Office of Naval Research (ONR) under grant N00014-21-1-2472, and the Defense Advanced Research Projects Agency (DARPA) under grant D22AP00168.

An abridged version of this paper appeared in the 2024 IEEE International Conference on Communications (ICC) \cite{fang2023submodel}.
}
}




\maketitle

\begin{abstract}
Hierarchical federated learning (HFL) has demonstrated promising scalability advantages over the traditional ``star-topology'' architecture-based federated learning (FL).
However, HFL still imposes significant computation, communication, and storage burdens on the edge, especially when training a large-scale model over resource-constrained wireless devices. In this paper, we propose \textit{hierarchical independent submodel training} (\textnormal{\texttt{HIST}}), a new FL methodology that aims to address these issues in hierarchical cloud-edge-client networks. 
The key idea behind \textnormal{\texttt{HIST}} is to divide the global model into disjoint partitions (or submodels) per round  so that each group of clients (i.e., cells) is responsible for training only one partition of the model. 
We characterize the convergence behavior of \textnormal{\texttt{HIST}} under mild assumptions, showing the impacts of several key attributes (e.g., submodel sizes, number of cells, edge and global aggregation frequencies) on the rate and stationarity gap.
Building upon the theoretical results, we propose a submodel partitioning strategy to minimize the training latency depending on network resource availability and a target learning performance guarantee.
We then demonstrate how \textnormal{\texttt{HIST}} can be augmented with over-the-air computation (AirComp) to further enhance the efficiency of the model aggregation over the edge cells. 
Through numerical evaluations, we verify that \textnormal{\texttt{HIST}} is able to save training time and communication costs by wide margins while achieving comparable accuracy as conventional HFL. Moreover, our experiments demonstrate that AirComp-assisted \textnormal{\texttt{HIST}} provides further improvements in training latency. 

\end{abstract}

\begin{IEEEkeywords}
Hierarchical federated learning, submodel training, wireless networks, over-the-air computation.
\end{IEEEkeywords}


\section{Introduction}

Massive amounts of training data collected by geographically distributed users have contributed to the huge success of modern machine learning (ML) applications. However, due to privacy and communication resource constraints, users may be unwilling to share their data with the service provider for centralized model training.
 To avoid this issue, federated learning (FL) \cite{mcmahan17a}, as a promising distributed learning approach, has been widely investigated recently.  
Different from centralized training, in the conventional version of FL, clients are responsible for training the model while the central server is only in charge of aggregating the client models and synchronizing them with the resulting global model.
This enables clients to collaboratively learn a global model without any sharing of raw data.
Owing to this benefit,  FL has  
 attracted significant attention in recent years \cite{dinh2020federated,wang2021network,yuan2023decentralized,jianyu}. 

In the traditional cloud-based FL \cite{wang2019adaptive}, all clients in the system directly communicate with a central cloud server to exchange model information, resulting in communication scalability issues as the size of the network grows.
To address this, hierarchical federated learning (HFL) has been proposed as an alternative \cite{liu2020client,lin2021semi,hosseinalipour2022multi,wang2022demystifying,fang2024hierarchical}, taking advantage of the fact that in many network systems, clusters of clients are served by intermediate edge servers (e.g., mobile devices partitioned into cells, with the base station containing an edge server).
The introduction of edge servers in HFL reduces communication and scheduling complexity, as the cloud server now only needs to communicate with the edge servers. 
However, as the size of the model to be trained increases, the HFL training process still suffers from scalability issues. These issues manifest in several dimensions: (i) computation/storage costs at individual clients, (ii) communication burden between clients and edge servers, and (iii) communication load between edge servers and the cloud server. These are fundamental bottlenecks for the practical deployment of HFL, especially in the growing set of cases where resource-constrained devices (e.g., mobile phones) are aiming to collaboratively train a large-scale neural network (e.g., state-of-the-art image classifiers may have millions of parameters \cite{he2016deep}). 

Motivated by these challenges, in this paper, we investigate a training methodology for HFL, termed \textit{hierarchical independent submodel training} (\textnormal{\texttt{HIST}}), that is communication-, computation-, and storage efficient. One of our key innovations is to integrate independent submodel training (IST) into HFL. The core idea is to partition the global model into disjoint submodels in each training round and distribute them across different cells, so that devices in distinct cells are responsible for training different partitions of the full model. Such a submodel partitioning has the potential to reduce computation and storage loads at clients, and also alleviate communication burden on both the links between clients and edge servers and between edge servers and the cloud server. 
In particular, it makes the per-iteration communication complexity at the cloud server remain consistent regardless of the number
of edge servers.
Doing so, however, requires a careful analysis of how submodel partitioning impacts the learning performance and training efficiency in HFL, which we address in this paper.

In addition to the training methodology, the client-edge wireless communication mechanism also has significant implications on the resource efficiency of HFL. Limited radio resources within wireless cells create communication bottlenecks when serving large transmissions from several users.
For this reason, some researchers, e.g., \cite{lim2021decentralized,wen2022joint} have investigated optimizing radio resource allocation within cells, blended with techniques like partial client selection, to enhance the efficiency of HFL training.
However, these works focus on orthogonal multiple access (OMA) transmissions in each cell, in which the communication latency incurred by each edge server in each round of training will increase with the number of clients in its coverage \cite{wang2022interference}.

This motivates our second key idea, which is to consider over-the-air computation (AirComp) in HFL together with IST. 
With AirComp \cite{zhu2021over,fang2021over,wang2022over}, clients in the same cell can transmit their models simultaneously to the edge server, resulting in significant training time savings during the model aggregation in each cell. When designed properly, the communication complexity at each edge server will not scale with the number of clients. This complements submodel partitioning, which allows the communication complexity at the cloud server to become independent of the number of cells. 
Achieving this, however, requires closely studying the distortion error introduced by AirComp, how that impacts the \texttt{HIST} training process, and how it can be controlled through physical-layer design. We will develop such an understanding in this paper and employ it within our optimization methodology.


\vspace{-0.1in}
\subsection{Main Contributions}
  \begin{itemize}
\item We propose hierarchical independent submodel training (\texttt{HIST}) to enhance the scalability of FL over cloud-edge-client network topologies (Section \ref{sec_algo}). \texttt{HIST} aims to reduce computation, communication, and storage costs incurred during HFL training via submodel partitioning across the edge. {\color{black}We demonstrate {\tt HIST}'s applicability on fully connected and convolutional neural networks.}
\item We analytically characterize the convergence characteristics of \texttt{HIST} (Section \ref{sec_theory}) for non-convex loss functions, under milder assumptions than those adopted in the existing IST literature. Based on the convergence result, we analyze the performance-efficiency trade-off induced by submodel partitioning, and provide guidelines on setting the key system parameters (e.g., aggregation frequencies, step size, partitioning strategy) of the proposed \texttt{HIST}.
\item We analyze the impact of submodel size on the convergence bound and the training latency of \texttt{HIST}. Using these relationships, we 
propose a training latency minimization strategy (Section \ref{sec_partion_opt}) which optimizes the submodel partitioning sizes without significantly compromising the learning performance (i.e., quantified through the convergence bound), while considering the network resource availability (e.g., computation powers, data rate). 
\item To further enhance training scalability, we propose an AirComp-assisted \texttt{HIST} for the client-edge wireless network within each cell (Section \ref{sec_aircomp}). 
With AirComp in place, we show that each edge server is able to obtain an unbiased estimator of client local model averages within its coverage. We characterize the variance of this estimate on the convergence behavior of the proposed AirComp-assisted \texttt{HIST}. We leverage these relationships to augment our submodel partitioning optimization, as well as to minimize model distortion in receive beamforming.
\end{itemize}
{\color{black}We conduct experiments (Section \ref{sec_simulation}) using both fully connected neural networks and convolutional neural networks to validate the effectiveness of the proposed algorithm in hierarchical network settings.}
Results show that \texttt{HIST} achieves significant resource savings for the same target accuracy compared with standard hierarchical FL in a variety of network configurations. Moreover, numerical experiments confirm that the optimized submodel partitioning strategy further reduces the training latency without model performance degradation. We also show that, under a wide signal-to-noise ratio (SNR) region, the AirComp-assisted \texttt{HIST} algorithm attains almost the same testing accuracy while significantly reducing the training latency compared with OMA-based aggregation.  To the best of our knowledge, this work is one of the earliest attempts to successfully integrate HFL, IST, and AirComp and to analyze its performance over wireless networks.

This work is an extension of our conference paper \cite{fang2023submodel}.  Compared with \cite{fang2023submodel}, this paper offers the following contributions: (i) We derive a new convergence bound of our \texttt{HIST} algorithm that is explicitly a function of the mask sizes, and analyze its impact. (ii) Based on the newly derived bound, we develop our submodel partitioning optimization algorithm to minimize the training latency in each global round subject to a learning performance constraint. (iii) We develop the AirComp-assisted version of \texttt{HIST} to further enhance the efficiency of the model aggregation, and develop the associated beamforming and subnet partitioning optimizations. The combination of AirComp with IST provides communication scalability at the cloud server and edge server levels.



\vspace{-0.1in}
\subsection{Related Works}
\label{ssec:related}

\textbf{Hierarchical federated learning:} 
FL was first studied in a star-topology architecture where all clients are connected to a central cloud server \cite{mcmahan17a}.
The authors of \cite{liu2020client,hosseinalipour2022multi} extended FL to a hierarchical network that consists of a cloud server, edge servers, and clients,  to reduce the communication complexity of the cloud server as well as the aggregation latency.  Built upon this foundation, researchers have developed variants of hierarchical FedAvg to further improve training efficiency. Specifically, the authors of \cite{liu2022hierarchical}, \cite{abad2020hierarchical}, and \cite{malinovsky2022variance} incorporated model quantization, sparsification, and compression into the standard hierarchical FedAvg to reduce the per-round transmission load. 
Furthermore, researchers in \cite{lim2021decentralized} focused on improving the communication efficiency of hierarchical FedAvg by optimizing bandwidth allocation during the model aggregation stage within each cell. Meanwhile, the authors of  \cite{wen2022joint} formulated a joint client selection and resource allocation strategy to further enhance the efficiency when communication resources are limited. 
These works are orthogonal to ours, as we focus on a fundamentally different dimension of improving scalability in HFL, via submodel partitioning. These other techniques could be applied complementary to our approach.


\textbf{Independent submodel training:}
The exploration of submodel training commenced with the pioneering work \cite{yuan2022distributed}, where the authors introduced the concept of IST for fully connected neural networks and provided theoretical analysis under centralized settings. Subsequently, submodel training was extended to graph neural networks \cite{wolfe2023gist} and ResNets \cite{dun2022resist}. Due to its effectiveness in addressing communication, computation, and storage issues, the concept of IST was extended to distributed scenarios in \cite{diao2020heterofl}, where the authors empirically show the effectiveness of submodel training in FL. 
 Several more recent studies have characterized the convergence behavior of distributed submodel training \cite{zhou2022convergence,mohtashami2022masked,shulgin2023better}. 
{\color{black}
However, the aforementioned works have employed some restrictive assumptions in their analysis. Specifically, \cite{zhou2022convergence} assumes bounded gradients of the model, while \cite{mohtashami2022masked} imposes a constraint on model partitioning which may be difficult to satisfy in practice (Appendix D of \cite{shulgin2023better} gives an example of a strongly convex, quadratic model which violates the constraint in \cite{mohtashami2022masked}). Furthermore, the convergence result in \cite{mohtashami2022masked} imposes a step size adjustment which involves computing the gradient on the full model, which submodel partitioning and training aims to avoid. Additionally, some works, such as \cite{shulgin2023better}, focus specifically on quadratic models as opposed to more general (non-convex) ML models.
}

{\color{black}More importantly, existing works focus on cloud-based FL with a single server, and thus do not provide insights into the hierarchical case. To the best of our knowledge, \texttt{HIST} is the earliest work to integrate IST with HFL and provide theoretical analysis with experimental verification. 
As we will see, the multi-layer, multi-timescale nature of \texttt{HIST} introduces unique analytical challenges compared to star topology FL settings.
}


\textbf{AirComp-assisted FL:}
 AirComp has been investigated as a promising solution for improving the aggregation efficiency of FL over wireless networks. The authors of \cite{yang2020federated,zhu2020one,cao2021optimized} proposed to utilize AirComp to accelerate the convergence of FedAvg in single-cell networks. Subsequently, the authors in \cite{wang2022interference} extended it to multi-cell FL and focused on the problem of inter-cell interference mitigation. 
  \cite{aygun2022over} notably explored the incorporation of AirComp into the HFL architecture for edge model aggregations, demonstrating its advantages in terms of reducing the communication complexity of edge servers.
 More recently, the authors in \cite{zhou2023over} employed AirComp to support hierarchical personalized FL, and characterized the impact of AirComp on convergence, which they find introduces a non-diminishing term from aggregation distortion.
Different from these works, we explore the impact of AirComp on an HFL training process that employs submodel partitioning. One of our key contributions in this paper is to optimize submodel partitioning sizes with and without AirComp, and to demonstrate the substantial reductions in latency that can be obtained by jointly designing AirComp and IST over hierarchical wireless networks.

\section{Hierarchical Independent Submodel Training} \label{sec_algo}

In this section, we detail HFL's problem formulation, followed by the proposed \texttt{HIST} algorithm tailored to HFL. 

\vspace{-0.05in}
\subsection{System Model and Formulation}
We consider an HFL system that consists of a single cloud server, $N$ edge servers indexed by $j = 1,...,N$, and $\sum_{j=1}^N n_j$ clients, where $n_j$ is the number of clients located in the $j$-th cell. We let $\mathcal{C}_j$ denote the set of clients in cell $j$ and index them $i \in \mathcal{C}_j$. Edge server $j$ is responsible for coordinating the training process of the $n_j$ clients in cell $j$. The cloud server is in charge of global model aggregations over $N$ geographically distributed edge servers. 

The system aims to train an ML model parameterized by a $d$-dimensional vector $\bm x \in \mathbb{R}^d$. Given the loss function $l(\bm x,\xi)$ which measures the loss on sample $\xi$ for model $\bm x$, the training objective of HFL can be formulated as follows:
\equa{
\min _{\bm{x}} f(\bm{x}) &\eqdef \frac{1}{N} \sum_{j=1}^N f_j(\bm{x}), & \text{(Global loss)}\\
f_j (\bm x)&\eqdef \frac{1}{n_j} \sum_{i \in \mathcal{C}_j} F_i(\bm x),& \text{(Cell loss)} \\
F_i(\bm{x}) &\eqdef \mathbb{E}_{\xi_i \sim \mathcal{D}_i} \left[ l(\bm x, \xi_i)  \right], & \text{(Client loss)} \\
}
where $f : \mathbb{R}^d \rightarrow \mathbb{R}$, $f_j : \mathbb{R}^d \rightarrow \mathbb{R}$, and $F_i: \mathbb{R}^d \rightarrow \mathbb{R}$ represent the global loss, the loss across the $j$-th cell, and the loss of client $i$, respectively.  $ \mathcal{D}_i$ denotes the local data distribution of client $i$. 
In this work, we mainly consider the non-i.i.d. (non-independent and identically distributed) scenario where data distributions are heterogeneous across different clients.

In conventional HFL, all clients in the system train local versions of the full model. To support such training, each client needs to be equipped with enough computation, storage, and communication resources. However, 
 as the size of models continues to grow -- the trend of deep learning --  
it is prohibitive for resource-constrained clients to handle full model training. This motivates us to develop a submodel partitioning strategy for HFL, which we will introduce in the rest of this section.

\vspace{-0.05in}
{\color{black}
\subsection{Preliminaries of Model Partitioning}\label{appendix:model_partition}
\begin{figure}[t!]
    \centering
    \begin{subfigure}[b]{0.23\textwidth}
        \centering
        \includegraphics[width=\textwidth]{./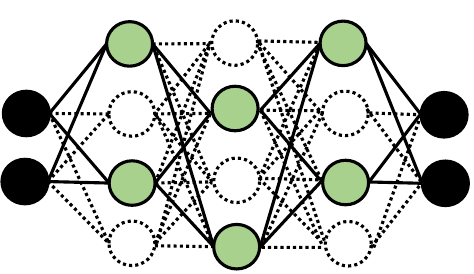}
        \caption{Proposed by \cite{yuan2022distributed}, partition ratio of $\frac{1}{2^2}$.}
        \label{fig:neural_partition1}
    \end{subfigure}
    \hfill
    \begin{subfigure}[b]{0.24\textwidth}
        \centering
        \includegraphics[width=\textwidth]{./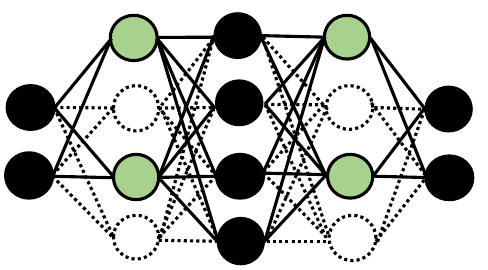}
        \caption{Our partition scheme, partition ratio of $\frac{1}{2}$.}
        \label{fig:neural_partition2}
    \end{subfigure}
    \caption{Example comparison of partition strategies for a fully connected neural network with multiple hidden layers.}
    \label{fig:neural_partitions}
    \vspace{-0.2in}
\end{figure}


Model partitioning aims to improve training efficiency by dividing a full neural network model into smaller submodels that are trained in parallel. Existing works studying such strategies include \cite{yuan2022distributed} and \cite{zhou2022convergence}, which partition the hidden neurons and distribute them across clients. Fig.~\ref{fig:neural_partition1} summarizes the method proposed in \cite{yuan2022distributed} for fully connected layers, where the neurons at every hidden layer are divided into $N$ parts for $N$ different submodels, one per client (shown here for $N = 2$, green and white), while leaving the input and output neurons independent of the partitioning. This results in a partition ratio of $1/N^2$, indicating the ratio of the submodel's size to the full model's size. With this strategy, the total number of parameters across $N$ submodels is expected to be lower than that of the original full model, i.e., some parameters are expected to be missing. For example, in Fig.~\ref{fig:neural_partition1}, all link weights between green neurons of one layer and white neurons of an adjacent layer are severed, since the neurons are assigned to different clients during partitioning.

We instead consider a strategy where we in effect partition by \textit{link} instead of neurons. To accomplish this, we can partition the neurons in one hidden layer (e.g., the $l$-th hidden layer) into $N$ parts, and leave the subsequent layer (e.g., the $(l+1)$-th hidden layer) intact, with all its neurons shared across the $N$ parts.
This alternating approach is depicted in Fig.~\ref{fig:neural_partition2} (again for $N = 2$). Each client then receives one of the $N$ parts, thereby preserving all the link weights (parameters) between two consecutive layers with none repeated (e.g., green-black links for client 1, white-black links for client 2). This results in a partition ratio of $1/N$.}

\vspace{-0.05in}
\subsection{Algorithm Description}

\begin{figure}[t]
\centering
\includegraphics[width=8.5cm]{./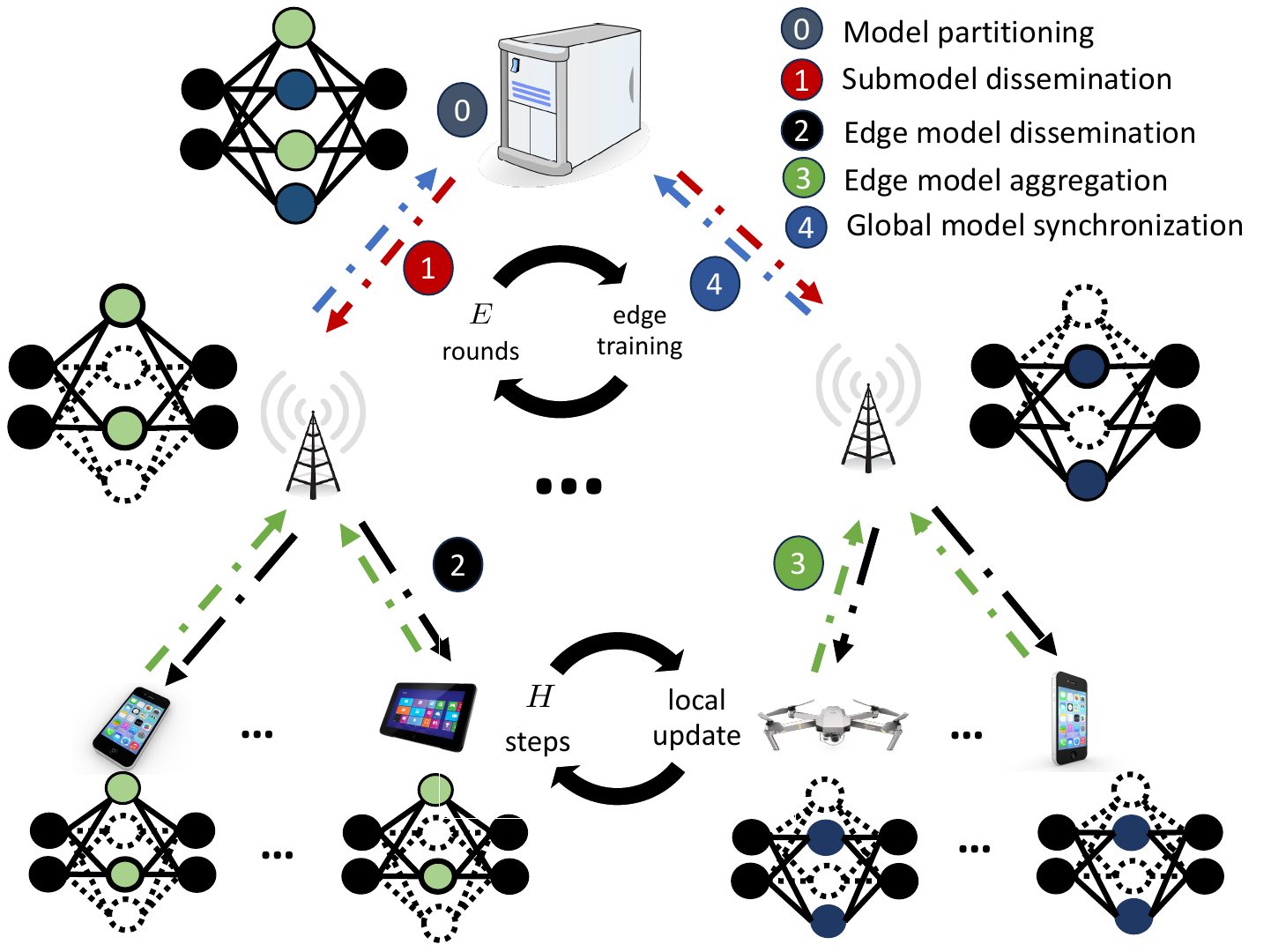}
\caption{Overview of the proposed \texttt{HIST} algorithm. Each cell is responsible for training only a specific partition of the full model in each global round, with the specific submodel partitioning changing over each round.}\label{fig:HIST}
\vspace{-0.2in}
\end{figure}

An overview of our hierarchical federated submodel training (\texttt{HIST}) algorithm is presented in Fig. \ref{fig:HIST} and Algorithm \ref{alg:HFIST}. Similar to hierarchical FedAvg, the cloud server periodically aggregates edge models from edge servers, while each edge server periodically aggregates local models from the {\color{black}active} clients within the corresponding cell. {\color{black}Active clients refer to those selected to participate in the current training round.}
The key difference in \texttt{HIST} is that clients will only store, update, and exchange a portion of the model in each training iteration.

Specifically, at the start of the $t$-th global round, with the current global model denoted as $\bar{\bm x}^t$, the cloud server initiates the training process by partitioning $\bar{\bm x}^{t}$ as follows: 
\begin{equation}\label{}
\{\bm p_j^t \odot \bar{\bm x}^{t} \mid j = 1,2,\ldots, N \}, 
\end{equation}
where $\odot$ denotes the Hadamard Product operation {\color{black} and $\bm p_j^t$ is the mask for edge server $j$. This mask will in general change over global rounds, and thus is indexed by $t$.\footnote{
It is worth noting that for any submodel, denoted $\bm x_j$, there exists a mask $\bm p$ satisfying $\bm x_j = \bm p \odot {\bm x}$, where $\bm x$ denotes the full model.}
} These masks are binary vectors and satisfy
\begin{equation}\label{mask_IST}
\bm p_j^t \odot \bm p_{j^{\prime}}^t = \bm 0, \forall j^{\prime} \neq j,~ \text{and}~ \sum_{j=1}^N \bm p_j^t = \bm 1,
\end{equation}
which can be satisfied through our strategy in Sec.~\ref{appendix:model_partition}.

These submodels are then distributed to the corresponding edge servers. Specifically, edge server $j$ receives $\bm p_j^t \odot \bar{\bm x}^{t}$ from the cloud server and initializes its model for global round $t$ as 
\begin{equation}\label{how_to_partition}
\bar{\bm x}^{t,0}_j = \bm p_j^t \odot \bar{\bm x}^{t}, \forall j \in \{1,2,\ldots, N\}.
\end{equation}
Subsequently, edge server $j$ disseminates $\bar{\bm x}^{t,0}_j$ to the clients in its cell for local model initialization, i.e., 
\[{\bm x}^{t,0}_{i,0} = \bar{\bm x}^{t,0}_j, \forall i\in \mathcal{C}_j.\]
In our notation, the $0$'s in the subscript and superscript refer to client-level and cell-level model initializations, respectively.
Once the clients receive this model from the edge server, they commence local training on their own data.

In \texttt{HIST}, each global round consists of $E$ steps of edge aggregation at each edge server and one global aggregation at the cloud server. We use $(t,e)$ to denote the $e$-th edge round at global round $t$. Each edge round in turn includes $H$ steps of local updates at each client and one edge aggregation.
 The essential steps conducted by clients, edge servers, and the cloud server in our algorithm are outlined as follows. 

 \textbf{Clients:}
 {\color{black}Let $\mathcal{C}_j^{t,e}$ denote the set of clients participating in the $(t,e)$-th round of training within the $j$-th cell, with cardinality $|\mathcal{C}_j^{t,e}| = n_j^{\prime}$. This set is assumed to be a uniform sampling from the full set of clients $\mathcal{C}_j$.
 Each client $i \in \mathcal{C}_j^{t,e}$} computes stochastic gradients with respect to its corresponding submodel, and updates the local model for $H$ steps via the following iteration:
 \begin{equation}\label{sgd}
 {\bm x}^{t,e}_{i,h+1} = {\bm x}^{t,e}_{i,h} - \gamma {\bm p}^t_j \odot \nabla l({\bm x}^{t,e}_{i,h}, \xi^{t,e}_{i,h}) , h = 0,1,\ldots, H\!-\!1,
 \end{equation} 
 where $\gamma$ represents the learning rate 
 and $t$, $e$, and $h$ denote the number of global rounds, edge rounds, and local iterations, respectively.
 Note that mask ${\bm p}^t_j$ is kept invariant during one global round.
 Subsequently, each client uploads the updated submodel to the corresponding edge server.
 
	 \textbf{Edge Servers:} After every $H$ steps of local submodel updates at the clients, each edge server $j$ aggregates the local models of {\color{black}active clients} within its coverage as 
\begin{equation}\label{fedavg}
 \bar{\bm x}^{t,e+1}_j = \frac{1}{n_j^{\prime}} \sum_{i\in {\color{black} \mathcal{C}_j^{t,e}}} {\bm x}^{t,e}_{i,H}, \forall j \in \{1,2,\ldots, N\}. 
 \end{equation}
  If less than $E$ rounds of edge training have passed, the servers disseminate \eqref{fedavg} to their clients to initialize $\bm x_{i,0}^{t,e+1}$ for the next iteration of local updates. Otherwise, each edge server uploads the aggregated model to the cloud to update the global model.
        
         \textbf{Cloud Server:} Once the cloud server receives the latest models from edge servers,
         it updates the global model as:
\begin{equation}\label{synchronization}
        \bar{\bm x}^{t\!+\!1} = \sum_{j=1}^N \bar{\bm x}^{t,E}_j.
        \end{equation}
         Subsequently, the cloud server repartitions the global model $\bar{\bm x}^{t\!+\!1}$ based on a newly generated set of masks as $\bar{\bm x}_j^{t\!+\!1,0} = \bm p^{t\!+\!1}_j \odot  \bar{\bm x}^{t\!+\!1}, \forall j \in \{1,2,\ldots, N\}$.
       Finally, $\bar{\bm x}_j^{t\!+\!1,0}$ is sent to cell $j$ to initiate the next round of training.

With the proposed algorithm, clients and edge servers are not required to store or manipulate the full size of the global model. This enables \texttt{HIST} to alleviate communication, computation, and storage burdens for clients and edge servers compared to conventional HFL.  In particular, assuming each cell receives an equal-sized submodel partition, the resource requirements decrease by a factor of $N$. In Section \ref{sec_theory}, our convergence analysis will reveal the impact of this mask selection and other \texttt{HIST} parameters on model training performance.

{\color{black}
\begin{remark}
While our partitioning strategy in Section \ref{appendix:model_partition} is presented for fully connected layers, it can be extended to convolutional layers as well.
Concretely, the parameters of a convolutional layer can be represented as a 4D tensor of dimension $d^{s} \times d^{s} \times d^{\text{in}} \times d^{\text{ker}}$, where $d^{s}$ denotes the spatial size, $d^{\text{in}}$ is the number of channels of the input to this layer, and $d^{\text{ker}}$ corresponds to the number of kernels, which also determines the number of channels in the output feature map of this layer. The output feature map serves as the input to the next convolutional layer. In other words, $d^{\text{in}}$ for a given layer depends on the number of kernels in the last layer. Therefore, the $l$-th convolutional layer's tensor dimension can be expressed as $d^{s}_{l} \times d^{s}_{l} \times d_{l\!-\!1}^{\text{ker}} \times d_l^{\text{ker}}$.
This can then be treated as a special matrix of size $d_{l\!-\!1}^{\text{ker}} \times d_l^{\text{ker}}$, where each element is a $d^{s}_l \times d^{s}_l$ matrix. These matrices are analogous to connections between neurons in a fully connected layer. Thus, for layer $l$, we can randomly assign a subset of convolutional kernels to each submodel. This partitioning in the $l$-th layer will specify a partitioning of the subsequent $(l+1)$-th layer as well, because the output channels of the $l$-th layer correspond to the input channels of the $(l+1)$-th layer. Hence, this becomes an alternating partitioning strategy, analogous the fully connected layer approach described in Sec.~\ref{appendix:model_partition}. Specifically, we can partition the convolutional kernels in the $l$-th layer by selecting a subset of kernels for each group, assigning all kernels in the $(l+1)$-th layer to each group, and then partitioning again in the $(l+2)$-th layer.
\end{remark}
}

\vspace{-0.1in}
\section{Convergence Analysis of \texttt{HIST}}\label{sec_theory}
This section provides convergence analysis for the proposed \texttt{HIST} algorithm. 
We note that the convergence proof of conventional hierarchical FedAvg cannot be directly extended to our case, due to the effect of the masks in submodel partitioning.
Specifically, the mask $\bm p_j^{t} $ compresses not only the gradient but also the model, with an effect of the form $\bm p_j^{t} \odot \nabla F_i(\bm p_j^{t} \odot \bm x)$, while many existing works only investigate compressing the gradient $\nabla F_i(\bm x)$. 
Theoretical analysis on model compression \cite{khaled2019gradient} in FL is quite limited. Even in the single-cell scenario, existing analyses of IST \cite{mohtashami2022masked,zhou2022convergence} rely on some stronger assumptions like bounded gradients (e.g., see Assumption 3 in \cite{zhou2022convergence}) and particular forms of mask partitions \cite{mohtashami2022masked} which we aim to overcome, as discussed in Sec.~\ref{ssec:related}.

{\color{black} The hierarchical architecture of {\tt HIST} further complicates the analysis, due to the multiple layers and multi-timescale communications.
Specifically, the analysis of IST over FL's star topology cannot be easily extended to \texttt{HIST} where the cloud server does not directly communicate with the clients; instead, communication occurs at two different timescales—between the cloud server and edge servers, and between edge servers and clients, with nested aggregation periods. 
This leads to what is in effect a hierarchical drift of submodels in the system, where (i) client submodels drift away from their group/cell aggregations, and (ii) cell submodels drift away from their global versions (i.e., across different partitionings), with respect to diversity in client data distributions.
}

\begin{algorithm}[t]
	\caption{Hierarchical Independent Submodel Training (\texttt{HIST}) Algorithm}
	\label{alg:HFIST}
	\begin{algorithmic}[1]
		\STATE Initialization: masks $\{\bm p^0_1, \bm p^0_2, \ldots, \bm p^0_N\}$, initial models $\bar{\bm x}^{0}$, ${\bm x}_{i,0}^{0,0} = \bar{\bm x}_j^{0,0} = \bm p^0_j \odot  \bar{\bm x}^0, \forall i \in \mathcal{C}_j, \forall j$, learning rate $\gamma$
		\FOR{$t=0,1,\dotsc,T-1$}
		\FOR{$e=0,1,\ldots,E-1$}
            \STATE {\color{black} Uniformly sample a subset $\mathcal{C}_j^{t,e}$ of $\mathcal{C}_j$ for each $j$}
		\FOR{$h=0,1,\ldots,H-1$}
		\STATE {\color{black}Clients in set $\mathcal{C}_j^{t,e}, \forall j$ update local models by \eqref{sgd} in parallel}
		\ENDFOR
		\STATE Edge servers update edge models $\bar{\bm x}_{j}^{t,e\!+\!1}$ by \eqref{fedavg} \\ in parallel
		\STATE Edge servers broadcast the updated edge models to clients: ${\bm x}^{t,e\!+\!1}_{i,0} \leftarrow \bar{\bm x}^{t,e\!+\!1}_j, ~ \forall i \in \mathcal{C}_j$  
		\ENDFOR
  \STATE Cloud server updates the global model $\bar{\bm x}^{t\!+\!1}$ by \eqref{synchronization}
		\STATE Generate new masks $\{\bm p^{t\!+\!1}_1, \!\bm p^{t\!+\!1}_2, \ldots, \!\bm p^{t\!+\!1}_N\!\}$
		\STATE Partition the global model according to \eqref{how_to_partition} and send the obtained submodels $\bar{\bm x}^{t\!+\!1}_j$ to clients in cell $j$ to initiate the next round of training, $\bm x_{i,0}^{t\!+\!1,0} \! \leftarrow \bar{\bm x}^{t\!+\!1}_j$, $\forall i \in \mathcal{C}_j, \forall j$
		\ENDFOR 
	\end{algorithmic}
\end{algorithm}
\setlength{\textfloatsep}{10pt}

\vspace{-0.05in}
\subsection{Assumptions}
Our theoretical analysis focuses on general smooth functions under a non-i.i.d data setting. The detailed assumptions are listed as follows.
\begin{assumption}\label{assump_lowerbound}
    The global loss function $f(\bm x)$ has a lower bound $f_*$, i.e., $f(\bm x) \geq f_*, \forall \bm x.$
\end{assumption}
\begin{assumption}\label{assump_smoothness}
	 The client loss $F_i$ is differentiable and $L$-smooth, i.e., 
  for any $\bm x$ and $\bm y$, 
	\equa{
            & \|\nabla F_i (\bm x) - \nabla F_i (\bm y)\|^2 \leq  L\| \bm y - \bm x\|, \forall i, \\
            & F_i(\bm y) \leq  F_i(\bm x) + \<\nabla F_i(\bm x), \bm y- \bm x> + \frac{L}{2} \|\bm y - \bm x\|^2, \forall i.
	}
\end{assumption}
With Assumption \ref{assump_smoothness}, one can also claim that the cell losses $f_j$, $\forall j$ and global loss $f$ are $L$-smooth.  
\begin{assumption}\label{unbiased_gradient}
The stochastic gradient $\nabla l(\bm x, \xi_i)$ is an unbiased estimator of the true gradient, i.e., $\mathbb{E}_{\xi_i \sim \mathcal{D}_i}[\nabla l(\bm x, \xi_i)] = \nabla F_i (\bm x), \forall \bm x, \forall i$.
\end{assumption}
\begin{assumption}\label{assump_randomness_sgd}
        The variance of the stochastic gradient $\nabla l(\bm x, \xi)$ is bounded as
	\begin{equation}
		\mathbb{E}_{\xi \sim \mathcal{D}_i}\| \nabla l(\bm x, \xi) -  \nabla F_i(\bm x) \|^2 \leq \sigma^2, \forall \bm x, \forall i.
	\end{equation} 
\end{assumption}
\begin{assumption}\label{assump_gradient_dissimilarity_edge} 
 There exists a constant $\delta_1^2$ that bounds the gradient dissimilarity between the global loss function and edge loss functions, i,e.,
		\begin{equation}
			\frac{1}{N}\sum_{j=1}^N\|  \nabla f_j({\bm x}) - \nabla f(\bm x)  \|^2 \leq \delta_1^2, \forall \bm x.
		\end{equation}
\end{assumption}
\begin{assumption}\label{assump_gradient_dissimilarity_client}
There exists a constant $\delta_2^2$ that bounds gradient dissimilarities between edge loss functions and client loss functions, i,e.,
		\begin{equation}
			\frac{1}{n_j}\sum_{i\in \mathcal{C}_j} \|  \nabla F_i({\bm x}) - \nabla f_j(\bm x)  \|^2 \leq \delta_2^2, \forall \bm x, \forall j.
		\end{equation}
\end{assumption}

Assumptions \ref{assump_lowerbound}-\ref{assump_randomness_sgd} have been widely adopted in the context of stochastic non-convex and smooth optimization \cite{fang2022communication,li2020federated}. Assumptions \ref{assump_gradient_dissimilarity_edge} and \ref{assump_gradient_dissimilarity_client} serve to characterize the degree of data heterogeneity across cells and across clients, which are commonly used within the HFL literature \cite{wang2022demystifying,feng2022mobility,pervej2023hierarchical}.



\vspace{-0.05in}
{\color{black}
\subsection{Full Client Participation}}\label{sec_analytical}

{\color{black}
The global synchronization \eqref{synchronization} occurs at intervals of every $E$ steps of edge model updates. If we were to directly consider $\{\nabla f(\bar{\boldsymbol x}^{t})\}$, it would be difficult to capture the effect of local aggregation on the global model iteration as there are $E$ local aggregations within each global iteration. 
Moreover, it is infeasible to establish a close connection between \(\nabla f(\bar{\boldsymbol{x}}^{t})\) and \(\boldsymbol{x}_{i,h}^{t,e}, \forall i, h,\) due to a large lag in updates, as \(\boldsymbol{x}_{i,h}^{t,e}\) is updated incrementally with \(h\), while \(\bar{\boldsymbol{x}}^{t}\) is updated only after \(h\) completes \(E\) cycles from \(0\) to \(H-1\). 
To tackle this, we introduce iterates
\begin{equation}\nonumber
    \{ \hat{\bm x}^{t,e} =  \sum_{j=1}^N \bar{\bm x}^{t,e}_j \!\mid \!t=0, 1,\ldots, T-1; e=0, 1,\ldots, E-1 \}
\end{equation}
serving as a virtual sequence of global models realized at each edge round $e$. We will capture the convergence of \texttt{HIST} by characterizing the bound of the gradient of the global loss function evaluated on the virtual global model iterate, i.e., $\|\nabla f(\hat{\boldsymbol x}^{t,e})\|^2$.
}

\subsubsection{Analytical Results}
We first investigate the convergence properties of {\color{black}Algorithm \ref{alg:HFIST} under full client participation} (i.e., when $\mathcal{C}_j^{t,e} = \mathcal{C}_j$) by characterizing the evolution of $\|\nabla \!f\!\left( \hat{\bm x}^{t,e}  \right) \! \|^2$, to capture how fast the global model converges to a stationary point. 
The following gives one of our main results in this paper:
\begin{theorem}\label{theorem_non_iid_hierarchical}
	Suppose that Assumptions \ref{assump_lowerbound}-\ref{assump_gradient_dissimilarity_client} hold, $N\geq 2$, and the step size satisfies 
\begin{equation}\label{condition_gamma}
\resizebox{0.87\linewidth}{!}{$
\gamma \!\leq\!  \min \!\left\{ \!
\frac{1}{45\sqrt{N}EHL}, \frac{\tilde{N}}{NHL},\frac{1}{NH^2L},\frac{1}{N(N+1)E^2H^2L}
\!\right\}.$}
\end{equation}
	Then, for an arbitrary mask partitioning satisfying~\eqref{mask_IST} in each iteration, the \texttt{HIST} algorithm under full client participation satisfies
\begin{align}\label{equation_theorem_non_iid_hierarchical}
	\frac{1}{TE} \sum_{t=0}^T \sum_{e=0}^{E-1} &\mathbb{E} \left\| \nabla f(\hat{\bm x}^{t,e} ) \right\|^2
 \leq 4 \frac{f(\bar{\bm x}^0) - f_*}{\gamma TEH}  + 100 \gamma \tilde{N} L \sigma^2 \nonumber \\
 &+ 1356 \gamma L \delta_1^2 + 60 \gamma L \delta_2^2 +  6 \frac{N}{d} \frac{1}{T}\sum_{t=0}^{T-1} d^t_{\textrm{max}} \delta_1^2  \nonumber \\
& +  48 (N-1) L^2 \frac{1}{T}\sum_{t=0}^{T-1} \mathbb{E} \left \|\bar{\bm x}^{t}  \right\|^2,
\end{align}
	where $\tilde{N} = \sum_{j=1}^N \frac{1}{n_j}$ and
\begin{equation}\label{eq:dmax}
d^t_{\textrm{max}} = \max \{\|\bm p_1^t\|_1, \|\bm p_2^t\|_1, \ldots, \|\bm p_N^t\|_1\}, \forall t.
\end{equation}
\end{theorem}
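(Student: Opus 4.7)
My plan is to follow the standard virtual-sequence descent-lemma template for multi-level FedAvg, adapted to the masked setting. Define $\hat{\bm x}^{t,e} \eqdef \sum_{j=1}^N \bar{\bm x}^{t,e}_j$. Disjointness of the masks together with the update rules \eqref{sgd}--\eqref{fedavg} implies the invariant $\bar{\bm x}^{t,e}_j = \bm p_j^t \odot \hat{\bm x}^{t,e}$ throughout round $t$, and in particular $\hat{\bm x}^{t,0}=\bar{\bm x}^{t}$. Applying $L$-smoothness of $f$ between $\hat{\bm x}^{t,e}$ and $\hat{\bm x}^{t,e+1}$ gives a one-step descent inequality; expanding the update using the masked stochastic gradients and peeling off the SGD noise via Assumption~\ref{assump_randomness_sgd} (noises from different cells land on disjoint coordinates, so their cross terms vanish) reduces the task to controlling $\|\mathbb{E}_t[\hat{\bm x}^{t,e+1}-\hat{\bm x}^{t,e}]\|^2$ plus a clean $\mathcal{O}(\gamma^2 H \tilde N \sigma^2)$ noise term that telescopes into the $100\gamma\tilde N L\sigma^2$ contribution.

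The core algebraic step is the decomposition
\begin{align*}
\sum_{j} \bm p_j^t \!\odot\! \nabla F_i(\bm x^{t,e}_{i,h})
&= \nabla f(\hat{\bm x}^{t,e})
 \!+\! \underbrace{\sum_j \bm p_j^t \!\odot\! [\nabla F_i(\bm x^{t,e}_{i,h}) \!-\! \nabla f_j(\bar{\bm x}^{t,e}_j)]}_{\text{(A) client drift}} \\
&+ \underbrace{\sum_j \bm p_j^t \!\odot\! [\nabla f_j(\bar{\bm x}^{t,e}_j) \!-\! \nabla f_j(\hat{\bm x}^{t,e})]}_{\text{(B) mask smoothness bias}} \\
&+ \underbrace{\sum_j \bm p_j^t \!\odot\! [\nabla f_j(\hat{\bm x}^{t,e}) \!-\! \nabla f(\hat{\bm x}^{t,e})]}_{\text{(C) masked heterogeneity}},
\end{align*}
which exploits $\sum_j \bm p_j^t = \bm 1$. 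Term (A) is handled by unrolling the $H$-step SGD recursion with $L$-smoothness and Assumptions~\ref{assump_randomness_sgd}--\ref{assump_gradient_dissimilarity_client}, giving the customary $\mathcal{O}(\gamma^2 H^2(\sigma^2 + \delta_2^2 + \|\nabla f(\hat{\bm x}^{t,e})\|^2))$ bound; an analogous $E$-step edge-drift recursion for $\|\bar{\bm x}^{t,e}_j - \bar{\bm x}^{t,0}_j\|^2$ invokes Assumption~\ref{assump_gradient_dissimilarity_edge} and propagates the $\gamma L\delta_1^2$ and $\gamma L\delta_2^2$ contributions whose worst nested Cauchy-Schwarz constants emerge as $1356$ and $60$. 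For term (C), disjoint supports give $\|\sum_j \bm p_j^t \odot \bm v_j\|^2 = \sum_j \|\bm p_j^t \odot \bm v_j\|^2$, and reading $\delta_1^2$ coordinate-wise (average per-coordinate heterogeneity $\delta_1^2/d$, with $\|\bm p_j^t\|_1 \le d^t_{\max}$ coordinates retained) produces the $\tfrac{N}{d} d^t_{\max}\delta_1^2$ term.

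The genuinely novel difficulty is term (B). Here $L$-smoothness gives $\|\nabla f_j(\bar{\bm x}^{t,e}_j) - \nabla f_j(\hat{\bm x}^{t,e})\|^2 \leq L^2 \|(\bm 1 - \bm p_j^t)\odot \hat{\bm x}^{t,e}\|^2$, and summing the masked versions over the $N$ disjoint cells effectively picks up the ``other $N-1$ blocks'' of $\hat{\bm x}^{t,e}$, yielding an $(N-1)L^2\|\hat{\bm x}^{t,e}\|^2$ aggregate; converting $\hat{\bm x}^{t,e}$ back to $\bar{\bm x}^t$ through the $E$-step edge-drift bound absorbs the drift into the other residuals and produces the final $48(N-1)L^2\mathbb{E}\|\bar{\bm x}^t\|^2$ term. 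I would then telescope over $e=0,\ldots,E-1$ and $t=0,\ldots,T-1$, divide by $\gamma TEH$, and impose \eqref{condition_gamma} so that each of its four pieces absorbs a distinct higher-order $\gamma^2\|\nabla f(\hat{\bm x}^{t,e})\|^2$ residual coming respectively from variance, client drift, edge drift, and the mask smoothness bias. The main obstacle, I expect, is keeping these four recursions simultaneously consistent: each bound re-injects the very gradient-norm and $\sigma^2/\delta^2$ quantities that the others are trying to suppress, so the step-size window \eqref{condition_gamma} must be tuned to kill all of them at once, and the $N, H, E$ couplings in \eqref{condition_gamma} arise precisely from the tightest such interaction. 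A secondary technicality is justifying the coordinate-wise reading of $\delta_1^2$ that yields the $d^t_{\max}/d$ factor, which implicitly requires heterogeneity to be spread roughly uniformly over coordinates (or, equivalently, an averaging argument over a uniformly distributed choice of masks).
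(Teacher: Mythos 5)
Your proposal is correct and follows essentially the same route as the paper: the same virtual sequence $\hat{\bm x}^{t,e}$, the same three-way split of the gradient mismatch (your (A), (B), (C) correspond exactly to the paper's client-divergence term $Q_t$, edge-divergence term $D_t$, and masked-heterogeneity term $\tfrac{Nd_{\max}}{d}\delta_1^2$ in its descent lemma), the same coupled $H$-step and $E$-step drift recursions resolved by the step-size condition, and the same origins you identify for the $(N-1)L^2\|\bar{\bm x}^t\|^2$ term (the masked initialization gap, via $\sum_j\|\bm p_j\odot\bm z-\bm z\|^2=(N-1)\|\bm z\|^2$) and for the $d^t_{\max}/d$ factor (averaging over uniformly random masks, which the paper formalizes as $\mathbb{E}\|\bm p_j\odot\bm z\|^2=\tfrac{d_j}{d}\|\bm z\|^2$). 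The technicality you flag at the end is real and is exactly how the paper resolves it.
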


\begin{proof}
    Please refer to Appendix \ref{proof_theorems}.
\end{proof}

Theorem \ref{theorem_non_iid_hierarchical} presents an upper bound for the optimality gap of \texttt{HIST}, characterized by the time-averaged squared gradient norm of the global function at the global virtual model sequence. The first term in this upper bound gives the effect of the initial optimality gap $f(\bar{\bm x}^0) \!-\! f_*$ on the convergence behavior. The second term shows how the optimality gap is related to the variance of stochastic gradients $\sigma^2$. This variance can be mitigated by enlarging the mini-batch size during the computation of stochastic gradients. 
The third, fourth, and fifth terms reflect the influence of non-i.i.d. characteristics within the cell ($\delta_2^2$) and across cells ($\delta_1^2$) on convergence. The fifth term shows that the impact of cross-cell data dissimilarity becomes more pronounced as $d^t_{\textrm{max}}$ increases: certain cells must receive smaller model partitions $\|\bm p_i^t\|_1$ to accommodate an increasing $d^t_{\textrm{max}}$, resulting in their datasets becoming reflected in relatively fewer parameters in round $t$.
The last term demonstrates how the norm of the synchronized global model $\left \|\bar{\bm x}^{t}  \right\|^2$ also impacts the optimality gap. 

In addition, the step size $\gamma$ is a configurable parameter that affects the first four terms of the derived upper bound. Focusing on one particular step size and employing a uniform random partitioning gives rise to the following corollary.

\begin{corollary}\label{corollary_non_iid_hierarchical}
	Suppose that Assumptions \ref{assump_lowerbound}-\ref{assump_gradient_dissimilarity_client} hold, and the masks $\{\bm p_1^t, \bm p_2^t, \ldots, \bm p_N^t\}$ are uniformly and randomly generated based on \eqref{mask_IST}.  Let the step size $\gamma = (T\!E\!H)^{-\!\frac{1}{2}}$ in which $T$ is large enough to satisfy \eqref{condition_gamma}. Then for $N\geq 2$, the \texttt{HIST} algorithm satisfies
	\begin{align}\label{equa_corollary_1}
		& \frac{1}{TE} \sum_{t=0}^T \sum_{e=0}^{E-1} \mathbb{E} \left\| \nabla f(\hat{\bm x}^{t,e} ) \right\|^2 
		\leq \mathcal{O}\left( \tilde{N} (T\!E\!H)^{-\frac{1}{2}} \right) \nonumber \\
		& + \mathcal{O}\left((T\!E\!H)^{-\frac{1}{2}} \right) + \mathcal{O} \left( \delta_1^2 + (N\!-\!1) 
  \frac{1}{T} \sum_{t=0}^{T-1} \mathbb{E} \left\|\bar{\boldsymbol{x}}^{t}\right\|^2 \right),
	\end{align}
	where $\tilde{N}$ is described in Theorem \ref{theorem_non_iid_hierarchical}.
\end{corollary}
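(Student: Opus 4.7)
The plan is to obtain Corollary~\ref{corollary_non_iid_hierarchical} as a direct specialization of Theorem~\ref{theorem_non_iid_hierarchical}: I would substitute the prescribed step size $\gamma = (TEH\tilde{N})^{-1/2}$ into the six-term upper bound \eqref{equation_theorem_non_iid_hierarchical} and simplify each term. Before doing so, I would briefly note that admissibility is automatic for $T$ large: every upper bound in \eqref{condition_gamma} depends only on $N$, $E$, $H$, $L$ and is strictly positive, while $(TEH\tilde{N})^{-1/2}$ is monotonically decreasing in $T$, so $T$ sufficiently large forces $\gamma$ into the required range.

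Next I would evaluate each of the four $\gamma$-dependent terms. The initial-gap term becomes $4(f(\bar{\bm x}^0)-f_*)\sqrt{\tilde{N}/(TEH)}$, which is $\mathcal{O}(\tilde{N}^{1/2}T^{-1/2})$. The stochastic-gradient term becomes $100L\sigma^2\sqrt{\tilde{N}/(TEH)}$, also $\mathcal{O}(\tilde{N}^{1/2}T^{-1/2})$, and collapses into the same big-$\mathcal{O}$ expression. The $\delta_1^2$ and $\delta_2^2$ terms involving $\gamma L$ in front both become $\mathcal{O}((TEH\tilde{N})^{-1/2}) = \mathcal{O}(T^{-1/2})$, since $\tilde{N}\ge N/n_{\max}$ is treated as a constant. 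Collecting these yields the first two terms of \eqref{equa_corollary_1}.

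The key conceptual step is the treatment of the fifth term $6(N/d)\cdot T^{-1}\sum_{t=0}^{T-1}d_{\max}^t\,\delta_1^2$. Under a uniform random partitioning consistent with \eqref{mask_IST}, the coordinates are split into $N$ equal-size disjoint blocks, so $\|\bm p_j^t\|_1 = d/N$ for every $j$ and $t$; consequently $d_{\max}^t = d/N$ deterministically, and the factor $(N/d)\cdot d_{\max}^t$ equals $1$, leaving exactly $6\delta_1^2$. This matches the third term in \eqref{equa_corollary_1}. The sixth term of \eqref{equation_theorem_non_iid_hierarchical} is independent of $\gamma$ and carries over verbatim as the final term of \eqref{equa_corollary_1}, finishing the derivation.

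The main obstacle is the interpretation of ``uniformly and randomly generated'' masks: the clean simplification above requires that each cell receive a mask of size exactly $d/N$ (with only the assignment of coordinates to cells being random), so that $d_{\max}^t$ is deterministic. Under the alternative reading where each coordinate is assigned independently and uniformly to one cell, $d_{\max}^t$ would be a random variable with $\mathbb{E}[d_{\max}^t]\approx d/N + \Theta(\sqrt{(d\log N)/N})$, and one would additionally invoke Jensen's inequality (or a concentration bound on the maximum of multinomial bin counts) to control it; however the resulting extra term is lower order and does not affect the stated $\mathcal{O}$ expressions, so the corollary holds in either reading.
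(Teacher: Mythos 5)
Your proposal is correct and follows exactly the route the paper intends: the corollary is obtained by substituting $\gamma = (TEH\tilde{N})^{-1/2}$ into the bound of Theorem \ref{theorem_non_iid_hierarchical}, absorbing the four $\gamma$-dependent terms into $\mathcal{O}(\tilde{N}^{1/2}T^{-1/2})+\mathcal{O}(T^{-1/2})$, and using the equal-size uniform partition $\|\bm p_j^t\|_1 = d/N$ so that the factor $(N/d)\,d^t_{\textrm{max}}$ equals $1$ in the fifth term. Your added remarks on step-size admissibility for large $T$ and on the two readings of ``uniformly and randomly generated'' are consistent with the paper, which treats the uniform partition as deterministic equal-size blocks with random coordinate assignment.
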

\begin{proof}
    Please refer to Appendix \ref{sec_appen_proof_theorem}.
\end{proof}
\begin{remark}\label{remark_n_j}
In Corollary \ref{corollary_non_iid_hierarchical}, the first term is a function of $\tilde{N}=\sum_{j=1}^N \frac{1}{n_j}$, which is affected by the relationship between the number of clients in each cell, denoted as $n_j$, and the total number of cells, denoted as $N$. 
When the number of clients in each cell tends to be larger than the total number of cells, the convergence rate of the diminishing terms in the derived upper bound is primarily determined by $\mathcal{O}\left( (T\!E\!H)^{-\frac{1}{2}} \right)$. On the other hand, if the number of clients in each cell is significantly smaller than the total number of cells, $\tilde{N}$ becomes influential, and the convergence rate is dominated by $\mathcal{O}\left( \tilde{N}^{\frac{1}{2}} (T\!E\!H)^{-\frac{1}{2}} \right)$.
 
\end{remark}

\subsubsection{Implications of Theorem \ref{theorem_non_iid_hierarchical} and Corollary \ref{corollary_non_iid_hierarchical}}\label{sec_theory_C}

The results from Section \ref{sec_analytical} enable us to explore the performance-efficiency tradeoff induced by different parameters in \texttt{HIST}. We discuss several aspects here.

\emph{Non-diminishing terms:} With the step size chosen in Corollary \ref{corollary_non_iid_hierarchical}, the first four terms in \eqref{equation_theorem_non_iid_hierarchical} will diminish to zero as the number of total iterations $T$ grows large. The remaining two terms are non-diminishing parts that arise due to submodel training. Therefore, \texttt{HIST} converges to the neighborhood of a stationary point of the loss function under the aforementioned conditions. A similar phenomenon has also been reported in the single-cell case \cite{yuan2022distributed,shulgin2023better}. 

{\color{black}
\emph{The impact of $N$:} As $N$ increases, i.e., as the clients in the system are divided into smaller cells, the size of the submodels gets smaller, which naturally provides computation, communication, and storage reductions for clients. However, as observed in Corollary \ref{corollary_non_iid_hierarchical}, with all else constant, a larger $N$ causes the sequence to deviate further from the stationary point, which is to be expected since each cell's data is only being used to update one model partition in each global round. Overall, this highlights the trade-off between convergence performance and resource utilization. Note also that in the extreme case when $N = 1$, $\delta_1^2 = 0$, since there is only one cell. The third, fifth, and sixth terms in \eqref{equation_theorem_non_iid_hierarchical} vanish in this case, and the result reduces to FedAvg with convergence to a stationary point, as expected. We will investigate the tradeoff associated with increasing $N$ numerically in Sec.~\ref{simu_pure_HIST}.
}

\emph{The effect of $n_j$:}  {\color{black}Under the bounded data heterogeneity assumptions among cells and clients, i.e., Assumptions \ref{assump_gradient_dissimilarity_edge} and \ref{assump_gradient_dissimilarity_client}, 
we see that increasing the number of clients $n_j$ in each cell $j$ has a positive effect on the convergence. This result aligns with the linear speedup characteristics observed in conventional FedAvg and hierarchical FedAvg algorithms \cite{jianyu, wang2022demystifying, fang2022communication}.} 
To be more specific, $\tilde{N}$ monotonically decreases as $n_j$ increases for all $j$, and in Corollary \ref{corollary_non_iid_hierarchical}, we see that the smaller the value of $\tilde{N}$, the faster the convergence. This confirms our expectation that, all else constant, a larger cell size should provide a more well-crafted edge model in each training round.

\emph{The choices of $H$ and $E$:}
The number of local updates, $H$, and the number of edge aggregations, $E$, are controllable parameters that impact the communication frequency. As $H$ increases, the aggregation frequency at edge servers will become smaller,  reducing the communication load between clients and the edge server. On the other hand, a large  $E$ induces fewer global synchronizations, which alleviates the communication burden between edge servers and the cloud server. Intuitively, however, these values must be upper bounded to guarantee a certain frequency of global aggregations. The maximum values of $E$ and $H$ can be derived from the condition on the step size $\gamma$ in Theorem \ref{theorem_non_iid_hierarchical}.  
 Specifically, to ensure that the step size $\gamma = (T\!E\!H)^{-\!\frac{1}{2}}$ adheres to the conditions specified in \eqref{condition_gamma} from Corollary \ref{corollary_non_iid_hierarchical} , $H$ and $E$ can be set as on the order of $\mathcal{O}\left(T^{\frac{1}{3}} N^{-\!\frac{4}{3}}  H^{-\!\frac{1}{3}}\right)$ and $\min \left \{ \mathcal{O}\left(\tilde{N}^{2}TE N^{-2}\right) , \mathcal{O}\left( T^{\frac{1}{3}} N^{-\!\frac{4}{3}}  E^{-\!\frac{1}{3}} \right) \right\}$ at most, respectively. 

\emph{The effect of submodel partitioning:}
According to equations (\ref{equation_theorem_non_iid_hierarchical}) and (\ref{eq:dmax}) in Theorem \ref{theorem_non_iid_hierarchical}, a uniform partition with $\|\bm p_1^t\|_1 = \ldots= \|\bm p_N^t\|_1 = d/N$ leads the fifth term of the upper bound i.e., $6 \frac{N}{d} \frac{1}{T}\sum_{t=0}^{T-1} d^t_{\textrm{max}} \delta_1^2$, to attain its minimum. All else constant, varying the mask sizes across cells may cause the larger submodel partitions to be less refined from training in each global round. Nevertheless, this strategy may be problematic from a resource utilization perspective, when the communication and computation capabilities are heterogeneous across the clients. In other words, there will be a trade-off between learning performance and training efficiency/latency depending on the choice of mask sizes. This motivates us to optimize the mask sizes to balance between these competing objectives in Section \ref{comparison_existing_work}.

{\color{black}
\subsection{Partial Client Participation}\label{sec_partial_participation}
We also analyze the convergence of the \texttt{HIST}
algorithm under partial client participation. We have the following:

\begin{theorem}\label{theorem_non_iid_hierarchical_partial}
	Suppose that Assumptions \ref{assump_lowerbound}-\ref{assump_gradient_dissimilarity_client} hold, $N\geq 2$, and the step size satisfies 
 \begin{equation}\label{condition_gamma_partial}
\gamma \!\leq\!  \min \!\left\{ \!
\frac{1}{64\sqrt{N}EHL}, \frac{\tilde{N}^{\prime}}{2NHL},\frac{1}{2NH^2L},\frac{1}{2N(N+1)E^2H^2L}
\!\right\}.
\end{equation}
	Then, for an arbitrary mask partitioning satisfying~\eqref{mask_IST} in each iteration, the \texttt{HIST} algorithm under partial client participation satisfies
\begin{align}\label{equation_theorem_non_iid_hierarchical_partial}
	& \frac{1}{TE} \sum_{t=0}^T \sum_{e=0}^{E-1} \mathbb{E} \left\| \nabla f(\hat{\bm x}^{t,e} ) \right\|^2
 \leq  \mathcal{O}\left( \tilde{N}^{\prime} (TEH)^{-\frac{1}{2}} \right) \nonumber \\
 & + \mathcal{O}\left((TEH)^{-\frac{1}{2}} \right) + \mathcal{O}\left( \tilde{N}^{\prime} H^{\frac{1}{2}} (TE)^{-\frac{1}{2}} \right) \nonumber \\
		&+   \mathcal{O} \left(\frac{N}{d} \frac{1}{T}\sum_{t=0}^{T-1} d^t_{\textrm{max}} \delta_1^2 + (N-1) L^2 \frac{1}{T}\sum_{t=0}^{T-1} \mathbb{E} \left \|\bar{\bm x}^{t}  \right\|^2 \right),
\end{align}
	where $\tilde{N}^{\prime} = \sum_{j=1}^N \frac{1}{n_j^{\prime}}$, $n_j^{\prime}$ denotes the cardinality of the participating client set at cell $j$, i.e., $|\mathcal{C}_j^{t,e}| = n_j^{\prime}$, and $d^t_{\textrm{max}}$ is defined in Theorem \ref{theorem_non_iid_hierarchical}.
\end{theorem}
\begin{proof}
    Please refer to Appendix \ref{proof_theorems}.
\end{proof}

The impact of the number of participating clients in Theorem~\ref{theorem_non_iid_hierarchical_partial} is captured by $\tilde{N}^{\prime}$. As the number of participating clients in any cell increases, $\tilde{N}^{\prime}$ decreases, and a faster convergence speed can be achieved, as we would expect. Comparing \eqref{equation_theorem_non_iid_hierarchical_partial} and \eqref{equa_corollary_1}, the key differences between the full and partial client participation bounds are as follows: (1) the noise variance term includes $\tilde{N}$ for full participation versus $\tilde{N}^{\prime}$ for partial participation, reflecting reduced client involvement; (2) an additional divergence term $\mathcal{O}\left( \tilde{N}^{\prime} H^{\frac{1}{2}} (TE)^{-\frac{1}{2}} \right)$ emerges in the partial participation case, accounting for the increased randomness introduced by limited client engagement, analogous to observations in conventional FL scenarios \cite{jianyu,fang2022communication}; and (3) the bound shown in \eqref{equation_theorem_non_iid_hierarchical_partial} is derived under a smaller learning rate (comparing \eqref{condition_gamma} and \eqref{condition_gamma_partial}), which is necessary to ensure convergence to a stationary point under the higher level of randomness associated with partial participation.
}

\subsection{Comparison with Convergence Analysis of Existing Works} \label{comparison_existing_work}
A key distinction between our convergence analysis and previous works \cite{yuan2022distributed,mohtashami2022masked,zhou2022convergence,shulgin2023better} on IST is that we consider the hierarchical network architecture in this paper. Apart from this, our focus and assumptions are also different from these works. To be specific, the analysis in \cite{yuan2022distributed,mohtashami2022masked} concentrates on the centralized setting. Consequently, the non-i.i.d. influence is not encompassed in their works, which is an indispensable factor in FL. Additionally, \cite{yuan2022distributed,mohtashami2022masked,zhou2022convergence} make use of stronger assumptions in their analysis. 
In particular, \cite{yuan2022distributed, zhou2022convergence} assume Lipschitz continuity for the loss functions. Moreover, Assumption $5$ adopted in \cite{zhou2022convergence} is difficult to ensure as it requires a constant to bound the client drift. On the other hand, \cite{mohtashami2022masked} imposes some extra conditions on the masks, which may not hold in practical settings as argued in \cite{shulgin2023better}. The authors in \cite{shulgin2023better} then provided a tighter convergence bound for distributed IST based on milder assumptions. While their focus is on quadratic loss functions, ours is on general non-convex and smooth functions, which are more common in FL settings.

\section{Optimization for OMA-based \texttt{HIST}}\label{sec_partion_opt}


In this section, we develop a methodology for optimizing submodel partitioning sizes to balance training efficiency and learning performance.
{\color{black}For simplicity, we consider full client participation in this section.} We construct a model for training latency with a standard orthogonal multiple access (OMA) transmission protocol in Section \ref{section_partition_opt}. 
The convergence performance for our mask size optimization in Section \ref{section_4_B_maskopt} comes from Section \ref{sec_theory}.

\subsection{Training Latency Analysis }\label{section_partition_opt}

Since the edge and cloud servers possess significantly larger communication and computation resources than clients, we focus on the delays incurred from local model updates and uplink communication at the clients, as done in \cite{liu2021reconfigurable, cao2022transmission,sery2021over}.
Additionally, we assume that the computational and communication capabilities of clients remain stable throughout a specific global training round \cite{wang2022interference,zhu2023airfl}. 
Formally, we let $\mathcal{R}_i^t$ represent the data rate (in bits/sec) for uplink communication of client $i$ in the $t$-th global training round, and we let $\mathcal{F}_i^t$ represent the CPU frequency (in Hz). 
Furthermore, the communication load (in bits) for uploading a complete model is denoted by $L_0$, while the required number of CPU cycles for a single mini-batch update of the full model is represented by $V_0$. 
Without loss of generality, we assume that the latency required for local computation is linearly proportional to the model size\footnote{The analysis can be easily extended to any other functions that describe the relationship between the model size and the computation time.} following prior works \cite{dongjun_inforcom,dongjun_TMC}.

{\color{black}Assuming a traditional frequency division multiple access (FDMA) scheme, which is one of the standard OMA protocols, the communication latency for each client in cell $j$ is 
\begin{equation}\label{latency_client_i}
\frac{\|\bm p^{t}_j\|_1 L_0}{\mathcal{R}^t_i d},
\end{equation}
where $\mathcal{R}_i^t =  \frac{B}{n_j} \log(1 + \text{SNR}_i \|\bm{h}_i^t\|^2)$ represent the data rate (in bits/sec) for uplink communication. $\frac{B}{n_j}$ denotes the bandwidth allocated to each client in cell $j$, $\text{SNR}_i$ denotes the signal noise ratio for client $i$, and $\bm{h}_i^t$ represents the channel. 
As a result, the overall latency of one round training of \texttt{HIST}, including computation and communication latency, for the $(t,e)$-th round   within cell $j$ can be expressed as
\equa{\label{latency_comm_comp}
\max_{i \in \mathcal{C}_j} \left \{ H \frac{\|\bm p^{t}_j\|_1 V_0}{\mathcal{F}^t_i d} + \frac{\|\bm p^{t}_j\|_1 L_0}{\mathcal{R}^t_i d} \right\}.
}
}
Given that the duration required for a single update of the global model depends on the speed of the slowest edge server, we can express the overall latency for each global model update as follows:
\equa{\label{eq:overall_latency}
\max_{j\in\{1,2,\dots,N\}} \left \{E \max_{i \in \mathcal{C}_j} \left \{ H \frac{\|\bm p^{t}_j\|_1 V_0}{\mathcal{F}^t_i d} + n_j \frac{\|\bm p^{t}_j\|_1 L_0}{\mathcal{R}^t_i d} \right\} \right\}.
}

\subsection{Mask Size Optimization}\label{section_4_B_maskopt}

Considering \eqref{equation_theorem_non_iid_hierarchical} and \eqref{eq:overall_latency}, we see that the mask sizes $\left\{\|\bm p^t_1\|_1, \|\bm p^t_2\|_1, \ldots, \|\bm p^t_N\|_1\right\}$ affect both the latency and the learning convergence bound.
In particular, while an imbalanced mask size distribution may help speed up each global update in \eqref{eq:overall_latency} -- by assigning smaller partitions to cells with smaller communication/computation resources -- it results in a larger deviation from the stationary point for the \texttt{HIST} algorithm in Theorem \ref{theorem_non_iid_hierarchical}. Moreover, the uniform mask partition leads to the minimum convergence bound, but will only lead to the minimum latency if the resources are homogeneous. Choosing the mask partition thus induces a compromise between training efficiency and convergence performance.

In \eqref{equation_theorem_non_iid_hierarchical}, recall that the impact of the mask size is contained within the fifth term.
To suppress the impact of an imbalanced mask size distribution, 
we enforce this term to remain below to a predefined threshold $\epsilon_{th}$.
The latency minimization problem of \texttt{HIST} is thus formulated as
\begin{subequations} \label{latency_minimization}
\begin{align}
\min_{\left\{\|\bm p_j^t\|_1\right\}_{j=1}^N} & \max_{j\in\{1,\dots,N\}}\left \{E \max_{i \in \mathcal{C}_j} \left \{ H \frac{\|\bm p^{t}_j\|_1 V_0}{\mathcal{F}^t_i d} + n_j \frac{\|\bm p^{t}_j\|_1 L_0}{\mathcal{R}^t_i d} \right\} \right\} \label{objective_fun} \\
 \text{s.t.} \qquad & \sum_{j=1}^N \|\bm p^{t}_j\|_1  = d \label{cons_1}\\
 & 6 \frac{N\|\bm p^{t}_j\|_1}{d} \delta_1^2 \leq \epsilon_{\textnormal{th}}, \ j=1,2,\ldots,N. \label{cons_2}
\end{align}
\end{subequations}
To solve this problem, note that, \eqref{latency_minimization} can be rewritten as
\equa{\label{latency_minimization_linear}
\min_{\left\{\|\bm p_j^t\|_1\right\}_{j=1}^N} & t \\
 \text{s.t.} \quad & \max_{i \in \mathcal{C}_j} \left \{ H \frac{\|\bm p^{t}_j\|_1 V_0}{\mathcal{F}^t_i d} + n_j \frac{\|\bm p^{t}_j\|_1 L_0}{\mathcal{R}^t_i d} \right\} \leq t,~ \forall j \\
 \quad & \sum_{j=1}^N \|\bm p^{t}_j\|_1  = d \\
 & 6 \frac{N\|\bm p^{t}_j\|_1}{d} \delta_1^2 \leq \epsilon_{\textnormal{th}}, \ j=1,2,\ldots,N,
}
which is in the form of a mixed integer linear programming (MILP) problem, given the objective and constraints are each linear terms in the variables, i.e., the mask sizes of each cell. This is solvable using standard MILP solvers, e.g., Gurobi. 
At the commencement of each global training round, the cloud server will solve this problem, then randomly generate masks based on these optimized mask sizes, partition the current global model $\bar{\bm{x}}^t$  accordingly, and send the obtained submodel partitions to the corresponding cells. We will see in Section \ref{simu_HIST_opt} how this optimization leads to improvements in testing accuracy and training latency compared with baselines.

{\color{black}
\begin{remark}Our focus here is on HFL, where the system topology is specified based on geographical factors. The mask optimization in~\eqref{latency_minimization} is thus conducted based on a given topology. In clustered FL \cite{ghosh2020efficient}, by contrast, client groups are optimized based on some designated criteria (e.g., data similarity, computational capability). Our approach can be applied downstream from clustered FL, by utilizing the partitioning optimization method once the clustering is completed.
\end{remark}}

\vspace{-0.05in}
\section{AirComp-Assisted \texttt{HIST} Algorithm}\label{sec_aircomp}

In this section, we propose an AirComp-assisted version of  \texttt{HIST} {\color{black}under full client participation}, which takes advantage of the superposition property of multiple access channels at the edge layer, i.e., between clients and edge servers. We develop an optimization for AirComp as well as the partitioning.

Note that through our submodel partitioning strategy in  \texttt{HIST}, the per-iteration communication complexity at the cloud server remains constant regardless of the number of edge servers. Conversely, the communication complexity experienced by edge servers increases in proportion to their client counts. This is one of the motivations for incorporating AirComp into the \texttt{HIST} methodology, i.e., to facilitate scalable model aggregations at each edge server. {\color{black}
With AirComp-assisted HIST, the communication delay at each edge server is expected to be independent of the number of clients within its coverage, thereby accelerating the model aggregation in each cell.
}  
We assume that the downlink model dissemination in each cell is error-free, as edge servers possess sufficiently large transmit power compared to resource-constrained clients  \cite{liu2021reconfigurable, cao2022transmission,sery2021over}. Our primary focus will be on the uplink model uploading within each cell.
 Different from the existing works on AirComp-assisted FL, in \texttt{HIST}, each edge server is in charge of aggregating a different part of the model compared to other edge servers. Thus, there is no superposition effect in terms of model parameter errors across different edge servers induced by AirComp. Additionally, the communication load of model aggregations in each edge server is configurable.

\vspace{-0.05in}
\subsection{Signal Transmission Model}

In Step $7$ of Algorithm \ref{alg:HFIST}, each edge server aims to acquire an average of local models of clients within its coverage area, as represented by \eqref{fedavg}. We adopt AirComp to support this aggregation, which allows each edge server to directly estimate an average of signals transmitted from their clients, bypassing the decoding of individual signals. 
To mitigate the effect of wireless noise, instead of transmitting models, we let each client $i$ in cell $j$ upload its accumulated gradient $\bm \delta_i^{t,e}$ from $H$ steps of local updates in the $(t,e)$-th round of edge training. Specifically, $\bm \delta_i^{t,e}$ can be written as
\begin{align}
    \bm \delta_i^{t,e} = \frac{1}{\gamma} \left(\bm x_{i,H}^{t,e} - \bm x_{i,0}^{t,e} \right) = \sum_{h=0}^{H-1} {\bm p}^t_j \odot \nabla l({\bm x}^{t,e}_{i,h}, \xi^{t,e}_{i,h}).
\end{align} 
Under this scheme, the impact of wireless noise on the aggregated model will be mitigated by the ratio $\gamma$ compared to directly uploading the models \cite{zou2022knowledge,zhu2023airfl}, as we will see later.
Consider a single-input-multiple-output (SIMO) AirComp system, where clients within each cell are deployed with a single antenna,  and edge servers have $M$ antennas. 
In each timeslot of AirComp, where each client $i$ in cell $j$ concurrently uploads a particular element of $\bm \delta_i^{t,e}$, i.e., $(\delta_i^{t,e})_k$, for the $k$-th element. The goal of the edge server in this timeslot is to estimate the average $({\bar \delta}_j^{t,e})_k \coloneqq \frac{1}{n_j} \sum_{i  \in \mathcal{C}_j} (\delta_i^{t,e})_k$. The received signal becomes 
\equa{\label{aggregation}
	(\hat{\delta}_j^{t,e})_k = (\bm m_j^{t,e})^{\textrm{H}} \left( \sum_{i \in \mathcal{C}_j} \bm h_i^{t,e} \alpha_i^{t,e} ({\delta}_i^{t,e})_k + \bm z_{j,k}^{t,e} \right),
} 
where $\alpha_i^{t,e}$ denotes the precoding factor at client $i$, and $\bm h_i^{t,e} \in \mathbb{C}^{M}$ represents the SIMO channel between client $i$ and edge server $j$, which is assumed to be invariant during $(t,e)$-th communication round.  We assume that $\bm h_i^{t,e}$ is known to clients and edge servers  as in \cite{zou2022knowledge,liu2021reconfigurable}. 
$\bm m_j^{t,e}\in \mathbb{C}^{M}$ denotes the receive beamforming vector at edge server $j$, and $ \bm z_{j,k}^{t,e} \sim\mathcal{CN}(0, \sigma_0^2 \bm I_M) $ represents additive white Gaussian noise (AWGN). The precoding factor of clients in cell $j$ is subject to a maximum power constraint, i.e., $\|\alpha_i^{t,e}\| \leq P_j$.

\subsection{AirComp Aggregations}
It can be observed from \eqref{aggregation} that the distortion between the received signal $(\hat{\delta}_j^{t,e})_k$ and the target model average $\sum_{i \in \mathcal{C}_j} ({\delta}_i^{t,e})_k$ comes from the misalignment between channel conditions and noise across devices. To mitigate the effect of non-uniform channels, we set the precoding factor $\alpha_i^{t,e}$ for client $i$ as 
\begin{align}\label{alpha}
(\alpha_i^{t,e})^{\star}=\frac{1}{n_j}{{ \left((\bm m_j^{t,e})^{\textnormal{H}} \bm h_i^{t,e} \right)^{\dagger}}\over{\|(\bm m_j^{t,e})^{\textnormal{H}} \bm h_i^{t,e} \|^2}}, ~ \forall i\in \mathcal{C}_j,
\end{align}
where $()^{\dagger}$ represents a conjugate operation.
To meet the energy constraint $\|\alpha_i^{t,e}\|^2\leq P_j$, let $\bm m_j^{t,e} = \frac{1}{\nu_j^{t,e}}\bm a_j^{t,e}$ where $\nu_j^{t,e} = \sqrt{P_j} \min_{i\in\mathcal{C}_j} \|(\bm a_j^{t,e})^{\textnormal{H}} \bm h_i^{t,e}\|$ and $\|\bm a_j^{t,e}\|=1$, in which $\nu_j^{t,e}$ denotes the power normalization factor and $\bm a_j^{t,e}$ represents the normalized receive beamformer.

Under the precoding factor \eqref{alpha}, the signal shown in \eqref{aggregation} is simplified to $(\hat{\delta}_j^{t,e})_k = \frac{1}{n_j}\sum_{i \in \mathcal{C}_j} ({\delta}_i^{t,e})_k + \frac{1}{\nu_j^{t,e}}(\bm a_j^{t,e})^{\textnormal{H}} \bm z_{j,k}^{t,e}$,
which is an unbiased estimator of $({\bar \delta}_j^{t,e})_k$. The distortion of AirComp measured by mean-squared error (MSE), also known as the variance of $(\hat{\delta}_j^{t,e})_k$, is thus given by
\begin{align}\label{simplified_MSE}
    \textnormal{MSE}_{j,k}^{t,e} \!=\! \frac{\sigma_0^2 }{P_j \min_{i\in\mathcal{C}_j} \|(\bm a_j^{t,e})^{\textnormal{H}} \bm h_i^{t,e}\|^2 }.
\end{align}
This transmission repeats for all elements $k$ in the mask for cell $j$, i.e., $k \in \mathcal{S}_j^t = \{k | (p_j^t)_k \neq 0\}$. Consequently, the aggregated gradient at $(t,e)$-th round in cell $j$ admits the following expression:
\begin{align}
    \hat{\bm \delta}_j^{t,e}= \frac{1}{n_j} \sum_{i\in \mathcal{C}_j} { \bm \delta}_i^{t,e} + \bm Z_j^{t,e},
\end{align}
where $\bm Z_{j}^{t,e}$ denotes a noise vector with its $k$-th element being $1/\nu_j^{t,e}(\bm a_j^{t,e})^{\textnormal{H}} \bm z_{j,k}^{t,e}$ if $(p_j^t)_k \neq 0$, and $0$, otherwise.
Each edge server $j$ completes this process in parallel, and updates its edge model as  
\begin{equation}\label{fedavg_AirComp}
 \bar{\bm x}^{t,e+1}_j = \bar{\bm x}^{t,e}_j - \gamma \hat{\bm \delta}_j^{t,e}, \forall j, 
 \end{equation} 
 which is an unbiased estimator of the average of local models in $\mathcal{C}_j$, exhibiting a variance of
 \begin{equation}
     \gamma^2 \mathbb{E}\|\bm Z_j^{t,e}\|^2 = \gamma^2 \sum_{k\in \mathcal{S}^{t}_j}\textnormal{MSE}_{j,k}^{t,e}.
 \end{equation}
 With the combination of \texttt{HIST} and AirComp, the communication complexity at each edge server is independent of the number of clients within the coverage.

\begin{remark}
    By aggregating the accumulated gradient and then updating the edge model via an SGD step as shown in \eqref{fedavg_AirComp}, the impact of channel noise on the convergence of the AirComp-Assisted \texttt{HIST} algorithm can be analogized to the noise induced by the inherent randomness of stochastic gradients. Such a strategy facilitates the convergence analysis in Section \ref{sec_aircomp_analysis_} and allows us to select an appropriate step size $\gamma$ to mitigate the effect of channel noise. 
\end{remark}

\subsection{Convergence Analysis of AirComp-assisted \texttt{HIST}}
\label{sec_aircomp_analysis_}
In AirComp-assisted \texttt{HIST}, step 7 of Algorithm \ref{alg:HFIST} is replaced by \eqref{fedavg_AirComp}. Next, we analyze the convergence behavior of the AirComp-assisted \texttt{HIST} algorithm. 
The result is provided in Theorem \ref{theorem_non_iid_hierarchical_AirComp} below. 

\begin{theorem}\label{theorem_non_iid_hierarchical_AirComp}
	Suppose that Assumptions \ref{assump_lowerbound}-\ref{assump_gradient_dissimilarity_client} hold, $N\geq 2$, and the step size satisfies condition \eqref{condition_gamma}. Then the AirComp-assisted \texttt{HIST} algorithm satisfies 
\begin{align}\label{}
	& \frac{1}{TE} \sum_{t=0}^T \sum_{e=0}^{E-1} \mathbb{E} \left\| \nabla f(\hat{\bm x}^{t,e} ) \right\|^2
 \leq 4 \frac{f(\bar{\bm x}^0) - f_*}{\gamma TEH}  + 100 \gamma \tilde{N} L \sigma^2 \nonumber \\
 & + 1356 \gamma L \delta_1^2 + 60 \gamma L \delta_2^2 +  48 (N-1) L^2 \frac{1}{T}\sum_{t=0}^{T-1} \mathbb{E} \left \|\bar{\bm x}^{t}  \right\|^2   \nonumber \\
& + \gamma \frac{1}{TE} \sum_{t=0}^T \sum_{e=0}^{E-1} \sum_{j=1}^N \textnormal{MSE}_j^{t,e} +  6 \frac{N}{d} \frac{1}{T}\sum_{t=0}^{T-1} d^t_{\textrm{max}} \delta_1^2,
\end{align}
	where $\textnormal{MSE}_{j}^{t,e} = \sum_{k\in \mathcal{S}^{t}_j}\textnormal{MSE}_{j,k}^{t,e}$, $\mathcal{S}^{t}_j = \{k| p_j^t \neq 0\}$, and
 $\tilde{N}$ and $d^t_{\textrm{max}}$ are defined in Theorem \ref{theorem_non_iid_hierarchical}.
\end{theorem}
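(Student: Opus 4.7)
The plan is to view each AirComp-assisted edge aggregation as a noisy perturbation of the corresponding noiseless \texttt{HIST} aggregation, and then re-run the convergence argument of Theorem~\ref{theorem_non_iid_hierarchical} with the extra wireless noise carefully carried through. The key structural observation is that $\bar{\bm x}^{t,e+1}_j = \frac{1}{n_j}\sum_{i\in\mathcal{C}_j} \bm x^{t,e}_{i,H} - \gamma \bm Z_j^{t,e}$, because $\bar{\bm x}^{t,e}_j - \gamma \frac{1}{n_j}\sum_i \bm\delta_i^{t,e}$ is exactly the noiseless \texttt{HIST} edge update. Here $\bm Z_j^{t,e}$ is supported on $\mathcal{S}^t_j$, is zero mean, has variance $\sum_{k\in\mathcal{S}^t_j}\textnormal{MSE}_{j,k}^{t,e} = \textnormal{MSE}_j^{t,e}$, and is independent across cells and of all stochastic gradients computed up to round $(t,e)$. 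Summing across cells, the virtual global iterate evolves as $\hat{\bm x}^{t,e+1} = \hat{\bm x}^{t,e+1}_{\text{noiseless}} - \gamma\sum_{j=1}^N \bm Z_j^{t,e}$.

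Next I would apply the $L$-smooth descent inequality to $f$ at $\hat{\bm x}^{t,e}$ and take conditional expectation with respect to the filtration that includes all randomness up through edge round $(t,e-1)$ plus the within-round stochastic gradients. Because each $\bm Z_j^{t,e}$ is conditionally zero mean and independent of the within-round SGD noise, the inner-product term $\langle \nabla f(\hat{\bm x}^{t,e}), \mathbb{E}[\hat{\bm x}^{t,e+1}-\hat{\bm x}^{t,e}]\rangle$ coincides with its noiseless counterpart. The squared-norm term splits cleanly by orthogonality, and cross-cell independence eliminates inter-cell covariance, giving
\begin{equation}
\mathbb{E}\bigl\|\hat{\bm x}^{t,e+1}-\hat{\bm x}^{t,e}\bigr\|^2 \le \mathbb{E}\bigl\|\hat{\bm x}^{t,e+1}_{\text{noiseless}}-\hat{\bm x}^{t,e}\bigr\|^2 + \gamma^2 \sum_{j=1}^N \textnormal{MSE}_j^{t,e}.
\end{equation}
All of the client-drift, edge-drift, and inter-cell consensus bounds used in the proof of Theorem~\ref{theorem_non_iid_hierarchical} are conditioned on the edge iterates and depend only on the within-round stochastic gradient process; none of them is affected by the AirComp noise, so I can cite those bounds verbatim.

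Summing the perturbed one-step inequality over $e=0,\dots,E-1$, telescoping over $t=0,\dots,T-1$, dividing by $\gamma T E H$ and rearranging produces the claim. The six terms inherited from Theorem~\ref{theorem_non_iid_hierarchical} (initial gap, gradient variance, both heterogeneity terms, the $d_{\max}^t$ term, and the $\|\bar{\bm x}^t\|^2$ term) appear unchanged, while the extra $\tfrac{L\gamma^2}{2}\sum_j \textnormal{MSE}_j^{t,e}$ contribution, after being divided by $\gamma H$ and absorbed with constants under the step-size condition \eqref{condition_gamma}, yields precisely the new $\gamma \cdot \frac{1}{TE}\sum_t\sum_e\sum_j \textnormal{MSE}_j^{t,e}$ term stated in Theorem~\ref{theorem_non_iid_hierarchical_AirComp}.

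The main obstacle is justifying that the AirComp noise truly decouples from the parts of the argument that bound client drift and virtual-iterate consensus. I would handle this by carefully choosing the filtration so that $\bm Z_j^{t,e}$ is measurable only at the end of edge round $(t,e)$; then the within-round drift recursions that lead to the $\sigma^2$, $\delta_1^2$, and $\delta_2^2$ terms are computed conditional on the (now noisier) initialization $\bar{\bm x}^{t,e}_j$, and those recursions are unchanged. The propagation of AirComp noise into subsequent rounds is absorbed automatically by taking an outer expectation before telescoping, exactly as the noiseless proof absorbs stochastic-gradient noise across rounds; the only residual footprint of AirComp is the per-round second-order variance that forms the added $\textnormal{MSE}_j^{t,e}$ term.
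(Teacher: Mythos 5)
Your proposal is correct and follows essentially the same route as the paper: the paper proves an AirComp variant of the descent lemma (Lemma~\ref{lemma_function_value_AirComp}) in which the zero-mean, independent channel noise leaves the inner-product term unchanged and adds exactly the per-round $\textnormal{MSE}_j^{t,e}$ variance to the quadratic term, then reuses the edge- and client-drift lemmas and telescopes exactly as in Theorem~\ref{theorem_non_iid_hierarchical}. Your decomposition of the noisy edge update as the noiseless \texttt{HIST} update minus $\gamma \bm Z_j^{t,e}$, and your filtration argument for why the drift bounds carry over, match the paper's treatment (including its implicit assumption that the AirComp noise accumulated across edge rounds does not alter the edge-divergence bound).
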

\begin{proof}
    Please refer to Appendix \ref{proof_theorems_aircomp}.
\end{proof}
Compared with Theorem \ref{theorem_non_iid_hierarchical}, the difference introduced by AirComp is quantified by the sixth term, i.e., the MSE from the aggregation in each cell. Plugging an appropriate step size into Theorem \ref{theorem_non_iid_hierarchical_AirComp} gives rise to the following corollary.

\begin{corollary}\label{corollary_non_iid_hierarchical_AirComp}
	Suppose that Assumptions \ref{assump_lowerbound}-\ref{assump_gradient_dissimilarity_client} hold, and let the step size be $\gamma = (T\!E\!H)^{-\!\frac{1}{2}}$ for $T$ is large enough to satisfy \eqref{condition_gamma}. Then the \texttt{HIST} algorithm satisfies
	\begin{align}
		& \frac{1}{TE} \sum_{t=0}^T \sum_{e=0}^{E-1} \mathbb{E} \left\| \nabla f(\hat{\bm x}^{t,e} ) \right\|^2
		\leq \mathcal{O}\left( \tilde{N}^{\frac{1}{2}} (TEH)^{-\frac{1}{2}} \right) \nonumber \\
  & + \mathcal{O}\left((TEH)^{-\frac{1}{2}} \right) + \mathcal{O}\left((N-1)L^2 
  \frac{1}{T} \sum_{t=0}^{T-1} \mathbb{E} \left\|\bar{\boldsymbol{x}}^{t}\right\|^2 \right)\nonumber \\
  & + \gamma \underbrace{\frac{1}{TE} \sum_{t=0}^T \sum_{e=0}^{E-1} \sum_{j=1}^N \textnormal{MSE}_j^{t,e} +  6 \frac{N}{d} \frac{1}{T}\sum_{t=0}^{T-1} d^t_{\textrm{max}} \delta_1^2}_{\textnormal{controllable terms}}.
	\end{align}
\end{corollary}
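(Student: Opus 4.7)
The plan is to derive Corollary \ref{corollary_non_iid_hierarchical_AirComp} by direct substitution of $\gamma = (TEH\tilde{N})^{-1/2}$ into the bound of Theorem \ref{theorem_non_iid_hierarchical_AirComp}, since the heavy lifting---incorporating the AirComp aggregation noise via its MSE into a descent-style bound---has already been carried out in that theorem. Nothing genuinely new about AirComp needs to enter at the corollary stage; it is essentially a routine rescaling exercise.

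First I would check admissibility of the step size: the condition \eqref{condition_gamma} bounds $\gamma$ from above by an $\mathcal{O}(1)$ quantity in $L$, $H$, $E$, $N$, $\tilde{N}$, whereas the chosen $\gamma = (TEH\tilde{N})^{-1/2}$ tends to $0$ as $T \to \infty$. Hence there is a threshold $T_0$, depending only on those constants, above which \eqref{condition_gamma} is automatic, which is precisely the ``for $T$ large enough'' clause of the statement. Next I would substitute $\gamma$ into each of the six terms of Theorem \ref{theorem_non_iid_hierarchical_AirComp}. The initial-gap term $4(f(\bar{\bm x}^0)-f_*)/(\gamma TEH)$ and the stochastic-noise term $100\gamma \tilde{N}L\sigma^2$ both collapse to $\mathcal{O}(\tilde{N}^{1/2}T^{-1/2})$ after substitution, producing the first $\mathcal{O}$ term. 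The two gradient-heterogeneity terms $1356\gamma L\delta_1^2$ and $60\gamma L\delta_2^2$ each shrink as $\mathcal{O}(\tilde{N}^{-1/2}T^{-1/2}) = \mathcal{O}(T^{-1/2})$, producing the second $\mathcal{O}$ term. The model-norm term $48(N-1)L^2\, T^{-1}\sum_t \mathbb{E}\|\bar{\bm x}^t\|^2$ is independent of $\gamma$ and is carried over verbatim as the non-diminishing tail. Finally, the AirComp distortion term retains its explicit prefactor $\gamma$, and, together with the mask-imbalance term $6(N/d)\,T^{-1}\sum_t d^t_{\max}\delta_1^2$, is collected into the ``controllable terms'' grouping of the corollary, reflecting that the first can be shrunk via receive beamforming and the second via partition balancing (which is precisely what is exploited in Section \ref{sec_partion_opt} and in the AirComp-aware partition optimization that follows).

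The computation is mechanical, so I do not anticipate a real obstacle. The one bookkeeping point to be careful about is the exponent of $\tilde{N}$: terms multiplied by $\gamma^{-1}$ pick up $\tilde{N}^{1/2}$, while those multiplied by $\gamma$ pick up $\tilde{N}^{-1/2}$, and this is what separates the $\mathcal{O}(\tilde{N}^{1/2}T^{-1/2})$ group from the $\mathcal{O}(T^{-1/2})$ group in the final bound. All remaining problem constants ($L$, $\sigma$, $\delta_1$, $\delta_2$, $E$, $H$, $f(\bar{\bm x}^0)-f_*$) are regarded as fixed and absorbed into the $\mathcal{O}(\cdot)$ notation, and no new inequalities beyond those already used to establish Theorem \ref{theorem_non_iid_hierarchical_AirComp} are required.
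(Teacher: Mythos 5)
Your proposal is correct and is exactly the paper's route: the paper gives no separate argument for this corollary beyond "plugging the step size into Theorem \ref{theorem_non_iid_hierarchical_AirComp}," and your substitution, the admissibility check for large $T$, and the bookkeeping of which terms pick up $\tilde{N}^{1/2}$ versus $\tilde{N}^{-1/2}$ all match. (Your reading that the explicit $\gamma$ prefactor belongs only to the MSE term, as in Theorem \ref{theorem_non_iid_hierarchical_AirComp}, is the correct one despite the corollary's typesetting placing $\gamma$ outside the whole "controllable terms" brace.)
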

As shown in Theorem \ref{theorem_non_iid_hierarchical_AirComp}, the error term induced by AirComp, i.e., the sixth term, is of the same order as the variance of the stochastic gradient, i.e., the second term (growing proportionally to $N$ or $\tilde{N}$).  This similarity is attributed to the fact that, under the precoding design as specified in \eqref{alpha}, the accumulated gradient estimated by AirComp remains unbiased. Consequently, the variance affects the algorithm in a manner akin to the inherent randomness of the stochastic gradient, and just adds more uncertainty. 

In Theorem \ref{theorem_non_iid_hierarchical_AirComp} and Corollary \ref{corollary_non_iid_hierarchical_AirComp}, we see there are two error terms that are controllable based on AirComp-assisted HIST variables. We address the minimization of these terms through beamforming design (Section V-D) and submodel partitioning (Section V-E) next.

\vspace{-1mm}

\subsection{Receive Beamforming Design}


We aim to design the normalized receive beamformers to mitigate the impact of AirComp distortion, by minimizing the MSE term in Theorem \ref{theorem_non_iid_hierarchical_AirComp}.
Note that the expression of MSE in \eqref{simplified_MSE} is invariant to $k$, i.e., $\textnormal{MSE}_{j,k}^{t,e} = \textnormal{MSE}_{j,k^{\prime}}^{t,e}$, $\forall k, k^{\prime} \in \mathcal{S}^{t}_j$. Therefore, the beamformer design problem at the $(t,e)$-th round can be formulated as 
\vspace{-0.5mm}
\equa{\label{ori_pro_MSE}
\min_{{\bm a}_1^{t,e},{\bm a}_2^{t,e},\ldots,{\bm a}_N^{t,e}} &\sum_{i=1}^{N} \frac{\|\bm p_j^t\|_1 \sigma_0^2 }{P_j \min_{i\in\mathcal{C}_j} \|(\bm a_j^{t,e})^{\textnormal{H}} \bm h_i^{t,e}\|^2 } \\
\text{s.t.}\qquad &  \|\bm a_j^{t,e}\|^2 = 1, ~ j=1,2,\ldots,N.
}

Due to the independence between ${\bm a}_j^{t,e}$ and ${\bm a}_{j^{\prime}}^{t,e}$, $\forall j \neq j^{\prime}$, problem \eqref{ori_pro_MSE} can be transformed into $N$ independent subproblems, one for each cell $j$, as
\vspace{-0.5mm}
\equa{\label{re1_pro_MSE}
\max_{{\bm a}_j^{t,e}} & \min_{i\in\mathcal{C}_j} \|(\bm a_j^{t,e})^{\textnormal{H}} \bm h_i^{t,e}\|^2 \\
\text{s.t.} ~&  \|\bm a_j^{t,e}\|^2 = 1, ~ j=1,2,\ldots,N.
}
Since the objective function is increasing in $\|\bm a_j^{t,e}\|^2$, Problem \eqref{re1_pro_MSE} can be directly relaxed to 
\equa{\label{1_pro_MSE}
\max_{{\bm a}_j^{t,e}} & \min_{i\in\mathcal{C}_j} \|(\bm a_j^{t,e})^{\textnormal{H}} \bm h_i^{t,e}\|^2 \\
\text{s.t.} ~&  \|\bm a_j^{t,e}\|^2 \leq 1, ~ j=1,2,\ldots,N,
}
without compromising optimality. 
 Problem \eqref{1_pro_MSE} is a non-convex quadratically constrained quadratic programming (QCQP) problem. First-order methods have been developed to solve problems in this class efficiently, e.g., the Mirror-Prox-based Successive Convex Approximation (SCA) algorithm detailed in \cite{fang2021over}. 

\vspace{-3mm}
\subsection{Submodel Partitioning Optimization}

The submodel partitioning term in Theorem \ref{theorem_non_iid_hierarchical_AirComp} is the same as in Theorem \ref{theorem_non_iid_hierarchical}. Different from Sec. IV that studies the OMA-based \texttt{HIST}, we now need a latency model that incorporates AirComp. 
{\color{black}
When using AirComp, multiple devices transmit their submodel parameters concurrently in the same time slot and frequency band, as in the signal transmission model \eqref{aggregation}. Within this transmission mechanism, each parameter is amplitude-modulated
to a single analog symbol and each sub-channel is dedicated
to a single parameter transmission. Thus, uploading a model
update of dimension $\|\bm p^{t}_j\|_1$, the total number of analog symbols to be transmitted is $\|\bm p^{t}_j\|_1$. Additionally, since AirComp employs analog aggregations, while the channel conditions and SNR will impact the aggregation error, they will in theory not impact the communication latency.  Concretely, according to \cite{zhu2019broadband,cao2022transmission}, the uplink communication latency at cell $j$ can be written as
\setcounter{equation}{36}
\begin{align}
    \frac{\|\bm p^{t}_j\|_1}{B/\Delta f  } t_s ,
\end{align}
where $B$ denotes the system bandwidth, $\Delta f$ denotes the bandwidth of a sub-channel, and $t_{s}$ denotes a symbol duration. For example, in LTE systems, each resource block with the duration of $t_s = \frac{1}{14}$ ms and sub-channel bandwidth $\Delta f = 15$ kHz \cite{cao2022transmission}.
}
Thus the total latency for each edge training round within cell $j$ can be represented as
\equa{
\max_{i \in \mathcal{C}_j}\left\{H \frac{\|\bm p^{t}_j\|_1 V_0}{\mathcal{F}^t_i d} +  \frac{\|\bm p^{t}_j\|_1}{B/\Delta f  } t_s \right\},
} 
using the same computational latency model as in \eqref{latency_comm_comp}.
As a result, the latency minimization problem of this AirComp-assisted HFL system can be formulated as
\equa{\label{latency_minimization_AirComp}
\min_{\left\{\|\bm p_j^t\|_1\right\}_{j=1}^N} & \max_{j\in\{1,\dots,N\}} \left \{E \max_{i \in \mathcal{C}_j}\left\{H \frac{\|\bm p^{t}_j\|_1 V_0}{\mathcal{F}^t_i d} +  \frac{\|\bm p^{t}_j\|_1}{B/\Delta f  } t_s \right\} \right\} \\
 \text{s.t.} \quad & \eqref{cons_1}, \eqref{cons_2},
}
which can be resolved using the same method for \eqref{latency_minimization}. 

\begin{figure}[t]
	\centering	\includegraphics[scale=0.53]{./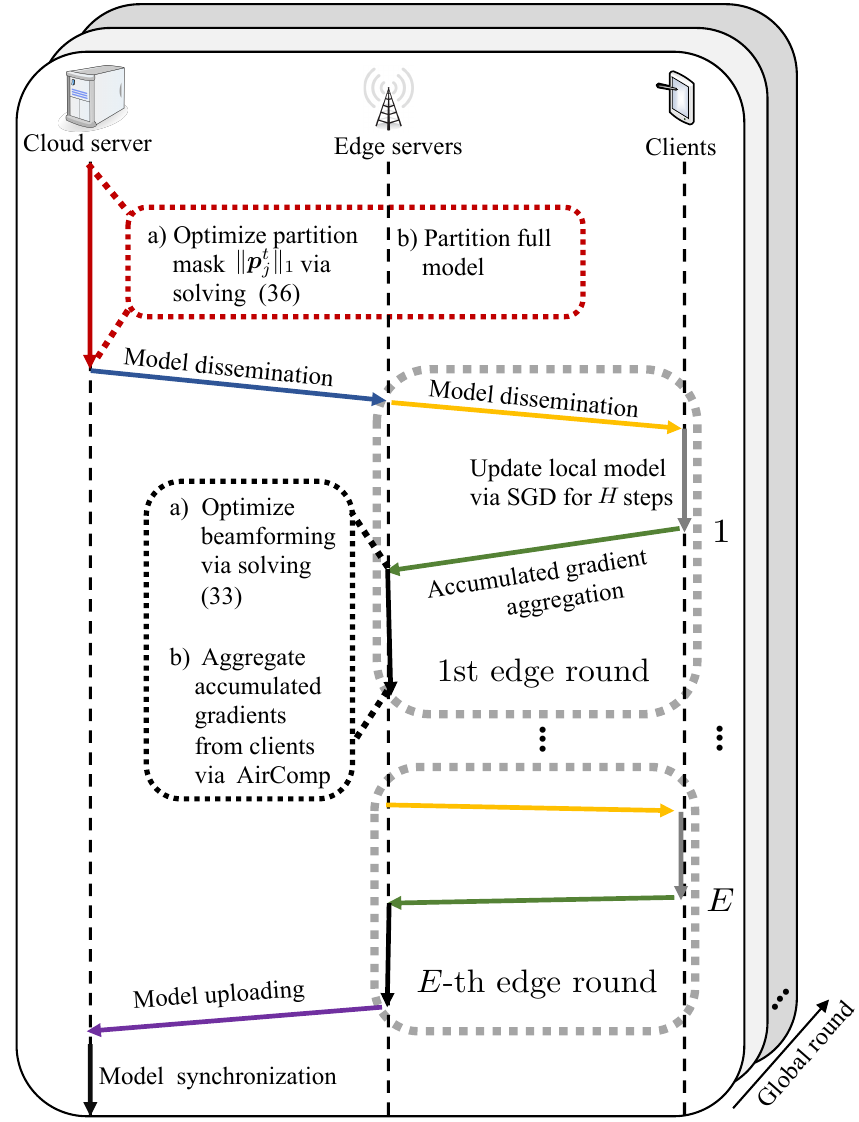}
	\caption{Visualization of the AirComp-assisted \texttt{HIST} algorithm. The mask partitioning is solved once per global round, while the beamforming optimization is solved once per edge round.} 
\label{fig:procedures}
\vspace{-0.05in}
\end{figure}

The full implementation procedure of the AirComp-assisted \texttt{HIST} algorithm is illustrated in Fig. \ref{fig:procedures}. 

 \begin{figure*}[t!]
	\centering
	\begin{minipage}{.24\textwidth}
		\begin{subfigure}{\textwidth}
			\centering
			\includegraphics[width=2.0in]{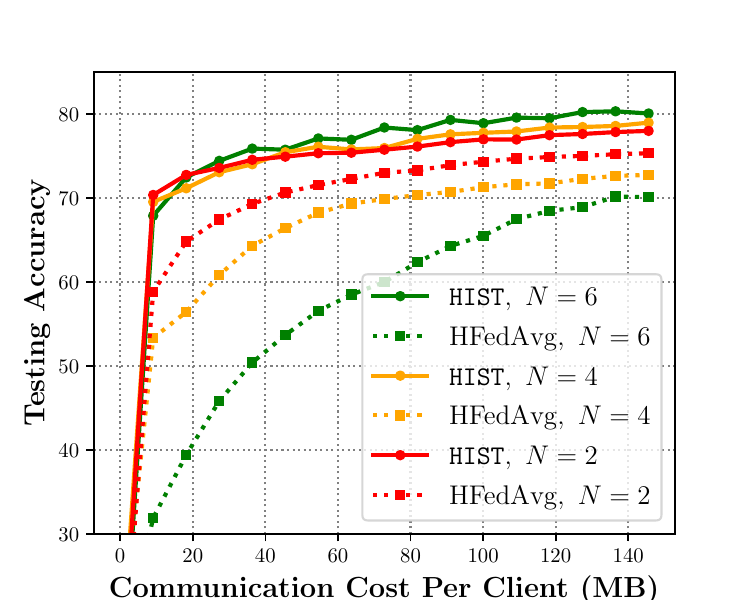}
			\caption{Fully non-i.i.d.}\label{non_iid_load}
		\end{subfigure}\\
		\begin{subfigure}{\textwidth}
			\centering
			\includegraphics[width=2.0in]{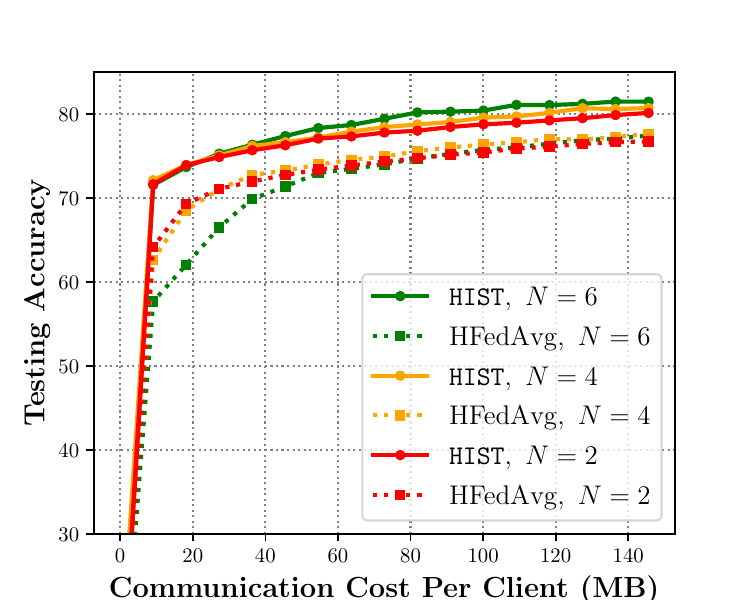}
			\caption{Cell i.i.d., client non-i.i.d.}\label{mixed_load}
		\end{subfigure}%
		 \caption{The impact of the number of cells $N$ on the convergence performance of \texttt{HIST}.} \label{load}
	\end{minipage}
	\hfill
	\begin{minipage}{.24\textwidth}
		\begin{subfigure}{\textwidth}
			\centering
			\includegraphics[width=2.0in]{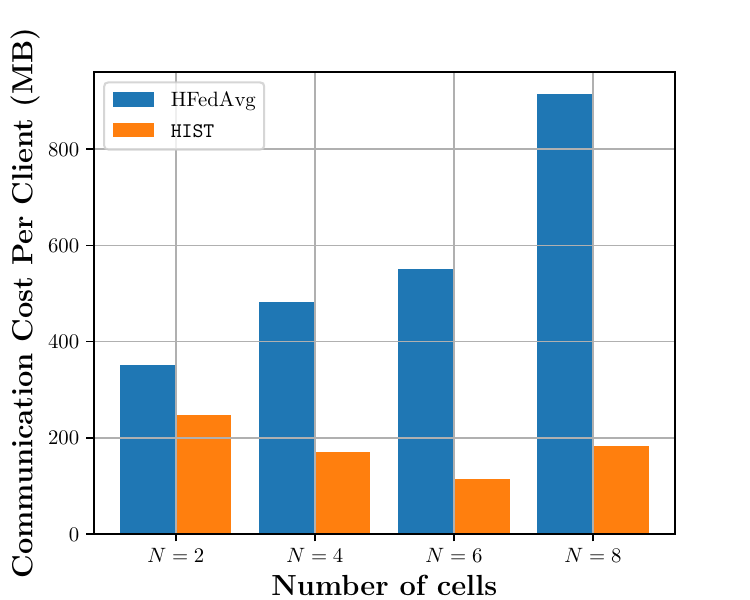}
			\caption{Fully non-i.i.d.}\label{desired_acc_noniid}
		\end{subfigure}\\
		\begin{subfigure}{\textwidth}
			\centering
			\includegraphics[width=2.0in]{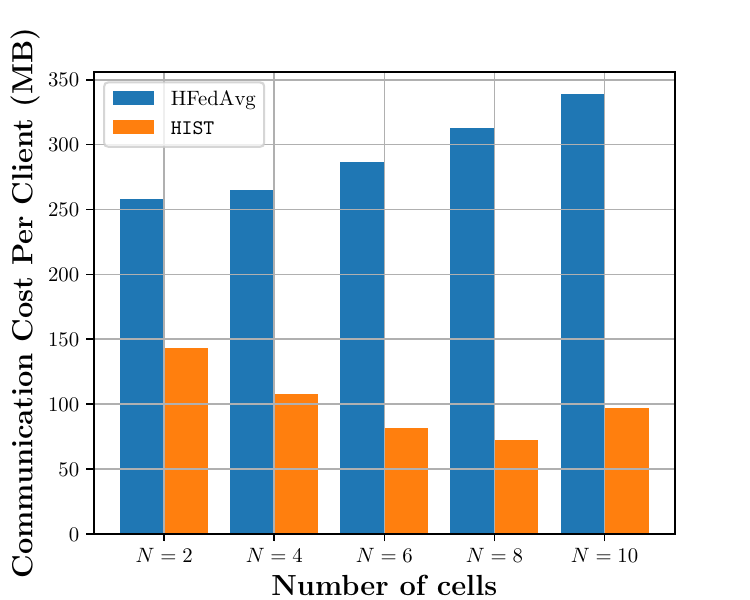}
			\caption{Cell i.i.d., client non-i.i.d.}\label{desired_acc_mixed}
		\end{subfigure}%
		\caption{{\color{black}Communication cost for achieving the testing accuracy of $80\%$ in each scheme.}}\label{desired_accuracy}
	\end{minipage}
	\hfill
	\begin{minipage}{.24\textwidth}
		\begin{subfigure}{\textwidth}
			\centering
			\includegraphics[width=2.0in]{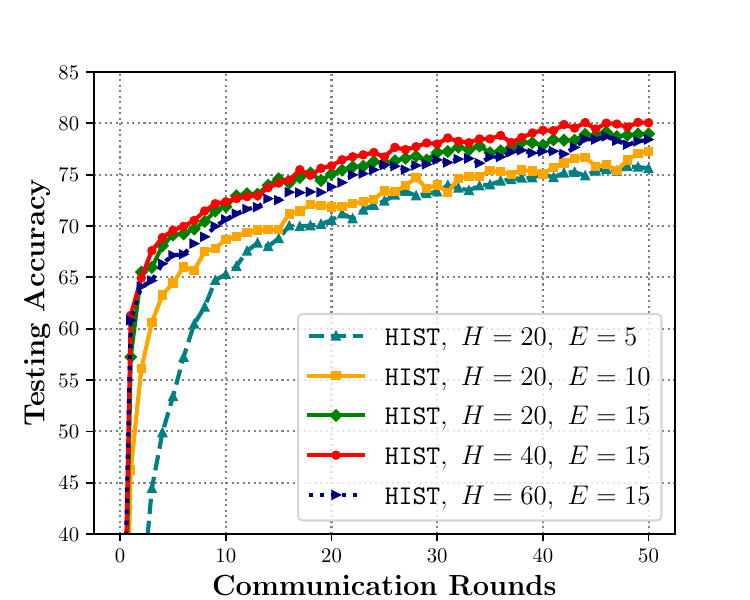}
    \caption{Fully non-i.i.d.}\label{non_iid_impact_H}
		\end{subfigure}\\
		\begin{subfigure}{\textwidth}
			\centering
			\includegraphics[width=2.0in]{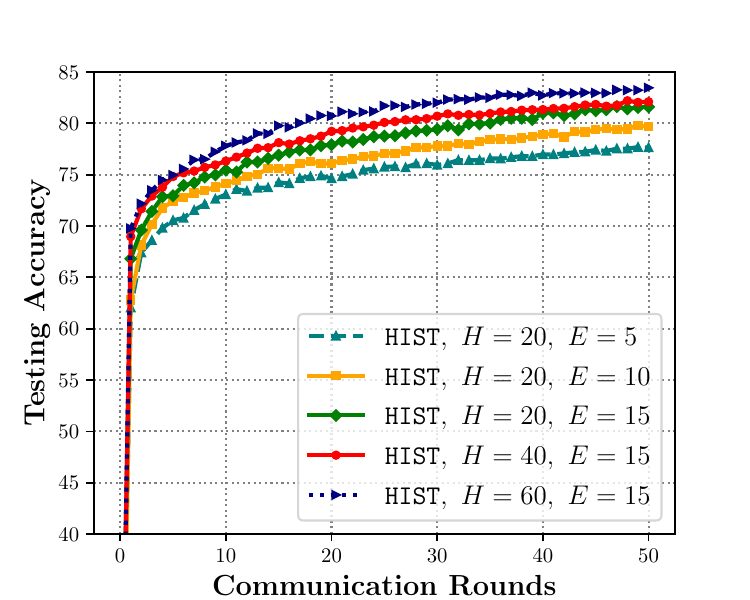}
    \caption{Cell i.i.d., client non-i.i.d.}\label{mixed_impact_HE}
		\end{subfigure}%
		\caption{Impacts of aggregation periods $H$ and $E$ on the performance of \texttt{HIST}.}\label{local_updates}
	\end{minipage} 
 \hfill
	\begin{minipage}{.24\textwidth}
		\begin{subfigure}{\textwidth}
			\centering
			\includegraphics[width=2.0in]{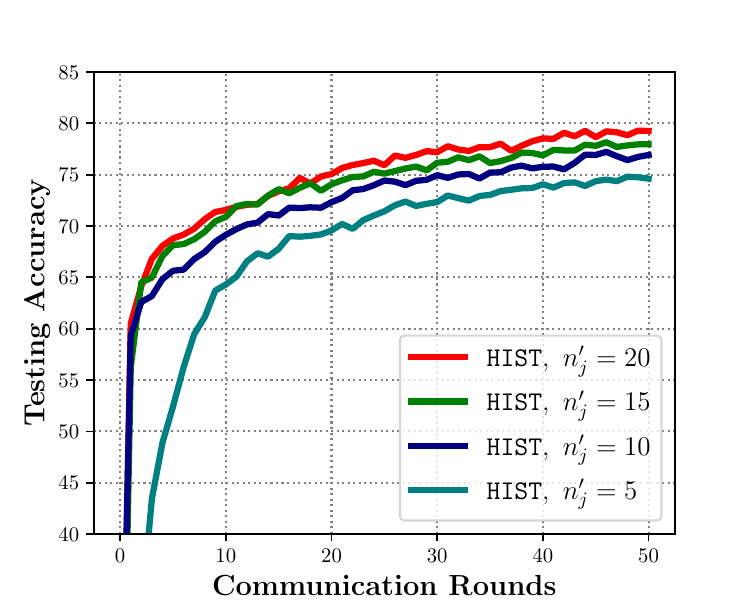}
    \caption{Fully non-i.i.d.}\label{impact_partial_participation_noniid}
		\end{subfigure}\\
		\begin{subfigure}{\textwidth}
			\centering
			\includegraphics[width=2.0in]{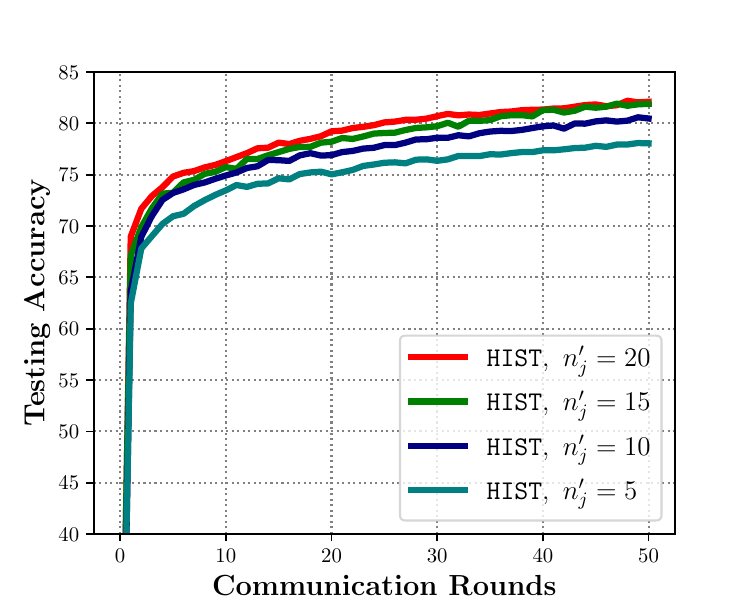}
    \caption{Cell i.i.d., client non-i.i.d.}\label{impact_partial_participation_mix}
		\end{subfigure}%
		\caption{{\color{black}Impact of the number of participating clients $ n_j^{\prime}$ on the performance of \texttt{HIST}.}}\label{partial_participation}
	\end{minipage}
    \vspace{-0.1in}
\end{figure*}

\section{Simulation Results}\label{sec_simulation}

\subsection{Simulation Settings}\label{sim_setting}

We consider a setup in which clients are evenly distributed across $N$ cells, i.e., $n_j=n_{j^\prime}, \forall j,j^\prime \in \{1,2,\ldots,N\}$, for various choices of $N$. 
Unless otherwise specified, the total number of clients in the system, i.e., $N n_j$, is set to $60$. 
We focus on two practical data distribution settings: (i) fully non-i.i.d. and (ii) i.i.d. data across cells but non-i.i.d data across the clients within the same cell.  
For case i), the client's dataset construction follows the approach outlined in \cite{fang2022communication}. 
For case (ii), we first uniformly and randomly divide the entire training set into $N$ partitions, corresponding to $N$ cells. We then distribute each partition to the clients within the respective cell in a non-i.i.d. manner following case (i). Case (ii) allows us to consider settings where clients that are closer in geographical proximity (i.e., within the same cell) have similar local datasets. 

{\color{black} We consider training fully connected neural networks and convolutional neural networks using Fashion-MNIST and CIFAR-10 datasets, with the details given in the following subsections. In Appendix \ref{appendix:transformer}, we show how our methodology can be applied to fine-tune transformer models with low rank adaptation (LoRA) as well.
Unless stated otherwise, the experiments are conducted under full client participation.}

\vspace{-0.1in}
\subsection{Performance Evaluation of \texttt{HIST}}\label{simu_pure_HIST}

\subsubsection{Fully connected neural networks}\label{simu_pure_HIST:fcn}

We first consider an image classification task on Fashion-MNIST using a two-layer fully connected neural network. In this model, we configure the input layer to have $784$ neurons, corresponding to the size of the input image, and the output layer to have $10$ neurons, which matches the number of classes.
Additionally, we employ a hidden layer with $300$ neurons. 
The model size is $0.91$ MB.

The cloud server disjointly partitions the hidden neurons to construct different submodels.
In this subsection, we first consider the case of the uniform partition, i.e., all submodels have the same sizes, before evaluating our partitioning optimization strategy in Section \ref{simu_HIST_opt}. This can be achieved by uniformly and randomly partitioning the hidden neurons. 
More details on our partitioning strategy and its difference from \cite{yuan2022distributed,zhou2022convergence} are discussed in Appendix \ref{appendix:model_partition}.

 \textbf{Comparison with Baseline:} In Fig. \ref{load}, we compare our proposed \texttt{HIST} algorithm with the traditional hierarchical FedAvg algorithm (denoted as HFedAvg in our figures), where the full model is communicated over the hierarchical network. We evaluate the performance by comparing testing accuracy across different numbers of cells, $N\in \{2, 4, 6\}$.
The X-axis here represents the communication load, quantified as the volume of parameters transmitted per client.  
We set  $H$ and $E$ to $20$ and $5$, respectively.

As shown in Figs. \ref{non_iid_load} and \ref{mixed_load}, the proposed \texttt{HIST}   outperforms HFedAvg in terms of testing accuracy achieved for the same level of communication cost for both data distribution settings. In particular, under a fully non-i.i.d. data setup, as $N$ increases, the extent of data dissimilarity across cells becomes more pronounced (since each cell has fewer class labels as $N$ increases), leading to performance degradation for HFedAvg. In contrast, \texttt{HIST} achieves a higher testing accuracy for a given communication load when $N$ increases from $2$ to $6$.  
This is due to the fact that, for \texttt{HIST}, the per-round communication cost per client decreases as the number of cells increases, partly compensating for the degradation of convergence induced by data heterogeneity.
 
Fig. \ref{desired_accuracy} compares the communication cost of \texttt{HIST} and HFedAvg for achieving a target accuracy of $80 \%$. The Y-axis measures the volume of parameters transmitted by each client during the training process, i.e., the X-axis of Fig. \ref{load}. It is observed that  \texttt{HIST} takes less communication to achieve the preset accuracy, which demonstrates the efficiency of the proposed algorithm over 
 HFedAvg. In addition, as the number of cells increases from $2$ to $6$, the communication cost shows a decreasing trend, which is a stark contrast to HFedAvg in the fully non-i.i.d. case. This further demonstrates the advantage of submodel partitioning in \texttt{HIST}, even without optimization applied.
 {\color{black}
However, once the number of cells is increased to $8$ (or $10$ in the cell i.i.d., client non-i.i.d. case), \texttt{HIST} experiences performance degradation, demonstrating the efficiency-accuracy trade-off associated with the value of $N$. This performance drop is primarily due to the submodels becoming too small to leverage the local datasets on each device, necessitating more communication rounds to reach the same accuracy.\footnote{If the number of cells gets too large in specific scenarios, the \texttt{HIST} algorithm can be adapted by modifying the partition strategy (e.g., by including partial overlaps of model parameters among the partitions) to prevent such performance degradation.} This impact of $N$ is consistent with our theoretical result in \eqref{equa_corollary_1}: the last term increases proportionally to a larger number of cells, which eventually outpaces the reduction in per-round communication cost from smaller submodels.
}


\textbf{Effects of System Parameters:} The impacts of   period of   edge aggregation    $H$ and period of   global synchronization  $E$ on the convergence behavior are demonstrated in Fig. \ref{local_updates}. The X-axis represents the number of global synchronizations at the cloud server.
We consider a scenario with $N=3$. As $E$ increases within $E \in \{5, 10, 15\}$, \texttt{HIST} attains a better convergence performance for both Figs. \ref{non_iid_impact_H} and \ref{mixed_impact_HE}. This is because a large  $E$ gives rise to more rounds of edge training within each global round.
{\color{black}When $H$ increases within $H \in \{20, 40, 60\}$, Fig. \ref{non_iid_impact_H} shows that the convergence speed of \texttt{HIST} first increases, and then experiences a degradation. This trend occurs due to data heterogeneity, since edge devices become more prone to overfitting on their local datasets as the aggregations become less frequent. This is also reflected in our theoretical analysis: we showed in Section \ref{sec_theory_C} that $H$ has an upper bound to ensure the step size condition for the convergence result. On the other hand,
Fig. \ref{mixed_impact_HE} shows that \texttt{HIST} continues to improve in performance as $H$ increases from $40$ to $60$: this setting exhibits lower data heterogeneity, which allows for a local aggregation period before overfitting.}

{\color{black}
\textbf{Effects of Client Participation Ratio:} 
In Fig. \ref{partial_participation}, we investigate the performance of the \texttt{HIST} algorithm under partial client participation. Specifically, in this experiment, the number of cells is set to $N=3$, and the aggregation periods are configured as $H=20$ and $E=15$. The number of clients in each cell is fixed at $n_j = 20$, but only a subset participates in each round. The results in Figs. \ref{impact_partial_participation_noniid} and \ref{impact_partial_participation_mix} demonstrate the impact of varying the number of participating clients in each cell, denoted as $n_j^{\prime}$, taking values $\{5, 10, 15, 20\}$. 
We see that, as expected, increasing the number of participating clients results in improved training performance for \texttt{HIST}. A higher participation ratio leads to less error in the aggregated model updates during each training round, especially when clients have diverse data distributions, which also explains why the impact of $n_j^{\prime}$ is larger in Fig. \ref{impact_partial_participation_noniid}. The observed speedups in performance align well with the theoretical observations discussed in Section \ref{sec_partial_participation}.}

\begin{table*}[t!]
 \centering
\begin{tabular}{lclclclclclclclclcl}
\toprule
\multicolumn{17}{c}{LeNet-5 on Fashion-MNIST} \\
\midrule
& & \multicolumn{7}{c}{Fully non-i.i.d.} & & \multicolumn{7}{c}{Cell i.i.d., client non-i.i.d.} \\ \cmidrule{3-9} \cmidrule{11-17}
& & \multicolumn{3}{c}{$2$ Cells} & & \multicolumn{3}{c}{$4$ Cells} & & \multicolumn{3}{c}{$2$ Cells}  & &  \multicolumn{3}{c}{$4$ Cells} \\
\cmidrule{3-5} \cmidrule{7-9} \cmidrule{11-13} \cmidrule{15-17}
Accuracy & & \texttt{HIST} & &   \texttt{HFedAvg} & & \texttt{HIST} & & \texttt{HFedAvg} & & \texttt{HIST} & & \texttt{HFedAvg} & & \texttt{HIST} & & \texttt{HFedAvg} \\
\cmidrule{1-1} \cmidrule{3-3} \cmidrule{5-5} \cmidrule{7-7} \cmidrule{9-9} \cmidrule{11-11} \cmidrule{13-13} \cmidrule{15-15} \cmidrule{17-17} 
70\% && 13.20 && 23.66 && 8.08 && 20.28 && 4.40 && 6.76 && 3.32 && 11.84 \\
80\% && 115.30 && 206.18 && 76.06 && 246.74 && 38.72 && 86.20 && 24.72 && 89.58 \\
\bottomrule
\end{tabular}
\caption{Communication cost (MB) per client for achieving testing accuracy of $70\%$ and $80\%$ on Fashion-MNIST. The results show that \texttt{HIST} can provide significant communication savings compared with the HFedAvg baseline during training.}
\label{tab:converge:fmnist}
\end{table*}
\begin{table*}[t!]
\centering
\begin{tabular}{lclclclclclclclclcl}
\toprule
\multicolumn{17}{c}{{\color{black}ResNet-18 on CIFAR-10.}} \\
\midrule
& & \multicolumn{7}{c}{Fully non-i.i.d.} & & \multicolumn{7}{c}{Cell i.i.d., client non-i.i.d.} \\ \cmidrule{3-9} \cmidrule{11-17}
& & \multicolumn{3}{c}{$2$ Cells} & & \multicolumn{3}{c}{$4$ Cells} & & \multicolumn{3}{c}{$2$ Cells}  & &  \multicolumn{3}{c}{$4$ Cells} \\
\cmidrule{3-5} \cmidrule{7-9} \cmidrule{11-13} \cmidrule{15-17}
Accuracy & & \texttt{HIST} & &   \texttt{HFedAvg} & & \texttt{HIST} & & \texttt{HFedAvg} & & \texttt{HIST} & & \texttt{HFedAvg} & & \texttt{HIST} & & \texttt{HFedAvg} \\
\cmidrule{1-1} \cmidrule{3-3} \cmidrule{5-5} \cmidrule{7-7} \cmidrule{9-9} \cmidrule{11-11} \cmidrule{13-13} \cmidrule{15-15} \cmidrule{17-17} 
75\% && 7.41 && 11.34 && 5.02 && 13.08 && 5.67 && 9.59 && 3.93 && 10.46 \\
80\% && 37.06 && 61.04 && 23.11 && 65.40 && 24.42 && 40.98 && 14.60 && 43.60 \\
\bottomrule
\end{tabular}
\caption{{\color{black}Communication cost (GB) per client for achieving testing accuracy of $75\%$ and $80\%$ on CIFAR-10. The results are consistent with the Fashion-MNIST results, further underscoring the effectiveness of the proposed HIST algorithm.}
}
\label{tab:converge:cifar}
\vspace{-0.2in}
\end{table*}

\begin{figure*}[t!]
	\centering
	\begin{minipage}{.3\textwidth}
		\begin{subfigure}{\textwidth}
			\centering
			\includegraphics[width=2.4in]{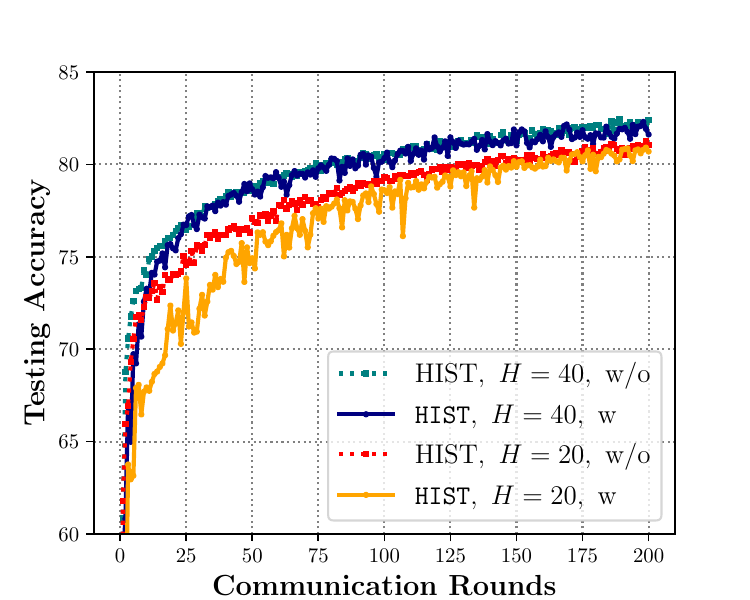}
      \caption{The impact of mask optimization on the convergence speed.}
      \label{mask_opt_rounds}
		\end{subfigure}
  \end{minipage}
  \hspace*{\fill} 
  \begin{minipage}{.3\textwidth}
  \begin{subfigure}{\textwidth}
			\centering
			\includegraphics[width=2.4in]{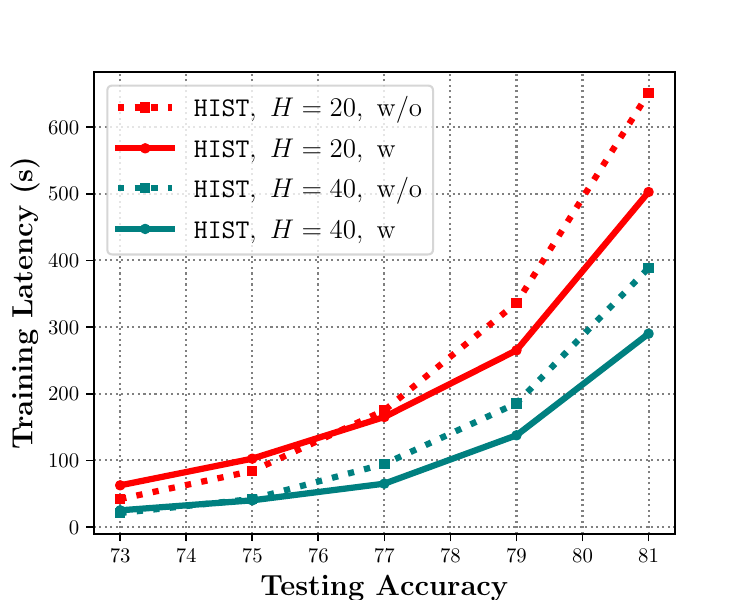}
   \caption{The impact of mask optimization on the training latency.}
   \label{mask_opt_latency_b}
		\end{subfigure} 
  \end{minipage}
  \hspace*{\fill} 
  \begin{minipage}{.3\textwidth}
  \begin{subfigure}{\textwidth}
			\centering
			\includegraphics[width=2.4in]{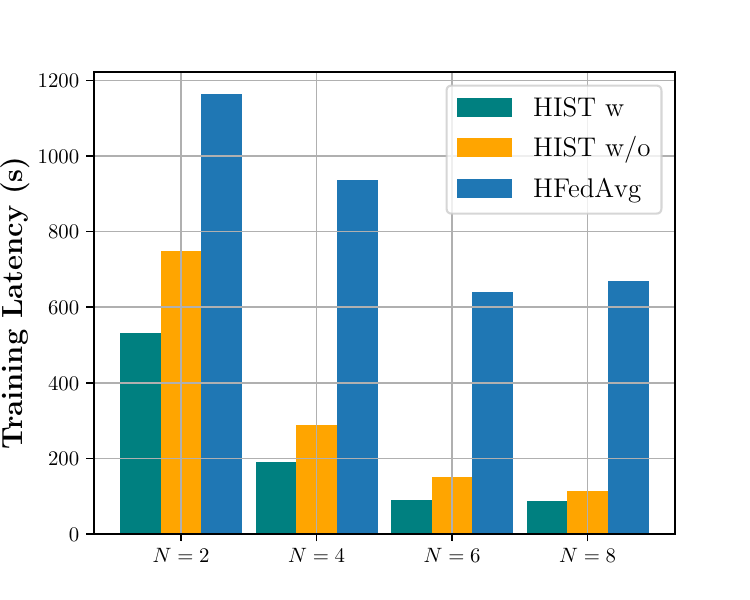}
   \caption{Training latency comparison for attaining accuracy of $80 \%$ under different $N$.}
   \label{mask_opt_latency_c}
		\end{subfigure} 
  \end{minipage}
   \caption{The impact of mask optimization in OMA-based \texttt{HIST}. We compare the cases  with mask optimization (w) and without mask optimization (w/o) under different settings. The results show that the proposed \texttt{HIST} with mask optimization can achieve the target accuracy much faster with reduced training time.}
   \label{mask_opt_OMA}
   \vspace{-0.15in}
\end{figure*}		

\begin{figure*}
    \begin{minipage}{.3\textwidth}
		\begin{subfigure}{\textwidth}
			\centering
			\includegraphics[width=2.4in]{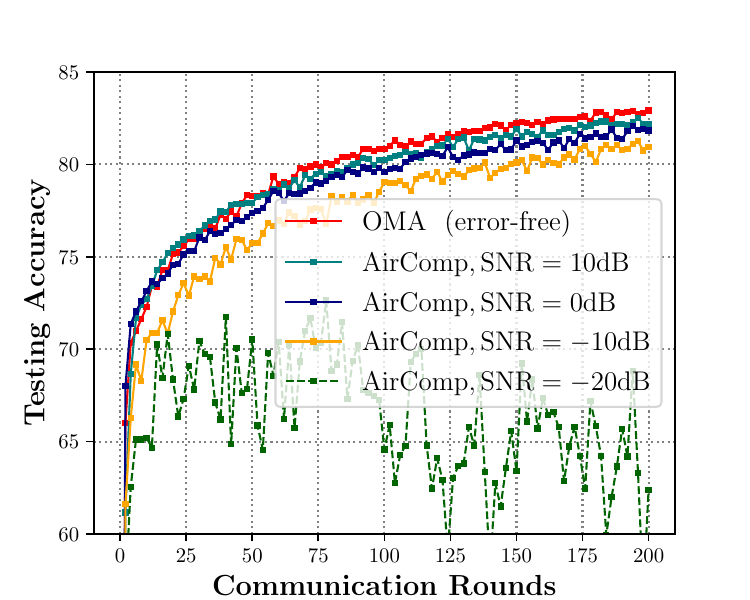}
   \caption{The impact of SNR on the convergence speed over training rounds.}
   \label{AirComp_FMNIST}
   \label{HIST_AirComp_SNR_fmnist}
		\end{subfigure}
    \end{minipage}
    \hspace*{\fill} 
    \begin{minipage}{.3\textwidth}
            \begin{subfigure}{\textwidth}
			\centering
			\includegraphics[width=2.4in]{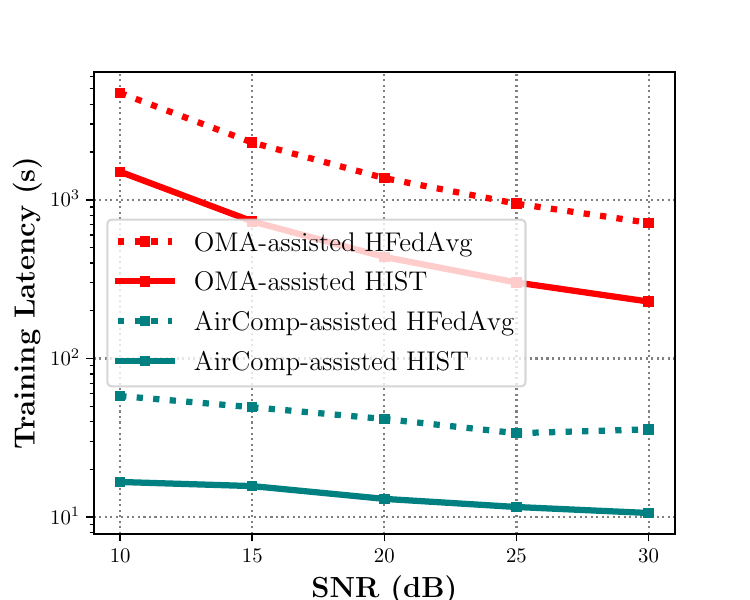}
   \caption{The impact of SNR on the training latency to achieve $80\%$ testing accuracy.}
   \label{HIST_AirComp_SNR_latency}
		\end{subfigure}
    \end{minipage}
    \hspace*{\fill} 
    \begin{minipage}{.3\textwidth}
        \begin{subfigure}{\textwidth}
			\centering
			\includegraphics[width=2.4in]{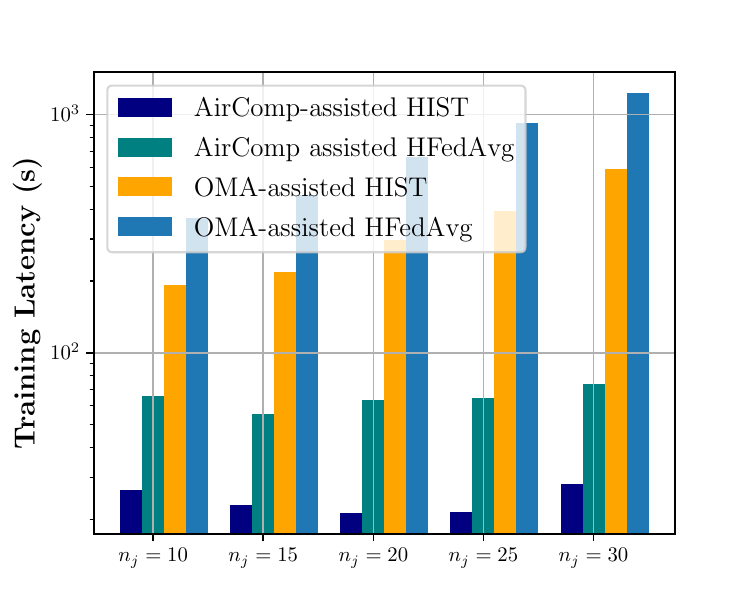}
           \caption{Training latency comparison for attaining accuracy of $80 \%$ under different $n_j$.}
           \label{HIST_AirComp_K}
        \end{subfigure} 
    \end{minipage}
    \caption{The performance of AirComp-assisted \texttt{HIST}. The proposed mask optimization strategy combined with AirComp-assisted \texttt{HIST} provides significant performance advantages compared with baselines under different settings.}
    \label{fig:HIST_AirComp}
       \vspace{-3mm}
\end{figure*}	

{\color{black}
\subsubsection{Convolutional neural networks} \label{simu_pure_HIST:cnn}
We numerically investigate the performance of  \texttt{HIST} on CNNs by training LeNet-5 and ResNet-18 on Fashion MNIST and CIFAR-$10$, respectively. Note that the total number of parameters in the convolutional layers of LeNet-5 is $2572$, while the fully connected layers consist of $59134$ parameters, which is $95.8 \%$ of the entire model.  Consequently, we only partition the fully connected layers in LeNet-5. In contrast, for ResNet-18, where the majority of parameters reside in the convolutional layers, we focus solely on partitioning those layers.
 
Similar to our previous experiments, we consider the communication cost incurred by \texttt{HIST} and HFedAvg for attaining some preset accuracies.
For the Fashion-MNIST task using LeNet-5, we set the target accuracies to $70\%$ and $80\%$, while for the CIFAR-$10$ with ResNet-18, the target accuracies are set to $75\%$ and $80\%$.
The results are shown in Tables \ref{tab:converge:fmnist} and \ref{tab:converge:cifar}.
It can be observed from both Tables \ref{tab:converge:fmnist} and \ref{tab:converge:cifar} that \texttt{HIST} incurs significantly less communication cost than HFedAvg. 
Moreover, as we observed in Figs.~\ref{desired_acc_noniid}\&\ref{desired_acc_mixed}, the advantage of the \texttt{HIST} algorithm gets more significant when the number of cells increases from $N=2$ to $N=4$.
For example, referring to Table \ref{tab:converge:cifar}, for the $80\%$ accuracy level in the fully non-i.i.d. case, the communication cost of the \texttt{HIST} algorithm is $\frac{1}{1.65}$ of what is required by HFedAvg when the number of cells is $N=2$. Notably, this ratio improves to $\frac{1}{2.83}$ as the number of cells increases to $N=4$, indicating that the \texttt{HIST} algorithm becomes more advantageous (with a smaller ratio signifying better performance) in scenarios with more cells.  
This shows that the proposed hierarchical submodel training methodology significantly enhances efficiency in the training of convolutional layers over hierarchical networks as well.
}

\vspace{-0.05in}
\subsection{Effect of Partitioning Optimization in OMA-based HIST}\label{simu_HIST_opt}
We next evaluate the effectiveness of our proposed partition optimization strategy from Section \ref{sec_partion_opt} in reducing the training latency. We consider training the fully connected neural network on Fashion-MNIST under the non-i.i.d. data setup, using the same model architecture from Section \ref{simu_pure_HIST:fcn}.
In the following experiments, we determine the CPU frequency for each client based on their cell location. Specifically, for client $i \in \mathcal{C}_j,~j\in [1,\frac{N}{2}]$, the CPU frequency is chosen randomly within the range $ (1,2)$ GHz. For clients in cells numbered in the latter half, the frequency range from which we draw is $ (2,4)$ GHz.
Meanwhile, we set $\text{SNR}$ to $30$ dB for clients $i\in \mathcal{C}_j$, $j\in [1,\frac{N}{2}]$ and $40$ dB for for clients $i\in \mathcal{C}_j$, $j\in (\frac{N}{2}, N]$.  
The channel is modeled as $\bm{h}_i^{t} \sim \mathcal{CN}(0,\mathbf{I}_M), ~\forall i,t$. In addition, $V_0$ is set to $10^6$ cycles per update and $\epsilon_{th}$ is set to $9 \delta_1^2$. The number of antennas at each edge server is set to $M=10$.

In Fig. \ref{mask_opt_OMA}, we compare the performance of the \texttt{HIST} algorithm with and without integration of mask optimization. In this context, `with mask optimization' is denoted as `w', while `uniform partition' is denoted as `w/o'. This experiment was conducted with $N=4$ and $E=5$. 
As shown in Fig. \ref{mask_opt_rounds}, when using the optimized mask tailored to training latency minimization,  \texttt{HIST}   exhibits a slightly slower convergence speed in terms of communication rounds, compared to the case with a uniform partition. Note that uniform partition is considered optimal in terms of the convergence bound but it does not consider the network resource availability of the system. On the other hand, as depicted in Fig. \ref{mask_opt_latency_b}, the optimized mask effectively reduces the training latency of the \texttt{HIST} algorithm across various settings. This is because an optimized partition will not compromise  the convergence speed largely based on   condition \eqref{cons_2} (as shown in Fig. \ref{mask_opt_rounds}) but notably reduce per-round training latency. 

Fig. \ref{mask_opt_latency_c} compares the training latency of   \texttt{HIST} with mask optimization, the \texttt{HIST} with a uniform partitioning, and HFedAvg. The latency of all these algorithms decreases as the number of cells increases from $2$ to $6$, which can be attributed to the fact that the communication complexity at each edge server linear decreases as the number of clients in the cell decreases. 
Additionally, when the number of cells increases from $6$ to $8$, HFedAvg experiences a significant increase in training latency while \texttt{HIST} shows comparatively less degradation.
This can be attributed to the fact that the per-round reduction in training latency partially compensates for the degradation  in convergence speed. Additionally, it is clear that \texttt{HIST} with partitioning optimization can always beat these baselines for all $N \in \{2,4,6,8\}$. These observations confirm that the \texttt{HIST} combined with an optimized mask could improve the training efficacy by a wide margin. 


 \vspace{-0.05in}
\subsection{Performance Evaluation of the AirComp-assisted \texttt{HIST}}\label{simu_HIST_AirComp}
Finally, we evaluate the performance of AirComp-assisted HIST from Section \ref{sec_aircomp}.
 Fig. \ref{HIST_AirComp_SNR_fmnist} plots the convergence behavior of the AirComp-assisted \texttt{HIST} algorithm in training the fully connected neural network on Fashion-MNIST under different SNR values (defined as $P_j / \sigma_0^2$, as in Section \ref{simu_HIST_opt}). This experiment was carried out with $N=4$, $H=40$, and $E=5$.  We take OMA-based \texttt{HIST}, studied in the last subsection, as a benchmark whose convergence behavior (with respect to communication rounds) is independent of the SNR value.
The results show that AirComp-assisted \texttt{HIST} performs well under a wide SNR region (as long as it is larger than $-10$ dB), which reveals that it is robust to channel conditions.
However, when the SNR becomes extremely low (e.g., $-20$ dB), the convergence behavior degrades and leads to large oscillations.
A low SNR leads to a large variance in the model aggregated by AirComp, preventing \texttt{HIST} from converging.
Such a phenomenon fits well with our theoretical analysis provided in Section \ref{sec_aircomp}, since the SNR will directly impact the MSE, which makes the convergence performance poorer.

Fig. \ref{HIST_AirComp_SNR_latency} shows the required training latency for attaining a target testing accuracy of $80 \%$. We compare the training latency of the OMA-based and AirComp-assisted HFedAvg and \texttt{HIST} under different SNR values. The CPU frequencies of clients are drawn from $ (2,4)$ GHz. Other settings are the same as that of Fig. \ref{HIST_AirComp_SNR_fmnist}. 
As shown in Fig. \ref{HIST_AirComp_SNR_latency}, the training latency decreases as the SNR increases for all algorithms. 
For the OMA scheme, an error-free aggregation, the SNR affects the upload rate, which in turn impacts the latency experienced in each round of communication. In the AirComp scheme, a larger SNR results in a smaller error during aggregation, and thus reduces the training latency for achieving the preset testing accuracy. Importantly, we see that AirComp-assisted \texttt{HIST} achieves substantially improved training latency for all choices of SNR, showing the joint advantage of AirComp and submodel partitioning design in improving efficiency.

Fig. \ref{HIST_AirComp_K} shows the impact of the number of clients in each cell on the training latency. 
We consider $N=4$, $H=40$, and $E=5$.
The number of clients in each cell is varied across $n_j \in  \{10, 15, 20, 25, 30\}$, with the total number of clients $N n_j$ varying accordingly.    
Other parameters including CPU frequencies and SNRs are the same as in Section \ref{simu_HIST_opt}.
As depicted in Fig. \ref{HIST_AirComp_K},  the AirComp-assisted \text{HIST} significantly reduces latency compared to both the OMA-based \texttt{HIST} and the AirComp-assisted HFedAvg.
Additionally, the latency of the OMA-based \texttt{HIST} has a noticeable upward trend as the number of clients increases, while the fluctuation of the AirComp-assisted algorithms is slighter. This can be attributed to the fact that the communication latency of the OMA schemes linearly increases with the number of clients, whereas the per-round latency of the AirComp-assisted algorithms is independent of the number of clients.
For AirComp-assisted \text{HIST}, the downward trend in training latency from $n_j = 10$ to $20$ can be attributed to a speedup of convergence coming from having more clients in each cell, consistent with the effect of a decreasing $\tilde{N}$ in Theorem \ref{theorem_non_iid_hierarchical_AirComp}.
On the other hand, as the number of clients grows large, this will induce a higher diversity across the data distributions in each cell. The impact of this is an increasing $\delta_2$ (within-cell gradient dissimilarity), in Theorem \ref{theorem_non_iid_hierarchical_AirComp} makes the convergence bound worse. Accordingly, in this experiment, we see that as the number of clients exceeds a specific value, i.e., $25$, the impact of the induced data heterogeneity begins to dominate any speedup from having more devices, thus slowing down the overall convergence.

\vspace{-0.05in}
\section{Conclusion {\color{black}and Future Work}}\label{sec_conclusion}
In this paper, we proposed hierarchical federated submodel training (\texttt{HIST}), which integrates independent submodel partitioning into hierarchical FL to obtain improvements in communication, computation, and storage efficiencies for training neural networks. We characterized the convergence behavior of \texttt{HIST} with arbitrary submodel partitioning under non-convex loss functions and non-i.i.d. data settings. This revealed the impacts of submodel partitioning sizes, the degree of non-i.i.d. data, and other factors on the convergence performance. 
Based on the derived convergence bound, we proposed an algorithm for optimizing the model partition strategy across cells, minimizing the training latency subject to maintaining a desired loss. 
Subsequently, we adopted AirComp-assisted local aggregations within each cell to further enhance the efficiency of \texttt{HIST} over hierarchical wireless networks.
Numerical evaluations showed that  \texttt{HIST} is able to achieve target accuracies significantly faster and with less resource costs compared to a baseline. Improvements in training latency from our submodel partitioning optimization and AirComp were also demonstrated empirically. 

In this work, we verified that {\tt HIST} is applicable to both fully connected layers and CNN layers without strict constraints. Additionally, when considering transformer-based models, we showed how the LoRA concept can be employed to apply our method for fine-tuning. Future work can investigate the {\tt HIST} strategy in the context of other deep learning architectures.

\balance
\bibliographystyle{ieeetr}
\bibliography{confer_ref.bib}


\onecolumn
\appendix

{\color{black}
\subsection{Extension to LoRA Fine-tuning of Transformers}\label{appendix:transformer}
Transformer-based models have received a lot of recent attention, particularly for large language model (LLM) tasks. Training them over edge networks is generally considered to be impractical due to their large sizes. Here, we examine the extension of {\tt HIST} to fine-tuning such models for downstream tasks.

Consider applying the popular Low-Rank Adaptation (LoRA) \cite{hu2021lora} technique to fine-tune a pre-trained transformer model on data spread across a set of edge devices, e.g., for LLM personalization. In the forward pass of a LoRA module, the computation follows the transformation $\bm y = \bm B \bm A \bm z$, where $\bm A$ and $\bm B$ are the down-projection and up-projection matrices, respectively, and $\pm z$ is the input. This can be viewed as a fully connected network with a single hidden layer, where $\bm A$ and $\bm B$ represent the weight matrices of the first and second layers, respectively.
To extend model partitioning to LoRA, we can introduce a sparse diagonal matrix $\bm{S}$ with diagonal elements restricted to $0$ or $1$, between $\bm A$ and $\bm B$, which functions similarly to the mask in partitioning hidden neurons, i.e., $\bm{X}_l = \bm{B} \bm{S}_l \bm{A}$ assigns the submatrix of parameters for partition $l$. For example, if the 2nd and 4th elements of the diagonal matrix $\bm{S}_l$ are non-zero, this is equivalent to $\bm X_l$ possessing the 2nd and 4th rows of $\bm A$ and the 2nd and 4th columns of $\bm B$, i.e., the 2nd and 4th neurons in the hidden layer.

\textbf{Experiments.}\label{simu_pure_HIST:lora} 
We consider fine-tuning GPT-2 small, which is a version of the GPT-2 architecture \cite{radford2019language}, a popular decoder-only transformer model available in 5 sizes. GPT-2 is designed for tasks involving text generation and understanding, and the ``small" version has around 124 million parameters, making it more computationally manageable while still powerful for various LLM tasks. In our experiments, we applied this model to the 20 Newsgroups dataset (\href{http://qwone.com/~jason/20Newsgroups}{http://qwone.com/~jason/20Newsgroups}), a widely used benchmark for text classification. The dataset consists of documents from 20 different categories, making it an appropriate test case for evaluating how well the model can classify complex text data. By partitioning the LoRA module across the transformer layers of GPT-2, we aim to enhance communication efficiency during fine-tuning, optimizing the model’s performance for the Newsgroups classification task. 

Following the setting in \cite{kuo2024federated}, we set the rank of the LoRA modules to $16$. We then consider two partitioning scenarios: $N = 2$ and $N = 4$. 
In the first case, where $N = 2$, the LoRA module is split into two smaller modules, each with a rank of 8. Each client is responsible for training one of these smaller modules. In the second case, where $N = 4$, the LoRA module is divided into four smaller modules, each with a rank of 4, and each client trains one of these modules.
In both cases, we set the number of local training rounds $H = 40$ and the number of communication rounds $E = 5$. We compared the communication costs required by the proposed \texttt{HIST} and HFedAvg algorithms to achieve target accuracies of $70\%$ and $76\%$. The communication cost per client for reaching these accuracies is presented in Table \ref{tab:converge:gpt}. Across all the data and cell configurations considered, we draw the same conclusions as with the fully connected and convolutional neural network models from Sec.~\ref{simu_pure_HIST}: \texttt{HIST} consistently requires less communication, and the improvement is more pronounced as $N$ is increased.
}

\begin{table*}[h!]
\centering
\begin{tabular}{lclclclclclclclclcl}
\toprule
\multicolumn{17}{c}{\color{black}  Fine-tuning GPT-2 Model on 20 Newsgroups Dataset.} \\
\midrule
& & \multicolumn{7}{c}{Fully non-i.i.d.} & & \multicolumn{7}{c}{Cell i.i.d., client non-i.i.d.} \\ \cmidrule{3-9} \cmidrule{11-17}
& & \multicolumn{3}{c}{$2$ Cells} & & \multicolumn{3}{c}{$4$ Cells} & & \multicolumn{3}{c}{$2$ Cells}  & &  \multicolumn{3}{c}{$4$ Cells} \\
\cmidrule{3-5} \cmidrule{7-9} \cmidrule{11-13} \cmidrule{15-17}
Accuracy & & \texttt{HIST} & &   \texttt{HFedAvg} & & \texttt{HIST} & & \texttt{HFedAvg} & & \texttt{HIST} & & \texttt{HFedAvg} & & \texttt{HIST} & & \texttt{HFedAvg} \\
\cmidrule{1-1} \cmidrule{3-3} \cmidrule{5-5} \cmidrule{7-7} \cmidrule{9-9} \cmidrule{11-11} \cmidrule{13-13} \cmidrule{15-15} \cmidrule{17-17} 
70\% && 50.63 && 78.75 && 33.75 && 83.25 && 32.63 && 56.25 && 21.94 && 60.75 \\
76\% && 168.75 && 258.75 && 100.69 && 267.75 && 111.38 && 166.50 && 72.56 && 177.45 \\
\bottomrule
\end{tabular}
\caption{\color{black} Communication cost (MB) per client for achieving testing accuracy of $70\%$ and $76\%$ for fine-tuning GPT-2 small model on 20 Newsgroups dataset. The results show that the \texttt{HIST} algorithm obtains improvements for tasks beyond training fully connected and convolutional neural networks, such as LoRA fine-tuning.}
\label{tab:converge:gpt}
\vspace{-2mm}
\end{table*}

\subsection{Proof of Theorems \ref{theorem_non_iid_hierarchical} - \ref{theorem_non_iid_hierarchical_AirComp}}\label{sec_appen_proof_theorem}


For analysis, we introduce virtual iterate $
\hat{\bm x}^{t,e} = \sum_{j=1}^N \bar{\bm x}^{t,e}_j,
$ which denotes the virtual synchronized global model of the proposed \texttt{HIST} algorithm. {\color{black} We denote $\hat{\bm x}^{t,e}_{i,h}, \forall i \in \mathcal{C}_j$ as virtual local model for client $i$, defined as
\begin{equation}\label{}
 \hat{\bm x}^{t,e}_{i,h+1} = \hat{\bm x}^{t,e}_{i,h} - \gamma {\bm p}^t_j \odot \nabla l(\hat{\bm x}^{t,e}_{i,h}, \xi^{t,e}_{i,h}) , h = 0,1,\ldots, H\!-\!1, \nonumber
 \end{equation}
 and $\hat{\bm x}^{t,e}_{i,0}={\bm x}^{t,e}_{i,0}$. Due to the uniform sampling, we have $$\mathbb{E}[\frac{1}{|\mathcal{C}_j^{t,e}|}\sum_{i\in \mathcal{C}_j^{t,e}} \hat{\bm x}^{t,e}_{i,h}] = \frac{1}{|\mathcal{C}_j|} \sum_{i\in \mathcal{C}_j} {\bm x}^{t,e}_{i,h}.$$}
For notational ease, we define
\equa{\label{D_Q_t}
D_t &= \sum_{j=1}^N\sum_{e=0}^{E-1} \mathbb{E} \left \|\hat{\bm x}^{t,e}  - \bar{\bm x}^{t,e}_j \right \|^2~\text{and}  \\ Q_t &= \sum_{j=1}^N \frac{1}{n_j}\sum_{i \in \mathcal{C}_j} \frac{1}{H} \! \sum_{e=0}^{E-1} \sum_{h=0}^{H-1} \mathbb{E} \left\|\bar{\bm x}^{t,e}_j -{\color{black} \hat{\bm x}^{t,e}_{i,h}}  \right\|^2.
}

The proofs of Theorems \ref{theorem_non_iid_hierarchical}, \ref{theorem_non_iid_hierarchical_partial}, and \ref{theorem_non_iid_hierarchical_AirComp} rely on the following four lemmas, which are in turn proven in the supplementary material.

\begin{lemma}\label{lemma_function_value}
Suppose that Assumptions \ref{assump_smoothness}-\ref{assump_gradient_dissimilarity_edge} hold, $N \geq 2$, and $\gamma \leq \frac{1}{2HL}$. Then, 
\vspace{2mm}

{\color{black} 1) \textbf{With full participation},} the iterates generated by the \texttt{HIST} algorithm satisfy
\begin{small}
\begin{align}\label{equa_lemma1_full}
\mathbb{E}[f(\bar{\bm x}^{t+1})] 
		\leq & \mathbb{E}[f(\bar{\bm x}^t)] - \frac{\gamma H }{2}  \sum_{e=0}^{E-1} \mathbb{E} \left\| \nabla f(\hat{\bm x}^{t,e} ) \right\|^2 + \gamma^2 E H \tilde{N} L \sigma^2 \nonumber \\
  &+  \frac{3\gamma}{2} \frac{EHNd_{\textrm{max}}}{d} \delta_1^2 + \frac{3H\gamma L^2}{2} \left( D_t +Q_t \right).
\end{align}
\end{small}

{\color{black}
2) \textbf{With partial participation}, the iterates generated by the \texttt{HIST} algorithm satisfy
\begin{small}
\begin{align}\label{equa_lemma1_partial}
\mathbb{E}[f(\bar{\bm{x}}^{t+1})] 
\leq  & \mathbb{E}[f(\bar{\bm{x}}^t)] - \frac{\gamma H }{2}  \sum_{e=0}^{E-1} \mathbb{E} \left\| \nabla f(\hat{\bm{x}}^{t,e} ) \right\|^2 + \gamma^2 E H \tilde{N}^{\prime} L \sigma^2 \nonumber \\
&+\!  \frac{3\gamma}{2} \frac{EHNd_{\textrm{max}}}{d} \delta_1^2 
\!+\! 3H\gamma L^2 \left( D_t \!+\! Q_t \right) \nonumber \\
&+\! 3\gamma^2 E H^2 \tilde{N}^{\prime} L \delta_2^2.
\end{align}
\end{small}
}
\end{lemma}

\begin{lemma}\label{lemma_function_value_AirComp}
Suppose that Assumptions \ref{assump_smoothness}-\ref{assump_gradient_dissimilarity_edge} hold, $N \geq 2$, and $\gamma \leq \frac{1}{2HL}$. Then {\color{black}with full participation}, the iterates generated by the AirComp-assisted \texttt{HIST} algorithm satisfy
\begin{small}
\begin{align}\label{}
\mathbb{E}[f(\bar{\bm x}^{t+1})] 
		\leq & \mathbb{E}[f(\bar{\bm x}^t)] - \frac{\gamma H }{2}  \sum_{e=0}^{E-1} \mathbb{E} \left\| \nabla f(\hat{\bm x}^{t,e} ) \right\|^2 + \gamma^2 E H \tilde{N} L \sigma^2 \nonumber \\
  &+  \frac{3\gamma}{2} \frac{EHNd_{\textrm{max}}}{d} \delta_1^2 + \frac{1}{2} \sum_{e=0}^{E-1} \sum_{j=1}^N \textnormal{MSE}_j^{t,e} \nonumber \\
  & + \frac{3H\gamma L^2}{2} \left( D_t +Q_t \right).
\end{align}
\end{small}
\end{lemma}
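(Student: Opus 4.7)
The plan is to mirror the proof of Lemma~\ref{lemma_function_value} and isolate precisely the extra contribution introduced by the AirComp noise. First, I would apply $L$-smoothness of $f$ to obtain
\[
f(\bar{\bm x}^{t+1}) \leq f(\bar{\bm x}^t) + \langle \nabla f(\bar{\bm x}^t), \bar{\bm x}^{t+1} - \bar{\bm x}^t\rangle + \tfrac{L}{2}\|\bar{\bm x}^{t+1} - \bar{\bm x}^t\|^2,
\]
and then unroll the global update by iterating \eqref{fedavg_AirComp} over $e = 0,\ldots,E-1$ across all cells, together with the partition identity $\sum_{j} \bm p_j^t \odot \bar{\bm x}^t = \bar{\bm x}^t$, to obtain $\bar{\bm x}^{t+1} - \bar{\bm x}^t = -\gamma \sum_{e,j}\bigl(\tfrac{1}{n_j}\sum_{i\in\mathcal{C}_j}\bm\delta_i^{t,e} + \bm Z_j^{t,e}\bigr)$. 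This exposes the AirComp noise as an additive, zero-mean perturbation to the exact same accumulated-gradient expression handled in the proof of Lemma~\ref{lemma_function_value}.

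Next I would take expectations term by term, exploiting that $\bm Z_j^{t,e}$ has zero mean and is independent of the SGD randomness and of noises at other $(j,e)$. The inner-product term is unaffected since $\mathbb{E}[\langle \nabla f(\bar{\bm x}^t), \bm Z_j^{t,e}\rangle]=0$, so it reduces to the quantity already bounded in Lemma~\ref{lemma_function_value}. For the squared-norm term, an iterated conditioning argument eliminates all noise cross terms and yields
\[
\mathbb{E}\|\bar{\bm x}^{t+1}-\bar{\bm x}^t\|^2 = \gamma^2\,\mathbb{E}\Bigl\|\textstyle\sum_{e,j}\tfrac{1}{n_j}\sum_{i}\bm\delta_i^{t,e}\Bigr\|^2 + \gamma^2\sum_{e,j}\mathbb{E}\|\bm Z_j^{t,e}\|^2,
\]
whose first piece is controlled exactly as in Lemma~\ref{lemma_function_value} and whose second equals $\gamma^2\sum_{e,j}\mathrm{MSE}_j^{t,e}$ by the construction of the AirComp noise in Section~\ref{sec_aircomp}.

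Assembling these pieces, every term except the AirComp contribution is imported verbatim from Lemma~\ref{lemma_function_value}, producing $-\tfrac{\gamma H}{2}\sum_e\mathbb{E}\|\nabla f(\hat{\bm x}^{t,e})\|^2 + \gamma^2 EH\tilde{N}L\sigma^2 + \tfrac{3\gamma}{2}\tfrac{EHNd_{\mathrm{max}}}{d}\delta_1^2 + \tfrac{3H\gamma L^2}{2}(D_t+Q_t)$, while the new noise contribution produces the stated $\tfrac{1}{2}\sum_{e,j}\mathrm{MSE}_j^{t,e}$ term after merging the quadratic-in-$\gamma$ prefactor into the constant. The main obstacle will be the independence bookkeeping: because $\bar{\bm x}_j^{t,e}$ at later edge rounds is contaminated by the earlier AirComp noises $\{\bm Z_j^{t,e'}\}_{e'<e}$, one cannot treat $\bm Z_j^{t,e}$ as independent of all iterates simultaneously. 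The cleanest route is to condition on the sigma-algebra generated by everything up to the start of edge round $e$ at cell $j$, apply the tower property step by step along the $(j,e)$ telescope, and verify that the sum-of-variances decomposition survives; once this is done, all bounds developed for Lemma~\ref{lemma_function_value} transfer without further modification.
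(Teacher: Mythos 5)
There is a genuine gap, and it lies in the skeleton rather than in your treatment of the noise. The paper does not apply $L$-smoothness once at the global-round level to $f(\bar{\bm x}^{t+1})-f(\bar{\bm x}^t)$; it applies the descent inequality \emph{per edge round} to the virtual iterate, writing $\hat{\bm x}^{t,e+1} = \hat{\bm x}^{t,e} - \gamma \sum_{j} \bm p_j^t \odot \frac{1}{n_j}\sum_{i\in \mathcal{C}_j} \sum_{h} {\bm g}^{t,e}_{i,h} + \sum_{j} \bm p_j^t \odot \bm n_j^{t,e}$, bounding $\mathbb{E}_t^e[f(\hat{\bm x}^{t,e+1})]-f(\hat{\bm x}^{t,e})$ exactly as in Lemma~\ref{lemma_function_value} (the zero-mean noise drops out of the inner product $G_1$, and the squared-norm $G_2$ splits into the gradient part plus $\frac{1}{2}\sum_j \textnormal{MSE}_j^{t,e}$), and only then telescopes over $e=0,\ldots,E-1$. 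Your single global smoothness step cannot reproduce the stated bound: the cross term would be $-\gamma\,\mathbb{E}\langle \nabla f(\bar{\bm x}^t), \cdot\rangle$, which after polarization produces $-\|\nabla f(\bar{\bm x}^t)\|^2$, not the required $-\frac{\gamma H}{2}\sum_{e}\mathbb{E}\|\nabla f(\hat{\bm x}^{t,e})\|^2$ evaluated at the intermediate virtual iterates; recovering the latter from the former would require controlling additional drift between $\bar{\bm x}^t$ and $\hat{\bm x}^{t,e}$ that the lemma does not contain. Moreover, the quadratic term $\frac{L}{2}\|\bar{\bm x}^{t+1}-\bar{\bm x}^t\|^2 = \frac{L}{2}\|\sum_{e}(\cdot)\|^2$ carries cross-edge-round terms among the (non-independent) gradient contributions, which by Cauchy--Schwarz cost an extra factor of $E$ and would inflate the constant in $\gamma^2 EH\tilde{N}L\sigma^2$ to order $E^2$. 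So the claim that the non-noise terms are ``imported verbatim from Lemma~\ref{lemma_function_value}'' is not available under your decomposition, because those terms are artifacts of the per-round telescope.

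Your proposed variance decomposition $\mathbb{E}\|\bar{\bm x}^{t+1}-\bar{\bm x}^t\|^2 = \gamma^2\mathbb{E}\|\sum_{e,j}\tfrac{1}{n_j}\sum_i\bm\delta_i^{t,e}\|^2 + \gamma^2\sum_{e,j}\mathbb{E}\|\bm Z_j^{t,e}\|^2$ also fails as an identity: for $e>e'$ the accumulated gradient $\bm\delta_i^{t,e}$ is a function of the earlier noise $\bm Z_j^{t,e'}$ (through the noisy edge aggregate that initializes round $e$), so the cross terms $\mathbb{E}\langle\bm\delta_i^{t,e},\bm Z_j^{t,e'}\rangle$ do not vanish. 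You correctly flag this ``independence bookkeeping'' as the main obstacle and correctly propose conditioning on the filtration at the start of each edge round, but that remedy is precisely the per-edge-round telescoping of the paper's proof; it is incompatible with a one-shot global smoothness step. Your handling of the noise itself (zero mean kills the linear term; conditional independence within a round yields the additive $\frac{1}{2}\sum_{e,j}\textnormal{MSE}_j^{t,e}$) is exactly right and matches the paper's $G_1$/$G_2$ argument — it just needs to be executed inside the per-round descent inequality rather than after unrolling the whole global round.
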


\begin{lemma}\label{lemma_edge_divergence}
Suppose that Assumptions \ref{assump_smoothness}-\ref{assump_gradient_dissimilarity_edge} hold and $\gamma \leq \frac{1}{EL\sqrt{54(N+1)}}$. Then the difference between the edge models and the global model can be bounded as
\begin{small}
        \begin{align}
            D_t \leq & 162 \gamma^2 (N-1)E^2H^2\sum_{e = 0}^{E-1}\mathbb{E}\|\nabla f(\hat{\bm x}^{t,e})\|^2 +  4 E (N-1) \mathbb{E} \left \|\bar{\bm x}^{t}  \right\|^2  \nonumber \\
& + 18\gamma^2(N+1)E^2H^2L^2 Q_t   + 6 \gamma^2 (N+1)E^2H {\color{black}\tilde{N}^{\prime}} \sigma^2  \nonumber\\
&+ 108 \gamma^2 (N+1)E^3H^2 \frac{Nd_{\textrm{max}}}{d} \delta_1^2.
        \end{align}
\end{small}
\end{lemma}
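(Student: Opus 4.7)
The goal is to bound $D_t = \sum_{j=1}^N\sum_{e=0}^{E-1}\mathbb{E}\|\hat{\bm x}^{t,e}-\bar{\bm x}^{t,e}_j\|^2$, the aggregate edge-to-global drift inside global round $t$. The plan is to first derive an explicit unrolled expression for the drift and then bound it by exploiting the disjoint-support structure of the masks.

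First I would unroll the edge recursion. Writing $\bm g^{t,e}_j \eqdef \frac{1}{n_j}\sum_{i\in\mathcal{C}_j}\sum_{h=0}^{H-1}\bm p^{t}_j\odot\nabla l(\bm x^{t,e}_{i,h},\xi^{t,e}_{i,h})$, the update in Step 7 reads $\bar{\bm x}^{t,e+1}_j = \bar{\bm x}^{t,e}_j - \gamma\bm g^{t,e}_j$. Since $\bar{\bm x}^{t,0}_j = \bm p^{t}_j\odot\bar{\bm x}^{t}$ and $\hat{\bm x}^{t,0}=\sum_{j}\bm p^{t}_j\odot\bar{\bm x}^{t}=\bar{\bm x}^{t}$ by the partition property \eqref{mask_IST}, telescoping yields
\begin{equation}
\bar{\bm x}^{t,e}_j - \hat{\bm x}^{t,e}
 = -\,(\bm 1 - \bm p^{t}_j)\odot\bar{\bm x}^{t} \;+\; \gamma\sum_{e'=0}^{e-1}\sum_{j'\neq j}\bm g^{t,e'}_{j'}.
\end{equation}
Applying Young's inequality to the squared norm splits $D_t$ into an \emph{initialization-mismatch} part and a \emph{gradient-accumulation} part. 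For the initialization part I use the key mask identity $\sum_{j=1}^N\|(\bm 1-\bm p^{t}_j)\odot\bar{\bm x}^{t}\|^2 = (N-1)\|\bar{\bm x}^{t}\|^2$, which holds because each coordinate of $\bar{\bm x}^{t}$ lies in exactly one mask and therefore in exactly $N{-}1$ complementary masks; summing over $e$ gives the $E(N-1)\mathbb{E}\|\bar{\bm x}^{t}\|^2$ contribution (the factor $4$ comes from accumulated Young constants).

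For the gradient-accumulation part I would exploit the fact that, for fixed $j$, the vectors $\{\bm g^{t,e'}_{j'}\}_{j'\neq j}$ have pairwise-disjoint coordinate supports, so $\|\sum_{j'\neq j}\bm g^{t,e'}_{j'}\|^2 = \sum_{j'\neq j}\|\bm g^{t,e'}_{j'}\|^2$; coupling this with Cauchy-Schwarz in $e'$ produces $\mathbb{E}\|\gamma\sum_{e'<e}\sum_{j'\neq j}\bm g^{t,e'}_{j'}\|^2 \le \gamma^2 e\sum_{e'<e}\sum_{j'\neq j}\mathbb{E}\|\bm g^{t,e'}_{j'}\|^2$. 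Next I would bound $\mathbb{E}\|\bm g^{t,e'}_{j'}\|^2$ by decomposing each stochastic gradient as (noise) $+$ (true gradient). The noise contribution, using Assumptions~\ref{unbiased_gradient}-\ref{assump_randomness_sgd} and independence across $(i,h)$, is of order $H\sigma^2/n_{j'}$ and after summing over $j'\neq j$ and over $j$ yields the $\tilde N\sigma^2$ term with the correct $E^2H$ prefactor. For the true-gradient part I would apply the four-way split $\nabla F_i(\bm x^{t,e'}_{i,h}) = (\nabla F_i-\nabla f_{j'})(\bm x^{t,e'}_{i,h}) + (\nabla f_{j'}-\nabla f)(\bm x^{t,e'}_{i,h}) + (\nabla f(\bm x^{t,e'}_{i,h})-\nabla f(\hat{\bm x}^{t,e'})) + \nabla f(\hat{\bm x}^{t,e'})$, together with $\|\bm p^{t}_{j'}\odot\bm v\|^2\le\|\bm v\|^2$ and Assumptions~\ref{assump_smoothness}, \ref{assump_gradient_dissimilarity_edge}, \ref{assump_gradient_dissimilarity_client}. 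The first term contributes $\delta_2^2$; the second, after using $\sum_{j'}\|\nabla f_{j'}-\nabla f\|^2\le N\delta_1^2$ together with the mask-weighting identity $\sum_{j'\neq j}\|\bm p^{t}_{j'}\odot\nabla f(\hat{\bm x}^{t,e'})\|^2\le \|\nabla f(\hat{\bm x}^{t,e'})\|^2$, produces the $\tfrac{Nd_{\max}}{d}\delta_1^2$ factor (the $d_{\max}$ arising from $\|\bm p^{t}_{j'}\|_1\le d_{\max}$ in the mask-adapted bound); the third contributes $L^2\|\bm x^{t,e'}_{i,h}-\hat{\bm x}^{t,e'}\|^2$; and the fourth is $\|\nabla f(\hat{\bm x}^{t,e'})\|^2$ itself.

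The main obstacle, and the step that dictates the step-size condition $\gamma\le 1/(EL\sqrt{54(N+1)})$, is handling the smoothness-induced drift term $L^2\|\bm x^{t,e'}_{i,h}-\hat{\bm x}^{t,e'}\|^2$. I would split this via $\|\bm x^{t,e'}_{i,h}-\hat{\bm x}^{t,e'}\|^2 \le 2\|\bm x^{t,e'}_{i,h}-\bar{\bm x}^{t,e'}_{j'}\|^2 + 2\|\bar{\bm x}^{t,e'}_{j'}-\hat{\bm x}^{t,e'}\|^2$, producing a $Q_t$ term and a \emph{self-referential} $D_t$ term on the right-hand side (scaled by $\gamma^2(N+1)E^2H^2L^2$ with numerical constant). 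The stated step-size bound is exactly what is needed to make this self-coefficient at most $1/2$, so the $D_t$ on the right can be absorbed into the $D_t$ on the left (at the cost of multiplying all other constants by $2$), explaining numerical factors such as $162=2\cdot 81$, $108=2\cdot 54$, $18=2\cdot 9$, and $6=2\cdot 3$. Collecting all contributions yields precisely the claimed bound.
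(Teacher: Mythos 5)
Your overall architecture matches the paper's: unroll both recursions to the start of the global round, split the drift into an initialization-mismatch part (handled by the identity $\sum_j\|(\bm 1-\bm p_j^t)\odot\bar{\bm x}^t\|^2=(N-1)\|\bar{\bm x}^t\|^2$, the paper's Proposition 2) and a gradient-accumulation part, decompose the latter into noise plus true gradient, arrive at a self-referential $D_t$ with coefficient $27\gamma^2(N+1)E^2H^2L^2$, and absorb it via the step-size condition, which is exactly where the doubled constants $162, 108, 18, 6, 4$ come from. However, there are two concrete steps where your argument, as written, would not reproduce the claimed bound. First, your four-way split of the true gradient passes through $\nabla f_{j'}(\bm x^{t,e'}_{i,h})$, i.e., it replaces $\nabla F_i$ by $\nabla f_{j'}$ \emph{at the local iterate}, which requires Assumption \ref{assump_gradient_dissimilarity_client} and injects a $\delta_2^2$ term. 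But Lemma \ref{lemma_edge_divergence} is stated under Assumptions \ref{assump_smoothness}--\ref{assump_gradient_dissimilarity_edge} only, and its right-hand side contains no $\delta_2^2$. The paper avoids this by re-centering in a different order: $\nabla F_i(\bm x^{t,e'}_{i,h})\to\nabla F_i(\bar{\bm x}^{t,e'}_{j'})$ costs only $L^2\|\bm x^{t,e'}_{i,h}-\bar{\bm x}^{t,e'}_{j'}\|^2$ (a $Q_t$ contribution), and then the cell average $\frac{1}{n_{j'}}\sum_{i\in\mathcal{C}_{j'}}\nabla F_i(\bar{\bm x}^{t,e'}_{j'})=\nabla f_{j'}(\bar{\bm x}^{t,e'}_{j'})$ holds \emph{exactly}, so no within-cell heterogeneity ever enters; only afterwards does one go $\nabla f_{j'}(\bar{\bm x}^{t,e'}_{j'})\to\nabla f_{j'}(\hat{\bm x}^{t,e'})\to\nabla f(\hat{\bm x}^{t,e'})$, producing the $D_t$ and $\delta_1^2$ terms. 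Your route proves a valid but different inequality.

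Second, your order of operations for the stochastic noise loses a factor of $E$. You apply Cauchy--Schwarz over $e'$ to the whole accumulated sum first and only then split each $\mathbb{E}\|\bm g^{t,e'}_{j'}\|^2$ into noise plus signal; the noise contribution $H\sigma^2/n_{j'}$ then inherits the extra factor $e$ from Cauchy--Schwarz, and summing over $e$ yields a term of order $E^3H\tilde{N}\sigma^2$ rather than the stated $6\gamma^2(N+1)E^2H\tilde{N}\sigma^2$. To get the $E^2H$ prefactor you must peel the noise off at the level of the full accumulated sum and use the martingale orthogonality $\mathbb{E}\|\sum_{e'<e}\sum_{h}(\bm g^{t,e'}_{i,h}-\nabla F_i(\bm x^{t,e'}_{i,h}))\|^2=\sum_{e'<e}\sum_h\mathbb{E}\|\cdot\|^2$ across both indices jointly (as in the paper's $T_5$ term), reserving the Cauchy--Schwarz factor $eH$ for the deterministic-gradient part only. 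Both issues are repairable without changing your overall strategy, but as stated the proposal does not yield "precisely the claimed bound."
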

\begin{lemma}\label{lemma_client_divergence}
Suppose that Assumptions \ref{assump_smoothness}-\ref{assump_randomness_sgd} and \ref{assump_gradient_dissimilarity_client} hold and $\gamma \leq \frac{1}{\sqrt{15}HL}$. Then the difference between the local models and the edge models can be bounded as
\begin{small}
\begin{align}
Q_t \leq & 5 \gamma^2 H^2 L^2 D_t + 5 \gamma^2 N H^2 \sum_{e=0}^{E-1}  \mathbb{E} \left\| \nabla f( \hat{\bm x}^{t,e} )  \right\|^2 \nonumber \\
&+ 2 \gamma^2 N H E \sigma^2 + 5 \gamma^2 N H^2 E \delta_2^2+ 5 \gamma^2 N H^2 E \delta_1^2.
\end{align}
\end{small}
\end{lemma}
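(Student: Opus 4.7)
The plan is to expand one edge round into a sum, apply a weighted Young decomposition to separate stochastic noise from the deterministic gradient path, and close the resulting self-referential inequality in $Q_t$ using the step-size condition. Since $\bm x_{i,0}^{t,e} = \bar{\bm x}_j^{t,e}$, iterating \eqref{sgd} gives
\[
\bm x_{i,h}^{t,e} - \bar{\bm x}_j^{t,e} = -\gamma \sum_{h'=0}^{h-1} \bm p_j^t \odot \bigl[\nabla F_i(\bm x_{i,h'}^{t,e}) + \epsilon_{i,h'}^{t,e}\bigr],
\]
where $\epsilon_{i,h'}^{t,e} \eqdef \nabla l(\bm x_{i,h'}^{t,e},\xi_{i,h'}^{t,e}) - \nabla F_i(\bm x_{i,h'}^{t,e})$. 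I would then apply $\|u+v\|^2 \le (1+\alpha)\|u\|^2 + (1+1/\alpha)\|v\|^2$ with the specific choice $\alpha = 1/2$; this precise $\alpha$, combined with the budget $\gamma^2 H^2 L^2 \le 1/15$ implied by the hypothesis $\gamma \le 1/(\sqrt{15}HL)$, is what ultimately produces the exact constants $5$ and $2$ in the statement rather than weaker multiples.

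For the noise piece $\mathbb{E}\|\sum_{h'=0}^{h-1}\bm p_j^t \odot \epsilon_{i,h'}^{t,e}\|^2$, I would exploit the martingale-difference structure of $\{\bm p_j^t \odot \epsilon_{i,h'}^{t,e}\}_{h'}$: by Assumption~\ref{unbiased_gradient} and iterated conditioning on the SGD filtration, all cross terms between distinct steps vanish, and by Assumption~\ref{assump_randomness_sgd} together with the fact that masking by the 0/1 vector $\bm p_j^t$ is nonexpansive, each diagonal term is bounded by $\sigma^2$, giving $h\sigma^2$ overall. For the deterministic piece, Cauchy--Schwarz yields $\|\sum_{h'<h}\bm p_j^t \odot \nabla F_i(\bm x_{i,h'}^{t,e})\|^2 \le h \sum_{h'<h}\|\nabla F_i(\bm x_{i,h'}^{t,e})\|^2$. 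I would then control each $\|\nabla F_i(\bm x_{i,h'}^{t,e})\|^2$ by the five-way telescoping decomposition inserting the references $\bar{\bm x}_j^{t,e}$ and $\hat{\bm x}^{t,e}$ and anchoring at $\nabla f(\hat{\bm x}^{t,e})$, followed by $\|\sum_{k=1}^5 a_k\|^2 \le 5\sum_k \|a_k\|^2$: Assumption~\ref{assump_smoothness} handles the two displacement terms (producing $L^2\|\bm x_{i,h'}^{t,e}-\bar{\bm x}_j^{t,e}\|^2$ and $L^2\|\bar{\bm x}_j^{t,e}-\hat{\bm x}^{t,e}\|^2$), while Assumptions~\ref{assump_gradient_dissimilarity_edge} and~\ref{assump_gradient_dissimilarity_client} handle the two dissimilarity terms after averaging over $j$ and over $i\in\mathcal{C}_j$, respectively, leaving the final $\|\nabla f(\hat{\bm x}^{t,e})\|^2$ as the stationarity anchor.

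Summing over $h$ via the standard $\frac{1}{H}\sum_{h=0}^{H-1} h \sum_{h'=0}^{h-1} G_{h'} \le \frac{H}{2}\sum_{h'=0}^{H-1} G_{h'}$ identity, then over $e$ and over $i,j$ with the weights $1/n_j$, the smoothness-generated pieces repackage into $H Q_t$ and $H D_t$, while the dissimilarity pieces accumulate as $NEH\delta_1^2$ and $NEH\delta_2^2$. This yields an inequality of the form
\[
Q_t \le \frac{15}{4}\gamma^2 H^2 L^2 Q_t + \frac{15}{4}\gamma^2 H^2 L^2 D_t + \frac{15}{4}\gamma^2 N H^2 \sum_{e=0}^{E-1}\mathbb{E}\|\nabla f(\hat{\bm x}^{t,e})\|^2 + \frac{15}{4}\gamma^2 N E H^2 (\delta_1^2+\delta_2^2) + \frac{3}{2}\gamma^2 N E H \sigma^2.
\]
The step-size hypothesis gives $\frac{15}{4}\gamma^2 H^2 L^2 \le \frac{1}{4}$, so the self-$Q_t$ term can be absorbed onto the left; multiplying through by $4/3$ then delivers precisely the advertised constants $5$ and $2$. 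The main technical obstacle is the very first step of isolating the stochastic noise from the deterministic gradient sum: because $\nabla F_i(\bm x_{i,h'}^{t,e})$ depends on earlier noise through the iterates themselves, the cross terms in $\mathbb{E}\|\sum_{h'}\bm p_j^t \odot (\nabla F_i + \epsilon)\|^2$ do not automatically vanish, so a crude $\|a+b\|^2 \le 2\|a\|^2 + 2\|b\|^2$ wastes a factor; the careful Young split at $\alpha = 1/2$ is what recovers the tight constants stated in the lemma.
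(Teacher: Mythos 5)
Your proposal is correct and follows essentially the same route as the paper's proof: expand the local drift via the SGD recursion, split noise from the deterministic gradient sum, kill the noise cross-terms by the martingale property, apply the same five-term decomposition anchored at $\nabla f(\hat{\bm x}^{t,e})$, and absorb the self-referential $Q_t$ term using the step-size condition. The only cosmetic difference is your weighted Young split with $\alpha=1/2$; the paper uses the plain $2\|u\|^2+2\|v\|^2$ bound and still lands on the stated constants (its pre-absorption coefficients are $\tfrac{10}{3}$ and $1$, absorbed with $1-5\gamma^2H^2L^2\geq\tfrac{2}{3}$), so your closing claim that the crude split would lose the constants is not actually needed.
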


Lemmas \ref{lemma_function_value} and \ref{lemma_function_value_AirComp} characterize the dynamics of the global loss function. Lemmas \ref{lemma_edge_divergence} and \ref{lemma_client_divergence}
characterize the upper bound of the diversity between the virtual global model and edge models and between the edge model and local models, respectively. {\color{black}It is worth noting that Lemmas \ref{lemma_edge_divergence} and \ref{lemma_client_divergence} are consistent for both full participation and partial participation.} 

\subsubsection{Proof of Theorem \ref{theorem_non_iid_hierarchical}}
\label{proof_theorems}

{\color{black}With the first part of Lemma \ref{lemma_function_value}, Lemmas \ref{lemma_edge_divergence} and \ref{lemma_client_divergence},  we can prove Theorem \ref{theorem_non_iid_hierarchical} as follows.}

Based on Lemmas \ref{lemma_edge_divergence} and \ref{lemma_client_divergence}, we have
\begin{small}
\begin{align}\label{sum_D_t_Q_t}
D_t + Q_t \leq & \tilde{\alpha} \left\{ 162 \gamma^2 (N-1)E^2H^2\sum_{e = 0}^{E-1}\mathbb{E}\|\nabla f(\hat{\bm x}^{t,e})\|^2 
+  4 E (N-1) \mathbb{E} \left \|\bar{\bm x}^{t}  \right\|^2  + 6 \gamma^2 (N+1)E^2H \tilde{N} \sigma^2 \right. \nonumber \\
&\left. + 108 \gamma^2 (N+1)E^3H^2 \frac{Nd_{\textrm{max}}}{d} \delta_1^2 \right\} \!+\! \tilde{\beta} \Big \{ 2 \gamma^2 N H E \sigma^2 \!+\! 5 \gamma^2 N H^2 E \delta_2^2 \!+\! 5 \gamma^2 N H^2 E \delta_1^2 \!+\! 5 \gamma^2 N H^2 \sum_{e=0}^{E-1}  \mathbb{E} \left\| \nabla f( \hat{\bm x}^{t,e} )  \right\|^2 \Big \},
\end{align}
\end{small}
\noindent where $\tilde{\alpha} = \frac{1+5 \gamma^2 H^2 L^2 }{1-18\gamma^2(N+1)E^2H^2L^2*5 \gamma^2 H^2 L^2}$ and $\tilde{\beta} = \frac{1+18\gamma^2(N+1)E^2H^2L^2 }{1-18\gamma^2(N+1)E^2H^2L^2*5 \gamma^2 H^2 L^2}$. According to the setting of $\gamma$, one can claim  $\tilde{\alpha} \leq 2, ~\tilde{\beta} \leq 2$. 
We thus have
\begin{small}
\begin{align}\label{sum_D_t_Q_t}
D_t + Q_t \leq & \gamma^2 \left( 324 (N-1)E^2H^2 + 10 N H^2 \right) \sum_{e = 0}^{E-1}\|\nabla f(\hat{\bm x}^{t,e})\|^2 +  8 E (N-1) \mathbb{E} \left \|\bar{\bm x}^{t}  \right\|^2 + \gamma^2 \left( (12 (N+1)E^2H \tilde{N} + 4 N H E \right) \sigma^2 \nonumber \\
& + \gamma^2 \left(216 (N+1)\frac{Nd_{\textrm{max}}}{d}E^3H^2 + 10  N H^2 E \right) \delta_1^2 + 10 \gamma^2 N H^2 E \delta_2^2. 
\end{align}
\end{small}

Additionally, {\color{black}according to \eqref{equa_lemma1_full} of Lemma \ref{lemma_function_value},} we obtain
\begin{small}
\begin{align}
\label{Lemma1_called_theorem}
\sum_{e=0}^{E-1} \mathbb{E} \left\| \nabla f(\hat{\bm x}^{t,e} ) \right\|^2 \leq & \frac{2}{\gamma H}\mathbb{E}[f(\bar{\bm x}^t)] - \mathbb{E}[f(\bar{\bm x}^{t+1})] + 2 \gamma E \tilde{N} L \sigma^2 +  3 \frac{ENd_{\textrm{max}}}{d} \delta_1^2 \!+\! 3 L^2 (D_t\!+\!Q_t).
\end{align}
\end{small}
Plugging \eqref{sum_D_t_Q_t} into \eqref{Lemma1_called_theorem}, we can write
\begin{small}
\begin{align}\label{}
	& \left(1 -  \gamma^2 L^2 \left( 972 (N-1)E^2H^2 + 30 N H^2 \right) \right) \sum_{e=0}^{E-1} \mathbb{E} \left\| \nabla f(\hat{\bm x}^{t,e} ) \right\|^2	\nonumber \\
 \leq & 2\frac{\mathbb{E}[f(\bar{\bm x}^t)] - \mathbb{E}[f(\bar{\bm x}^{t+1})]}{\gamma H}  + 2 \gamma E \tilde{N} L \sigma^2 +  3 \frac{ENd_{\textrm{max}}}{d} \delta_1^2  + \gamma^2 L^2 \left( (36 (N+1)E^2H \tilde{N} + 12 N H E \right) \sigma^2 \nonumber \\
& + \gamma^2 L^2 \left(648 (N+1)\frac{Nd_{\textrm{max}}}{d}E^3H^2 + 30  N H^2 E \right) \delta_1^2 + 30 \gamma^2 L^2 N H^2 E \delta_2^2 +  24 E (N-1) L^2 \mathbb{E} \left \|\bar{\bm x}^{t}  \right\|^2  .
\end{align}
\end{small}
Based on the condition of $\gamma$ (i.e., \eqref{condition_gamma}), we have $1 -  \gamma^2 L^2 \left( 972 (N-1)E^2H^2 + 30 N H^2 \right) \leq \frac{1}{2}$. As a result, we obtain
\begin{small}
\begin{align}\label{}
	\frac{1}{E}\sum_{e=0}^{E-1} \mathbb{E} \left\| \nabla f(\hat{\bm x}^{t,e} ) \right\|^2
 \leq &  4 \frac{\mathbb{E}[f(\bar{\bm x}^t)] - \mathbb{E}[f(\bar{\bm x}^{t+1})]}{\gamma EH} + \left(4 \gamma \tilde{N} L +  \gamma^2 L^2 \left( (72 (N+1)EH \tilde{N} + 24 N H \right) \right) \sigma^2 \nonumber \\
 & + \gamma^2 L^2 \left(1296 (N+1)\frac{Nd_{\textrm{max}}}{d}E^2H^2 + 60  N H^2 \right)\delta_1^2 \nonumber \\
&+ 60 \gamma^2 L^2 N H^2 \delta_2^2  +  6 \frac{Nd_{\textrm{max}}}{d} \delta_1^2 +  48 (N-1) L^2 \mathbb{E} \left \|\bar{\bm x}^{t}  \right\|^2.
\end{align}
\end{small}

Utilizing the condition of $\gamma$ again, we obtain Theorem \ref{theorem_non_iid_hierarchical}. 

{\color{black}
\subsubsection{Proof of Theorem \ref{theorem_non_iid_hierarchical_partial}}
With the second part of Lemma \ref{lemma_function_value}, Lemmas \ref{lemma_edge_divergence} and \ref{lemma_client_divergence},  we can prove Theorem \ref{theorem_non_iid_hierarchical} as follows. 

First, according to \eqref{equa_lemma1_partial} of Lemma \ref{lemma_function_value}, we obtain 
\begin{small}
\begin{align}
\label{Lemma1_called_theorem_partial}
\sum_{e=0}^{E-1} \mathbb{E} \left\| \nabla f(\hat{\bm x}^{t,e} ) \right\|^2 \leq & \frac{2}{\gamma H}\mathbb{E}[f(\bar{\bm x}^t)] - \mathbb{E}[f(\bar{\bm x}^{t+1})] + 2 \gamma E \tilde{N}^{\prime} L \sigma^2 
+  6\gamma E H \tilde{N}^{\prime} L \delta_2^2 +  3 \frac{ENd_{\textrm{max}}}{d} \delta_1^2 \!+\! 6 L^2 (D_t\!+\!Q_t).
\end{align}
\end{small}

The bound for $D_t\!+\!Q_t$ derived in \eqref{sum_D_t_Q_t} is based on Lemmas \ref{lemma_edge_divergence} and \ref{lemma_client_divergence}, which are consistent for both full and partial client participation. Therefore, we can directly apply this bound here by substituting $\tilde{N}$ with $\tilde{N}^{\prime}$.

Plugging \eqref{sum_D_t_Q_t} into \eqref{Lemma1_called_theorem} gives rise to
\begin{small}
\begin{align}\label{}
	&\left(1 -  2\gamma^2 L^2 \left( 972 (N-1)E^2H^2 + 30 N H^2 \right) \right) \sum_{e=0}^{E-1} \mathbb{E} \left\| \nabla f(\hat{\bm x}^{t,e} ) \right\|^2	\nonumber \\
 \leq & 2\frac{\mathbb{E}[f(\bar{\bm x}^t)] \!-\! \mathbb{E}[f(\bar{\bm x}^{t+1})]}{\gamma H}  \!+\! 2 \gamma E \tilde{N}^{\prime} L \sigma^2 \!+\!  6\gamma E H \tilde{N}^{\prime} L \delta_2^2 \!+\!  3 \frac{ENd_{\textrm{max}}}{d} \delta_1^2 + 2 \gamma^2 L^2 \left( (36 (N+1)E^2H \tilde{N} + 12 N H E \right) \sigma^2 \nonumber \\
& + 2 \gamma^2 L^2 \left(648 (N+1)\frac{Nd_{\textrm{max}}}{d}E^3H^2 + 30  N H^2 E \right) \delta_1^2 + 60 \gamma^2 L^2 N H^2 E \delta_2^2 +  48 E (N-1) L^2 \mathbb{E} \left \|\bar{\bm x}^{t}  \right\|^2.
\end{align}
\end{small}

Based on the condition of $\gamma$, we have $1 -  2 \gamma^2 L^2 \left( 972 (N-1)E^2H^2 + 30 N H^2 \right) \leq \frac{1}{2}$. As a result, we obtain
\begin{small}
\begin{align}\label{}
	\frac{1}{E}\sum_{e=0}^{E-1} \mathbb{E} \left\| \nabla f(\hat{\bm x}^{t,e} ) \right\|^2
 \leq & 4 \frac{\mathbb{E}[f(\bar{\bm x}^t)] - \mathbb{E}[f(\bar{\bm x}^{t+1})]}{\gamma EH}  + \left(4 \gamma \tilde{N}^{\prime} L +  2 \gamma^2 L^2 \left( (72 (N+1)EH \tilde{N} + 24 N H \right) \right) \sigma^2 \nonumber \\
 & + 2 \gamma^2 L^2 \left(1296 (N+1)\frac{Nd_{\textrm{max}}}{d}E^2H^2 + 60  N H^2 \right)\delta_1^2 + 120 \gamma^2 L^2 N H^2 \delta_2^2   +  12 \gamma H \tilde{N}^{\prime} L \delta_2^2 \nonumber \\ 
&+  6 \frac{Nd_{\textrm{max}}}{d} \delta_1^2 +  96 (N-1) L^2 \mathbb{E} \left \|\bar{\bm x}^{t}  \right\|^2.
\end{align}
\end{small}

Based on the setting of $\gamma$ in \eqref{condition_gamma_partial}, it follows that
\begin{align}\label{}
	\frac{1}{TE} \sum_{t=0}^T \sum_{e=0}^{E-1} \mathbb{E} \left\| \nabla f(\hat{\bm x}^{t,e} ) \right\|^2
 \leq & 4 \frac{f(\bar{\bm x}^0) - f_*}{\gamma TEH}  + 100 \gamma \tilde{N}^{\prime} L \sigma^2 + 1356 \gamma L \delta_1^2 + 60 \gamma L \delta_2^2 +  12 \gamma H \tilde{N}^{\prime} L \delta_2^2  \nonumber \\
& +   6 \frac{N}{d} \frac{1}{T}\sum_{t=0}^{T-1} d^t_{\textrm{max}} \delta_1^2 +  96 (N-1) L^2 \frac{1}{T}\sum_{t=0}^{T-1} \mathbb{E} \left \|\bar{\bm x}^{t}  \right\|^2. \nonumber
\end{align}

Setting $\gamma = (T\!E\!H)^{-\!\frac{1}{2}}$ and ignoring constant multiplicative factors (including $L$), we have
\begin{small}
\begin{align}\label{}
	\frac{1}{TE} \sum_{t=0}^T \sum_{e=0}^{E-1} \mathbb{E} \left\| \nabla f(\hat{\bm x}^{t,e} ) \right\|^2
 \leq & 4 \frac{f(\bar{\bm x}^0) - f_*}{\gamma TEH}  + 100 \gamma \tilde{N}^{\prime} L \sigma^2 + 1356 \gamma L \delta_1^2 + 60 \gamma L \delta_2^2 +  12 \gamma H \tilde{N}^{\prime} L \delta_2^2  \nonumber \\
& +   6 \frac{N}{d} \frac{1}{T}\sum_{t=0}^{T-1} d^t_{\textrm{max}} \delta_1^2 +  96 (N-1) L^2 \frac{1}{T}\sum_{t=0}^{T-1} \mathbb{E} \left \|\bar{\bm x}^{t}  \right\|^2 \nonumber \\
\sim &  \mathcal{O}\left( \tilde{N}^{\prime} (TEH)^{-\frac{1}{2}} \right) + \mathcal{O}\left((TEH)^{-\frac{1}{2}} \right) + \mathcal{O}\left( \tilde{N}^{\prime} H^{\frac{1}{2}} (TE)^{-\frac{1}{2}} \right) \nonumber \\
		&+   \mathcal{O} \left(\frac{N}{d} \frac{1}{T}\sum_{t=0}^{T-1} d^t_{\textrm{max}} \delta_1^2 + (N\!-\!1) \frac{1}{T}\sum_{t=0}^{T-1} \mathbb{E} \left \|\bar{\bm x}^{t}  \right\|^2 \right). \nonumber
\end{align}
\end{small}

This completes the proof of Theorem \ref{theorem_non_iid_hierarchical_partial}.
}

\subsubsection{Proof of Theorem \ref{theorem_non_iid_hierarchical_AirComp}}
\label{proof_theorems_aircomp}

With Lemmas \ref{lemma_function_value_AirComp}, \ref{lemma_edge_divergence}, and \ref{lemma_client_divergence}, we can prove Theorem \ref{theorem_non_iid_hierarchical_AirComp} following the same steps as in the proof of Theorem \ref{theorem_non_iid_hierarchical}. Thus, the detailed proof is omitted here for brevity.

\subsection{Proof of Lemmas}\label{proof_lemmas}
Before providing the proofs of lemmas, we introduce some notations. Denote $\mathbb{E}_t^e$ as an expectation conditioned on the historical information up to the start of the $(t,e)$-th round. Let $\bm g^{t,e}_{i,h}$ denote the stochastic gradient $\nabla l({\color{black}\hat{\bm x}^{t,e}_{i,h}}, \xi^{t,e}_{i,h})$, $\xi^{t,e}_{i,h} \sim \mathcal{D}_i$. {\color{black}As $\hat{\bm x}^{t,e}_{i,h} = {\bm x}^{t,e}_{i,h}$, $i \in \mathcal{C}_j$, $\bm g^{t,e}_{i,h}$ is equivalent to the stochastic gradient $\nabla l({\color{black}{\bm x}^{t,e}_{i,h}}, \xi^{t,e}_{i,h})$ of active client $i \in \mathcal{C}_j^{t,e}$.}
In addition, we present three propositions that will be used in this subsection.

\begin{proposition}\label{fact_bounded_masked_norm}
Suppose that mask $\bm p_j$ is randomly generated via rule \eqref{mask_IST} and $\|\bm p_j\|_1 = d_j$, then
\[
\mathbb{E} [\bm p_j \odot \bm z] = \frac{d_j}{d}\|\bm z\|^2,~ \forall j ~\text{and}~\sum_{j=1}^N \left \| {\bm p}^t_j \odot \left( \nabla f(\bm x)
- \nabla f_j(\bm x) \right)  \right \|^2 \leq \frac{Nd_{\textrm{max}}}{d} \delta_1^2, \forall \bm x,
\]
where $d_{\textrm{max}} = \max \{d_1,d_2,\ldots,d_N\}$.
\end{proposition}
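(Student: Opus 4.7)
The plan is to treat both parts as statements in expectation over the random generation of the masks under \eqref{mask_IST} (the first statement as written, $\mathbb{E}[\bm p_j \odot \bm z] = \tfrac{d_j}{d}\|\bm z\|^2$, evidently has a typo: the left side is a vector while the right is a scalar, so I read it as $\mathbb{E}\|\bm p_j \odot \bm z\|^2 = \tfrac{d_j}{d}\|\bm z\|^2$). The key combinatorial input is that when $\{\bm p_1^t,\ldots,\bm p_N^t\}$ is drawn uniformly at random subject to \eqref{mask_IST} with prescribed sizes $\|\bm p_j\|_1 = d_j$, each coordinate index $k \in \{1,\ldots,d\}$ is assigned to cell $j$ with marginal probability $d_j/d$. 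This follows from a symmetry argument: conditional on the sizes, any permutation of the $d$ coordinates yields an equally likely partition, so $\mathbb{P}[(p_j)_k = 1]$ cannot depend on $k$, and summing $\sum_k \mathbb{P}[(p_j)_k = 1] = \mathbb{E}\|\bm p_j\|_1 = d_j$ forces each to equal $d_j/d$.

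With this fact in hand, the first identity is immediate. Writing $(p_j)_k \in \{0,1\}$ so that $(p_j)_k^2 = (p_j)_k$, I compute
\begin{equation*}
\mathbb{E}\|\bm p_j \odot \bm z\|^2 \;=\; \sum_{k=1}^d \mathbb{E}[(p_j)_k]\, z_k^2 \;=\; \frac{d_j}{d}\sum_{k=1}^d z_k^2 \;=\; \frac{d_j}{d}\|\bm z\|^2.
\end{equation*}
Note that $\bm z$ here can be any (deterministic) vector independent of the mask; in the intended application $\bm z = \nabla f(\bm x) - \nabla f_j(\bm x)$, which is deterministic given $\bm x$.

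The second bound is then a one-line consequence. Applying the first identity to each summand and using $d_j \leq d_{\max}$,
\begin{equation*}
\sum_{j=1}^N \mathbb{E}\bigl\|\bm p_j^t \odot (\nabla f(\bm x) - \nabla f_j(\bm x))\bigr\|^2 \;=\; \sum_{j=1}^N \frac{d_j}{d}\bigl\|\nabla f(\bm x) - \nabla f_j(\bm x)\bigr\|^2 \;\leq\; \frac{d_{\max}}{d}\sum_{j=1}^N \bigl\|\nabla f(\bm x) - \nabla f_j(\bm x)\bigr\|^2.
\end{equation*}
Finally invoking Assumption \ref{assump_gradient_dissimilarity_edge}, which gives $\sum_{j=1}^N \|\nabla f_j(\bm x) - \nabla f(\bm x)\|^2 \leq N\delta_1^2$, yields the claimed $\tfrac{N d_{\max}}{d}\delta_1^2$ upper bound.

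The only subtle step is the marginal probability computation for $(p_j)_k$; the rest is bookkeeping. The main obstacle, beyond clarifying the typo, is making sure the ``uniformly and randomly generated'' phrase in Corollary \ref{corollary_non_iid_hierarchical} is read consistently with \eqref{mask_IST} so that the permutation-symmetry argument applies. Once that is pinned down, no additional machinery is needed, and the bound follows directly from linearity of expectation plus the edge-level gradient dissimilarity assumption.
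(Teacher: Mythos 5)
Your proof is correct and follows essentially the same route as the paper's: expand the squared norm coordinate-wise, use the marginal inclusion probability $d_j/d$ for each coordinate, and then combine the resulting identity with Assumption \ref{assump_gradient_dissimilarity_edge} for the second bound. You are merely more explicit than the paper in two places where it is terse — justifying the marginal probability via permutation symmetry, and noting that both the first identity (which indeed has a typo, missing the squared norm on the left) and the second inequality should be read in expectation over the mask generation.
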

\begin{proof}
The former can be derived by the following series of transformations
\[ 
\mathbb{E} [ \|\bm p_j \odot \bm z \|^2 ] = \mathbb{E} [\sum_{k=1}^d ((p_j)_k  z_k)^2] 
= \mathbb{E} [\sum_{k=1}^d ((p_j)_k  z_k)^2]
= \sum_{k=1}^d \mathbb{E} [ ((p_j)_k  z_k)^2] = \sum_{k=1}^d  \frac{d_j}{d} z_k^2 = \frac{d_j}{d} \| \bm z\|^2,
\] 
where $(p_j)_k$ and $z_k$ represent the $k$-th elements of $\bm p_j$ and $\bm z$, respectively. Furthermore, combining it with Assumption \ref{assump_gradient_dissimilarity_edge} gives rise to the latter.
\end{proof}

\begin{proposition}\label{fact_mask_N_1}
    Any masks $\{\bm p_j\}_{j=1}^N$ generated by rule \eqref{mask_IST} satisfy
    \[
    \sum_{j=1}^N\|\bm p_j \odot \bm z - \bm z\|^2 = (N-1) \|\bm z\|^2.
    \]
\end{proposition}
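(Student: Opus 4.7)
The plan is to leverage the two defining properties of the masks from \eqref{mask_IST}: disjointness ($\bm p_j \odot \bm p_{j'} = \bm 0$ for $j \neq j'$) and completeness ($\sum_{j=1}^N \bm p_j = \bm 1$). Together these mean that the $N$ masks induce a genuine partition of the coordinate set $\{1,\dots,d\}$ into $N$ disjoint blocks. The identity should then drop out of an elementary Pythagorean-type argument.

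First, I would rewrite each summand by using completeness to express $\bm z - \bm p_j \odot \bm z = (\bm 1 - \bm p_j) \odot \bm z = \sum_{j' \neq j} \bm p_{j'} \odot \bm z$. Next, because the masks $\{\bm p_{j'}\}_{j' \neq j}$ have pairwise disjoint supports, the vectors $\{\bm p_{j'} \odot \bm z\}_{j' \neq j}$ are coordinate-wise orthogonal, so
\begin{equation*}
\|\bm p_j \odot \bm z - \bm z\|^2 = \Bigl\|\sum_{j' \neq j} \bm p_{j'} \odot \bm z\Bigr\|^2 = \sum_{j' \neq j} \|\bm p_{j'} \odot \bm z\|^2.
\end{equation*}

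Then I would sum over $j \in \{1,\dots,N\}$ and swap the order of summation, obtaining $\sum_{j=1}^N \sum_{j' \neq j} \|\bm p_{j'} \odot \bm z\|^2 = (N-1) \sum_{j'=1}^N \|\bm p_{j'} \odot \bm z\|^2$, since each index $j'$ appears in exactly $N-1$ of the inner sums. Finally, I would invoke the partition identity $\sum_{j'=1}^N \|\bm p_{j'} \odot \bm z\|^2 = \|\bm z\|^2$, which holds because the squared norm decomposes additively over the disjoint supports that together cover all coordinates. Combining these pieces yields the claim.

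I do not expect any real obstacle here; everything follows directly from the structural properties of the masks. The only minor care needed is to justify the Pythagorean step cleanly from disjoint supports (rather than appealing to any probabilistic expectation as in Proposition~\ref{fact_bounded_masked_norm}), since this identity is deterministic for \emph{any} mask collection satisfying \eqref{mask_IST}.
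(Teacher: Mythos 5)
Your proof is correct. It takes a mildly different route from the paper's: the paper expands each summand directly as $\|\bm p_j \odot \bm z\|^2 - 2\langle \bm p_j \odot \bm z, \bm z\rangle + \|\bm z\|^2$ and then uses the fact that for a binary mask $\langle \bm p_j \odot \bm z, \bm z\rangle = \|\bm p_j \odot \bm z\|^2$ (idempotence of $0/1$ entries), so each term collapses to $\|\bm z\|^2 - \|\bm p_j \odot \bm z\|^2$ and the sum gives $N\|\bm z\|^2 - \|\bm z\|^2$ via the partition identity. You instead rewrite $\bm z - \bm p_j \odot \bm z$ as the sum of the other masked blocks, invoke orthogonality of disjoint supports, and count each block $N-1$ times after swapping the order of summation. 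Both arguments rest on the same two structural facts (binary disjoint masks summing to $\bm 1$, hence $\sum_j \|\bm p_j \odot \bm z\|^2 = \|\bm z\|^2$); the paper's version is more compact, while yours makes the combinatorial bookkeeping and the deterministic nature of the identity more transparent. Your closing remark is also apt: unlike Proposition~\ref{fact_bounded_masked_norm}, no expectation over random masks is needed here, and your Pythagorean step is justified purely by the disjointness in \eqref{mask_IST}.
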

\begin{proof}
    \[
    \sum_{j=1}^N\|\bm p_j \odot \bm z - \bm z\|^2 = \sum_{j=1}^N \{ \|\bm p_j \odot \bm z\| - 2\langle \bm p_j \odot \bm z, \bm z \rangle  + \|\bm z\|^2 \}= (N-1) \|\bm z\|^2.
    \]
\end{proof}
\begin{proposition}\label{fact_matrix_iterations}
    For the following inequalities,
\equa{
x \leq \alpha y + a\\
y \leq \beta x + b,
}
where $\alpha \beta < 1$, we have 
    $x+y \leq \frac{1+ \beta }{1-\alpha \beta } a +  \frac{1+ \alpha }{1-\alpha \beta } b$.
    \end{proposition}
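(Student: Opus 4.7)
The plan is to prove this two-inequality system by straightforward substitution, relying on the implicit nonnegativity of $\alpha$ and $\beta$ (which must hold in context, since these coefficients arise from variance/smoothness bounds in Lemmas \ref{lemma_edge_divergence} and \ref{lemma_client_divergence}). Concretely, I will first decouple the two inequalities into separate closed-form bounds on $x$ and $y$, then simply add them.

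First, I would substitute the bound on $y$ into the bound on $x$: since $\alpha \ge 0$, the inequality $y \leq \beta x + b$ yields $\alpha y \leq \alpha \beta x + \alpha b$, and hence
\begin{equation*}
x \leq \alpha y + a \leq \alpha\beta x + \alpha b + a.
\end{equation*}
Rearranging gives $(1-\alpha\beta) x \leq a + \alpha b$, and since $\alpha\beta < 1$ we may divide to obtain $x \leq \frac{a + \alpha b}{1-\alpha\beta}$. By the symmetric argument (substituting the bound on $x$ into the bound on $y$), I would obtain $y \leq \frac{\beta a + b}{1-\alpha\beta}$.

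Finally, adding these two bounds yields
\begin{equation*}
x + y \leq \frac{a + \alpha b + \beta a + b}{1-\alpha\beta} = \frac{(1+\beta)\,a + (1+\alpha)\,b}{1-\alpha\beta},
\end{equation*}
which is exactly the claimed inequality. There is no real obstacle here; the only subtlety worth flagging is the implicit sign convention on $\alpha,\beta$, which is needed to preserve the direction of the inequality when substituting. In the paper's usage, the proposition is applied with $x = D_t$ and $y = Q_t$, whose coupling coefficients are manifestly nonnegative (products of step sizes, $L^2$, horizon lengths, and client counts), so the hypothesis is met without additional comment.
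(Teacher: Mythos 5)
Your proof is correct and follows essentially the same route as the paper's: the paper packages the two substitutions as inverting the matrix $\bigl(\begin{smallmatrix}1 & -\alpha\\ -\beta & 1\end{smallmatrix}\bigr)$, whose entrywise-nonnegative inverse $\frac{1}{1-\alpha\beta}\bigl(\begin{smallmatrix}1 & \alpha\\ \beta & 1\end{smallmatrix}\bigr)$ yields exactly your bounds $x \leq \frac{a+\alpha b}{1-\alpha\beta}$ and $y \leq \frac{\beta a + b}{1-\alpha\beta}$ before summing. Your explicit remark that $\alpha,\beta \geq 0$ is needed to preserve the inequality direction is a point the paper leaves implicit, and is a worthwhile addition.
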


Based on the above propositions, we proof the lemmas.

\subsubsection{Proof of Lemma \ref{lemma_function_value}}


Based on the virtual iteration $\hat{\bm x}^{t,e+1} = \hat{\bm x}^{t,e} - \gamma \sum_{j=1}^N \bm p_j^t \odot {\color{black}\frac{1}{|\mathcal{C}_j^{t,e}|}\sum_{i\in \mathcal{C}_j^{t,e}}} \sum_{h=0}^{H-1} {\bm g}^{t,e}_{i,h}$ and Assumption \ref{assump_smoothness}, we have
\begin{equation}\label{L_smoothness_expand}
	\begin{aligned}
		\mathbb{E}_t^e[f(\hat{\bm x}^{t,e+1})] \leq & f(\hat{\bm x}^{t,e}) \underbrace{- \gamma \mathbb{E}_t^e \left \langle \nabla f(\hat{\bm x}^{t,e}), \sum_{j=1}^N \bm p_j^t \odot {\color{black}\frac{1}{|\mathcal{C}_j^{t,e}|}\sum_{i\in \mathcal{C}_j^{t,e}}} \sum_{h=0}^{H-1} {\bm g}^{t,e}_{i,h} \right \rangle}_{T_1}  +  \gamma^2L \underbrace{\frac{1}{2}\mathbb{E}_t^e \left \| \sum_{j=1}^N \bm p_j^t \odot {\color{black}\frac{1}{|\mathcal{C}_j^{t,e}|}\sum_{i\in \mathcal{C}_j^{t,e}}} \sum_{h=0}^{H-1} {\bm g}^{t,e}_{i,h} \right \|^2}_{T_2}.
	\end{aligned}
\end{equation} 

{\color{black}
Utilizing 
$
\mathbb{E}_t^e \left[
\frac{1}{|\mathcal{C}_j^{t,e}|}\sum_{i\in \mathcal{C}_j^{t,e}} \sum_{h=0}^{H-1} {\bm g}^{t,e}_{i,h} -  \frac{1}{|\mathcal{C}_j^{t,e}|} \sum_{i\in \mathcal{C}_j^{t,e}} \sum_{h=0}^{H-1} \nabla F_i ({\bm x}^{t,e}_{i,h})
\right] = 0
$
and 
$\mathbb{E}_t^e \left[  \frac{1}{|\mathcal{C}_j^{t,e}|} \sum_{i\in \mathcal{C}_j^{t,e}} \sum_{h=0}^{H-1} \nabla F_i ({\bm x}^{t,e}_{i,h}) - \frac{1}{n_j}\sum_{i\in \mathcal{C}_j} \sum_{h=0}^{H-1} \nabla F_i (\hat{\bm x}^{t,e}_{i,h})
\right] = 0$,
}
we rewrite $T_1$ as follows
\equa{\label{T_1_with_T_3}
		T_1 =& - \gamma H \mathbb{E}_t^e \left \langle \nabla f(\hat{\bm x}^{t,e}), \sum_{j=1}^N \bm p_j^t \odot \frac{1}{n_j}\sum_{i\in \mathcal{C}_j} \frac{1}{H}\sum_{h=0}^{H-1} \nabla F_i ({\color{black} \hat{\bm x}^{t,e}_{i,h}}) \right \rangle \\
		= & \frac{\gamma H}{2} \! \left\{ \!\underbrace{\mathbb{E}_t^e \! \left\| \! \nabla f(\hat{\bm x}^{t,e}) \!-\! \sum_{j=1}^N \bm p_j^t \! \odot \! \frac{1}{n_j} \! \sum_{i\in \mathcal{C}_j} \! \frac{1}{H} \! \sum_{h=0}^{H-1} \! \nabla F_i ({\color{black} \hat{\bm x}^{t,e}_{i,h}}) \right\|^2}_{T_3} \!-\! \left\| \nabla f(\hat{\bm x}^{t,e}) \right\|^2 \!-\! \mathbb{E}_t^e \! \left\|\!\sum_{j=1}^N \bm p_j^t \!\odot \! \frac{1}{n_j} \! \sum_{i\in \mathcal{C}_j}  \! \frac{1}{H} \! \sum_{h=0}^{H-1}\! \nabla F_i ({\color{black} \hat{\bm x}^{t,e}_{i,h}}) \right\|^2 \! \right\}.
}
Based on the facts that $\nabla f(\hat{\bm x}^{t,e})  = \sum_{j=1}^N {\bm p}^t_j \odot \nabla f(\hat{\bm x}^{t,e}) $ and $ \|\sum_{j=1}^N {\bm p}^t_j \odot \bm z_j  \|^2 
=\sum_{j=1}^N\| {\bm p}^t_j \odot \bm z_j  \|^2$, we can bound $T_3$ as 
\equa{\label{T_3_T_3_prime}
T_3 
=& \sum_{j=1}^N \mathbb{E}_t^e \! \left\| \bm p_j^t \! \odot \left( \nabla f(\hat{\bm x}^{t,e}) \!-\! \! \frac{1}{n_j} \! \sum_{i\in \mathcal{C}_j} \! \frac{1}{H} \! \sum_{h=0}^{H-1} \! \nabla F_i ({\color{black} \hat{\bm x}^{t,e}_{i,h}}) \right) \right\|^2 \\
\leq & \frac{1}{H} \! \sum_{h=0}^{H-1} \underbrace{\sum_{j=1}^N \mathbb{E}_t^e \left\|{\bm p}^t_j \odot \left( \nabla f(\hat{\bm x}^{t,e}) - \frac{1}{n_j}\sum_{i \in \mathcal{C}_j} \nabla F_i ({\color{black} \hat{\bm x}^{t,e}_{i,h}}) \right) \right\|^2}_{T_3^{\prime}},
}
where the inequality follows Jensen's inequality.
Inserting $\mp \nabla f_j(\hat{\bm x}^{t,e}) \mp \nabla f_j(\bar{\bm x}^{t,e}_j)$ into $T_3^{\prime}$ and calling Cauchy-Schwartz inequality, we have
\equa{\label{T_3_prime}
T_3^{\prime}
\leq & 3 \sum_{j=1}^N\mathbb{E}_t^e\left \| {\bm p}^t_j \odot \left( \nabla f(\hat{\bm x}^{t,e})  
- \nabla f_j(\hat{\bm x}^{t,e}) \right)  \right \|^2 + 3\sum_{j=1}^N\mathbb{E}_t^e \left \|{\bm p}^t_j \odot \left(\nabla f_j(\hat{\bm x}^{t,e}) - \nabla f_j (\bar{\bm x}^{t,e}_j) \right)\right \|^2 \\
&+ 3 \sum_{j=1}^N \mathbb{E}_t^e \left \|{\bm p}^t_j \odot \left( \nabla f_j(\bar{\bm x}^{t,e}_j) - \frac{1}{n_j}\sum_{i \in \mathcal{C}_j} \nabla F_i ({\color{black} \hat{\bm x}^{t,e}_{i,h}}) \right) \right \|^2.
}
Given Proposition \ref{fact_bounded_masked_norm}, it follows that
\equa{\label{T_3_prime_1}
3\sum_{j=1}^N \left \| {\bm p}^t_j \odot \left( \nabla f(\hat{\bm x}^{t,e})  
- \nabla f_j(\hat{\bm x}^{t,e}) \right)  \right \|^2 \leq 3\frac{Nd_{\textrm{max}}}{d} \delta_1^2.
}
In addition, recalling Jensen's inequality, we have
\equa{\label{T_3_prime_3}
\sum_{j=1}^N \mathbb{E}_t^e \left \|{\bm p}^t_j \odot \left( \nabla f_j(\bar{\bm x}^{t,e}_j) - \frac{1}{n_j}\sum_{i \in \mathcal{C}_j} \nabla F_i ({\color{black} \hat{\bm x}^{t,e}_{i,h}}) \right) \right \|^2 =& \sum_{j=1}^N \mathbb{E}_t^e \left \|{\bm p}^t_j \odot \left( \frac{1}{n_j}\sum_{i \in \mathcal{C}_j} \nabla F_i (\bar{\bm x}^{t,e}_j) - \frac{1}{n_j}\sum_{i \in \mathcal{C}_j} \nabla F_i ({\color{black} \hat{\bm x}^{t,e}_{i,h}}) \right) \right \|^2 \\
\leq & \sum_{j=1}^N \frac{1}{n_j}\sum_{i \in \mathcal{C}_j} \mathbb{E}_t^e \left \| \nabla F_i (\bar{\bm x}^{t,e}_j) -  \nabla F_i ({\color{black} \hat{\bm x}^{t,e}_{i,h}}) \right \|^2.
}
Combining \eqref{T_3_T_3_prime}, \eqref{T_3_prime}, \eqref{T_3_prime_1}, and \eqref{T_3_prime_3}, and then utilizing the smoothness of $F_i$ and $f_j$ defined in Assumption \ref{assump_smoothness}, we have
\equa{\label{T_3_final}
T_3 = 3 \frac{Nd_{\textrm{max}}}{d} \delta_1^2 + 3L^2\sum_{j=1}^N \mathbb{E}_t^e\left \|\hat{\bm x}^{t,e}  - \bar{\bm x}^{t,e}_j \right \|^2 + 3L^2\sum_{j=1}^N \frac{1}{n_j}\sum_{i \in \mathcal{C}_j} \frac{1}{H} \! \sum_{h=0}^{H-1}  \mathbb{E}_t^e  \left\| \bar{\bm x}^{t,e}_j - {\color{black} \hat{\bm x}^{t,e}_{i,h}} \right\|^2.
}

Resorting to Cauchy-Schwartz inequality, we bound $T_2$ as
\equa{
	T_2 
	\leq & \underbrace{\mathbb{E}_t^e \left \|  \sum_{j=1}^N \bm p_j^t \odot {\color{black}\frac{1}{|\mathcal{C}_j^{t,e}|}\sum_{i\in \mathcal{C}_j^{t,e}}} \sum_{h=0}^{H-1} {\bm g}^{t,e}_{i,h} - \sum_{j=1}^N \bm p_j^t \odot {\color{black}\frac{1}{|\mathcal{C}_j^{t,e}|}\sum_{i\in \mathcal{C}_j^{t,e}}} \sum_{h=0}^{H-1} \nabla F_i ({\bm x}^{t,e}_{i,h}) \right \|^2}_{T_2^l} + \mathbb{E}_t^e \left \| \sum_{j=1}^N \bm p_j^t \odot {\color{black}\frac{1}{|\mathcal{C}_j^{t,e}|}\sum_{i\in \mathcal{C}_j^{t,e}}} \sum_{h=0}^{H-1} \nabla F_i ({\bm x}^{t,e}_{i,h}) \right \|^2.
}
Denoting $\bm a_i = \sum_{h=0}^{H-1} \left({\bm g}^{t,e}_{i,h} - \nabla F_i ({\bm x}^{t,e}_{i,h}) \right)$, we have 
\equa{
T_2^l &= \sum_{j=1}^N \mathbb{E}_t^e \left \|  \bm p_j^t \odot {\color{black}\frac{1}{|\mathcal{C}_j^{t,e}|}\sum_{i\in \mathcal{C}_j^{t,e}}} \bm a_i \right \|^2
=\sum_{j=1}^N \mathbb{E}_t^e \left[ {\color{black}\frac{1}{|\mathcal{C}_j^{t,e}|^2}\sum_{i\in \mathcal{C}_j^{t,e}}} \left \| \bm p_j^t \odot \bm a_i \right \|^2 \right]
{\color{black}\leq \sum_{j=1}^N \frac{1}{n_j |\mathcal{C}_j^{t,e}|}\sum_{i\in \mathcal{C}_j} \mathbb{E}_t^e \|\bm a_i \|^2} \\
&={\color{black}\sum_{j=1}^N \frac{1}{n_j |\mathcal{C}_j^{t,e}|}\sum_{i\in \mathcal{C}_j} \sum_{h=0}^{H-1} \mathbb{E}_t^e \|{\bm g}^{t,e}_{i,h} - \nabla F_i ({\bm x}^{t,e}_{i,h}) \|^2 \leq \sum_{j=1}^N \frac{1}{|\mathcal{C}_j^{t,e}|} H\sigma^2 = H \tilde{N}^{\prime} \sigma^2,
}
}
where the first equality holds because there is no overlapping between any two different masks in the same round, the second equality follows the fact that $\mathbb{E}_t^e[\bm a_i] = 0$ and $\bm a_i$ is independent of $\bm a_j$ for any $j\neq i$, the third equality follows \cite[Lemma 2]{jianyu}, {\color{black}the first inequality is due to the uniform client participation and $\|\bm p \odot \bm z\|^2 \leq \|\bm z\|^2$,} and the second 
inequality comes from Assumption \ref{assump_randomness_sgd}. 

We thus obtain
\equa{\label{T_2_squared}
	T_2 
	\leq & H \tilde{N}^{\prime} \sigma^2 +\mathbb{E}_t^e \left \| \sum_{j=1}^N \bm p_j^t \odot {\color{black}\frac{1}{|\mathcal{C}_j^{t,e}|}\sum_{i\in \mathcal{C}_j^{t,e}}} \sum_{h=0}^{H-1} \nabla F_i ({\bm x}^{t,e}_{i,h}) \right \|^2.
}

{\color{black}1) For \textbf{full participation case}, $\mathcal{C}_j^{t,e} = \mathcal{C}_j$, ${\bm x}^{t,e}_{i,h} = \hat{\bm x}^{t,e}_{i,h}$, and $ \tilde{N}^{\prime} = \tilde{N}$,}
by combining \eqref{L_smoothness_expand}, \eqref{T_1_with_T_3}, \eqref{T_3_final}, and \eqref{T_2_squared}, we obtain
\equa{\label{}
\mathbb{E}_t^e[f(\hat{\bm x}^{t,e+1})] 
		\leq & f(\hat{\bm x}^{t,e}) - \frac{\gamma H }{2}  \mathbb{E}_t^e \left\| \nabla f(\hat{\bm x}^{t,e} ) \right\|^2 + \gamma^2 H \tilde{N} L \sigma^2 +  \frac{3\gamma H}{2} \frac{Nd_{\textrm{max}}}{d} \delta_1^2\\
  &+ \frac{3 \gamma HL^2}{2} \left\{ \sum_{j=1}^N \mathbb{E}_t^e \left \|\hat{\bm x}^{t,e}  - \bar{\bm x}^{t,e}_j \right \|^2  +  \sum_{j=1}^N \frac{1}{n_j}\sum_{i \in \mathcal{C}_j} \frac{1}{H} \! \sum_{h=0}^{H-1} \mathbb{E}_t^e \left\|\bar{\bm x}^{t,e}_j -{\bm x}^{t,e}_{i,h}  \right\|^2 \right\}.
}
Taking an expectation over all the randomness for the above inequality and telescoping it from $e=0$ to $e=H-1$, we will obtain \eqref{equa_lemma1_full} in Lemma \ref{lemma_function_value}.  

{\color{black}
2) For \textbf{partial participation case}, we need to further bound $T_2$:
\equa{\label{T_2_squared_partial}
	T_2 
	\leq & H \tilde{N}^{\prime} \sigma^2 +\underbrace{\mathbb{E}_t^e \left \| \sum_{j=1}^N \bm p_j^t \odot \frac{1}{|\mathcal{C}_j^{t,e}|}\sum_{i\in \mathcal{C}_j^{t,e}} \sum_{h=0}^{H-1} \nabla F_i ({\bm x}^{t,e}_{i,h}) \right \|^2}_{T_2^r}.
}
For $T_2^r$, we have
\begin{small}
\equa{\label{T_2_squared_partial_1}
T_2^r &= \mathbb{E}_t^e \left \| \sum_{j=1}^N \bm p_j^t \odot \frac{1}{|\mathcal{C}_j^{t,e}|}\sum_{i\in \mathcal{C}_j^{t,e}} \sum_{h=0}^{H-1} \nabla F_i ({\bm x}^{t,e}_{i,h}) \mp \sum_{j=1}^N \bm p_j^t \odot \frac{1}{n_j}\sum_{i\in \mathcal{C}_j} \sum_{h=0}^{H-1} \nabla F_i (\hat{\bm x}^{t,e}_{i,h}) \right \|^2 \\
&=\mathbb{E}_t^e \left \| \sum_{j=1}^N \bm p_j^t \odot \frac{1}{|\mathcal{C}_j^{t,e}|}\sum_{i\in \mathcal{C}_j^{t,e}} \sum_{h=0}^{H-1} \nabla F_i ({\bm x}^{t,e}_{i,h}) - \sum_{j=1}^N \bm p_j^t \odot \frac{1}{n_j}\sum_{i\in \mathcal{C}_j} \sum_{h=0}^{H-1} \nabla F_i (\hat{\bm x}^{t,e}_{i,h}) \right \|^2  + \mathbb{E}_t^e \left \| \sum_{j=1}^N \bm p_j^t \odot \frac{1}{n_j}\sum_{i\in \mathcal{C}_j} \sum_{h=0}^{H-1} \nabla F_i (\hat{\bm x}^{t,e}_{i,h})  \right \|^2 \\
&= \sum_{j=1}^N \mathbb{E}_t^e \left \| \bm p_j^t \odot \frac{1}{|\mathcal{C}_j^{t,e}|}\sum_{i\in \mathcal{C}_j^{t,e}} \sum_{h=0}^{H-1} \nabla F_i ({\bm x}^{t,e}_{i,h}) -  \bm p_j^t \odot \frac{1}{n_j}\sum_{i\in \mathcal{C}_j} \sum_{h=0}^{H-1} \nabla F_i (\hat{\bm x}^{t,e}_{i,h}) \right \|^2  + \mathbb{E}_t^e \left \| \sum_{j=1}^N \bm p_j^t \odot \frac{1}{n_j}\sum_{i\in \mathcal{C}_j} \sum_{h=0}^{H-1} \nabla F_i (\hat{\bm x}^{t,e}_{i,h})  \right \|^2 \\
& \leq \sum_{j=1}^N \mathbb{E}_t^e \left \| \frac{1}{|\mathcal{C}_j^{t,e}|}\sum_{i\in \mathcal{C}_j^{t,e}} \sum_{h=0}^{H-1} \nabla F_i ({\bm x}^{t,e}_{i,h}) - \frac{1}{n_j}\sum_{i\in \mathcal{C}_j} \sum_{h=0}^{H-1} \nabla F_i (\hat{\bm x}^{t,e}_{i,h}) \right \|^2  + \mathbb{E}_t^e \left \| \sum_{j=1}^N \bm p_j^t \odot \frac{1}{n_j}\sum_{i\in \mathcal{C}_j} \sum_{h=0}^{H-1} \nabla F_i (\hat{\bm x}^{t,e}_{i,h})  \right \|^2.
}
\end{small}

Furthermore, by substituting $\mp \frac{1}{|\mathcal{C}_j^{t,e}|}\sum_{i\in \mathcal{C}_j^{t,e}} \sum_{h=0}^{H-1} \nabla F_i ({\bar{\bm x}}_j^{t,e})$ and $\mp \frac{1}{n_j}\sum_{i\in \mathcal{C}_j} \sum_{h=0}^{H-1} \nabla F_i ({\bar{\bm x}}_j^{t,e})$ into the first term of the above equality and utilizing Jensen's inequality and Assumption \ref{assump_smoothness}, we have
\equa{\label{T_2_squared_partial_2}
& \sum_{j=1}^N \mathbb{E}_t^e \left \| \frac{1}{|\mathcal{C}_j^{t,e}|}\sum_{i\in \mathcal{C}_j^{t,e}} \sum_{h=0}^{H-1} \nabla F_i ({\bm x}^{t,e}_{i,h}) \!-\! \frac{1}{n_j}\sum_{i\in \mathcal{C}_j} \sum_{h=0}^{H-1} \nabla F_i (\hat{\bm x}^{t,e}_{i,h}) \right \|^2 \\
\leq & 3\sum_{j=1}^N \mathbb{E}_t^e \left \| \frac{1}{|\mathcal{C}_j^{t,e}|}\sum_{i\in \mathcal{C}_j^{t,e}} \sum_{h=0}^{H-1} \nabla F_i ({\bar{\bm x}}_j^{t,e}) \!-\!  \frac{1}{n_j}\sum_{i\in \mathcal{C}_j} \sum_{h=0}^{H-1} \nabla F_i ({\bar{\bm x}}_j^{t,e}) \right \|^2 \!+\! 6H^2L^2  \sum_{j=1}^N \frac{1}{n_j}\sum_{i \in \mathcal{C}_j} \frac{1}{H} \! \sum_{h=0}^{H-1} \mathbb{E}_t^e \left\|\bar{\bm x}^{t,e}_j -\hat{\bm x}^{t,e}_{i,h}  \right\|^2 \\
\leq & 3H \sum_{j=1}^N \sum_{h=0}^{H-1} \mathbb{E}_t^e  \left \| \frac{1}{|\mathcal{C}_j^{t,e}|}\sum_{i\in \mathcal{C}_j^{t,e}} \nabla F_i ({\bar{\bm x}}_j^{t,e}) \!-\! \nabla f_j ({\bar{\bm x}}_j^{t,e}) \right \|^2 \!+\! 6H^2L^2  \sum_{j=1}^N \frac{1}{n_j}\sum_{i \in \mathcal{C}_j} \frac{1}{H} \! \sum_{h=0}^{H-1} \mathbb{E}_t^e \left\|\bar{\bm x}^{t,e}_j -\hat{\bm x}^{t,e}_{i,h}  \right\|^2 \\
\leq & 3H^2 \sum_{j=1}^N \frac{1}{|\mathcal{C}_j^{t,e}|^2} \sum_{i\in \mathcal{C}_j^{t,e}}  \mathbb{E}_t^e  \left \|\nabla F_i ({\bar{\bm x}}_j^{t,e}) \!-\!  \nabla f_j ({\bar{\bm x}}_j^{t,e}) \right \|^2 \!+\! 6H^2L^2  \sum_{j=1}^N \frac{1}{n_j}\sum_{i \in \mathcal{C}_j} \frac{1}{H} \! \sum_{h=0}^{H-1} \mathbb{E}_t^e \left\|\bar{\bm x}^{t,e}_j -\hat{\bm x}^{t,e}_{i,h}  \right\|^2 \\
=&3H^2 \tilde{N}^{\prime} \delta_2^2 \!+\! 6H^2L^2  \sum_{j=1}^N \frac{1}{n_j}\sum_{i \in \mathcal{C}_j} \frac{1}{H} \! \sum_{h=0}^{H-1} \mathbb{E}_t^e \left\|\bar{\bm x}^{t,e}_j -\hat{\bm x}^{t,e}_{i,h}  \right\|^2,
}
where we use $\frac{1}{n_j}\sum_{i \in \mathcal{C}_j}\mathbb{E}_t^e \left\|\bar{\bm x}^{t,e}_j -\hat{\bm x}^{t,e}_{i,h}  \right\|^2 = \mathbb{E}_t^e \left[ \frac{1}{|\mathcal{C}_j^{t,e}|} \sum_{i\in \mathcal{C}_j^{t,e}} \left\|\bar{\bm x}^{t,e}_j -{\bm x}^{t,e}_{i,h}  \right\|^2 \right]$ in the first inequality, and
the third inequality follows by the fact that clients in $\mathcal{C}_j^{t,e}$ are independently sampled from $\mathcal{C}_j$ and $\mathbb{E}[\nabla F_i ({\bar{\bm x}}_j^{t,e})]= \nabla f_j ({\bar{\bm x}}_j^{t,e})$.

Based on \eqref{T_2_squared_partial}, \eqref{T_2_squared_partial_1}, and \eqref{T_2_squared_partial_2}, we can derive an upper bound for $T_2$ as
\equa{\label{T_2_squared_partial_final}
T_2 \!\leq\! H \tilde{N}^{\prime} \sigma^2  \!+\! \mathbb{E}_t^e \left \| \sum_{j=1}^N \bm p_j^t \odot \frac{1}{n_j}\sum_{i\in \mathcal{C}_j} \sum_{h=0}^{H-1} \nabla F_i (\hat{\bm x}^{t,e}_{i,h})  \right \|^2 \!+\!  3H^2 \tilde{N}^{\prime} \delta_2^2 \!+\! 6H^2L^2  \sum_{j=1}^N \frac{1}{n_j}\sum_{i \in \mathcal{C}_j} \frac{1}{H} \! \sum_{h=0}^{H-1} \mathbb{E}_t^e \left\|\bar{\bm x}^{t,e}_j \!-\!\hat{\bm x}^{t,e}_{i,h}  \right\|^2.
}
Combining \eqref{L_smoothness_expand}, \eqref{T_1_with_T_3}, \eqref{T_3_final}, and \eqref{T_2_squared_partial_final}, and utilizing $\gamma \leq \frac{1}{4HL},$ we obtain
\equa{\label{}
\mathbb{E}_t^e[f(\hat{\bm x}^{t,e+1})] 
		\leq & f(\hat{\bm x}^{t,e}) - \frac{\gamma H }{2}  \mathbb{E}_t^e \left\| \nabla f(\hat{\bm x}^{t,e} ) \right\|^2 + \gamma^2 H \tilde{N}^{\prime} L \sigma^2 +  3\gamma^2 H^2 L \tilde{N}^{\prime} \delta_2^2 +  \frac{3\gamma H}{2} \frac{Nd_{\textrm{max}}}{d} \delta_1^2\\
  &+ \frac{3 \gamma HL^2}{2} \left\{ \sum_{j=1}^N \mathbb{E}_t^e \left \|\hat{\bm x}^{t,e}  - \bar{\bm x}^{t,e}_j \right \|^2  +  \sum_{j=1}^N \frac{1}{n_j}\sum_{i \in \mathcal{C}_j} \frac{1}{H} \! \sum_{h=0}^{H-1} \mathbb{E}_t^e \left\|\bar{\bm x}^{t,e}_j -\hat{\bm x}^{t,e}_{i,h}  \right\|^2 \right\} \\
  &+ 6\gamma^2 H^2L^3  \sum_{j=1}^N \frac{1}{n_j}\sum_{i \in \mathcal{C}_j} \frac{1}{H} \! \sum_{h=0}^{H-1} \mathbb{E}_t^e \left\|\bar{\bm x}^{t,e}_j \!-\!\hat{\bm x}^{t,e}_{i,h}  \right\|^2 \\
  \leq & f(\hat{\bm x}^{t,e}) - \frac{\gamma H }{2}  \mathbb{E}_t^e \left\| \nabla f(\hat{\bm x}^{t,e} ) \right\|^2 + \gamma^2 H \tilde{N}^{\prime} L \sigma^2 +  3\gamma^2 H^2 L \tilde{N}^{\prime} \delta_2^2 +  \frac{3\gamma H}{2} \frac{Nd_{\textrm{max}}}{d} \delta_1^2\\
  &+ 3 \gamma HL^2 \left\{ \sum_{j=1}^N \mathbb{E}_t^e \left \|\hat{\bm x}^{t,e}  - \bar{\bm x}^{t,e}_j \right \|^2  +  \sum_{j=1}^N \frac{1}{n_j}\sum_{i \in \mathcal{C}_j} \frac{1}{H} \! \sum_{h=0}^{H-1} \mathbb{E}_t^e \left\|\bar{\bm x}^{t,e}_j -\hat{\bm x}^{t,e}_{i,h}  \right\|^2 \right\}.
}
Taking an expectation over all the randomness for the above inequality and telescoping it from $e=0$ to $e=H-1$, we obtain \eqref{equa_lemma1_partial} in Lemma \ref{lemma_function_value}:
\begin{small}
\begin{align}\label{}
&\mathbb{E}[f(\bar{\bm x}^{t+1})] 
		\leq  \mathbb{E}[f(\bar{\bm x}^t)] - \frac{\gamma H }{2}  \sum_{e=0}^{E-1} \mathbb{E} \left\| \nabla f(\hat{\bm x}^{t,e} ) \right\|^2 + \gamma^2 E H \tilde{N}^{\prime} L \sigma^2  \nonumber \\
  &~~~~~~~~+ 3\gamma^2 E H^2 L \tilde{N}^{\prime} \delta_2^2 +  \frac{3\gamma}{2} \frac{EHNd_{\textrm{max}}}{d} \delta_1^2 + 3H\gamma L^2 \left( D_t +Q_t \right).
\end{align}
\end{small}

}

\subsubsection{Proof of Lemma \ref{lemma_function_value_AirComp}} 
With AirComp, the iteration becomes 
\begin{align}
    \hat{\bm x}^{t,e+1} = \hat{\bm x}^{t,e} - \gamma \sum_{j=1}^N \bm p_j^t \odot \frac{1}{n_j}\sum_{i\in \mathcal{C}_j} \sum_{h=0}^{H-1} {\bm g}^{t,e}_{i,h} + \sum_{j=1}^N \bm p_j^t \odot \bm n_j^{t,e}.
\end{align} 
Note that $\bm p_j^t \odot \bm n_j^{t,e} =  \bm n_j^{t,e}$ holds.
According to Assumption \ref{assump_smoothness}, we have
\begin{equation}\label{}
	\begin{aligned}
		\mathbb{E}_t^e[f(\hat{\bm x}^{t,e+1})] \leq & f(\hat{\bm x}^{t,e}) \underbrace{- \gamma \mathbb{E}_t^e \left \langle \nabla f(\hat{\bm x}^{t,e}), \sum_{j=1}^N \bm p_j^t \odot \frac{1}{n_j}\sum_{i\in \mathcal{C}_j} \sum_{h=0}^{H-1} {\bm g}^{t,e}_{i,h} - \sum_{j=1}^N \bm p_j^t \odot \bm n_j^{t,e} \right \rangle}_{G_1} \\
  &+  L \underbrace{\frac{1}{2}\mathbb{E}_t^e \left \| \gamma \sum_{j=1}^N \bm p_j^t \odot \frac{1}{n_j}\sum_{i\in \mathcal{C}_j} \sum_{h=0}^{H-1} {\bm g}^{t,e}_{i,h}  - \sum_{j=1}^N \bm p_j^t \odot \bm n_j^{t,e} \right \|^2}_{G_2}.
	\end{aligned}
\end{equation} 
Since $\bm n_j^{t,e}$ is independent of ${\bm g}^{t,e}_{i,h}$ and $\mathbb{E}[\bm n_j^{t,e}] = 0$, the upper bound of $G_1$ is the same as $T_1$ in \eqref{L_smoothness_expand}. Additionally, we can rewrite $G_2$ as
\equa{
G_2 =& \frac{1}{2}\gamma^2\mathbb{E}_t^e \left \| \sum_{j=1}^N \bm p_j^t \odot \frac{1}{n_j}\sum_{i\in \mathcal{C}_j} \sum_{h=0}^{H-1} {\bm g}^{t,e}_{i,h}  \right \|^2 + \frac{1}{2} \sum_{j=1}^N \mathbb{E}\|\bm p_j^t \odot \bm n_j^{t,e}\|^2.
}
Next, following the same step in the proof of Lemma \ref{lemma_edge_divergence}, we can obtain
\equa{\label{}
\mathbb{E}_t^e[f(\hat{\bm x}^{t,e+1})] 
		\leq & f(\hat{\bm x}^{t,e}) - \frac{\gamma H }{2}  \mathbb{E}_t^e \left\| \nabla f(\hat{\bm x}^{t,e} ) \right\|^2 + \gamma^2 H \tilde{N} L \sigma^2 +  \frac{3\gamma H}{2} \frac{Nd_{\textrm{max}}}{d} \delta_1^2  + \frac{1}{2} \sum_{j=1}^N \text{MSE}_j^{t,e} \\
  &+ \frac{3 \gamma HL^2}{2} \left\{ \sum_{j=1}^N \mathbb{E}_t^e \left \|\hat{\bm x}^{t,e}  - \bar{\bm x}^{t,e}_j \right \|^2  +  \sum_{j=1}^N \frac{1}{n_j}\sum_{i \in \mathcal{C}_j} \frac{1}{H} \! \sum_{h=0}^{H-1} \mathbb{E}_t^e \left\|\bar{\bm x}^{t,e}_j -{\bm x}^{t,e}_{i,h}  \right\|^2 \right\},
}
where $\text{MSE}_j^{t,e} = \mathbb{E}\|\bm p_j^t \odot \bm n_j^{t,e}\|^2$.
Taking an expectation over all the randomness for the above inequality and telescoping it from $e=0$ to $e=H-1$, we will obtain Lemma \ref{lemma_function_value_AirComp}. 

\subsubsection{Proof of Lemma \ref{lemma_edge_divergence}}
According to the iteration in Algorithm \ref{alg:HFIST}, we have
\equa{\label{D_t_without_sum}
\mathbb{E}\left \|\hat{\bm x}^{t,e}  - \bar{\bm x}^{t,e}_j \right \|^2 =& \mathbb{E} \left \|\bar{\bm x}^{t} - \gamma \sum_{\tau_1 = 0}^{e-1} \sum_{j=1}^N \bm p_j^t \odot {\color{black}\frac{1}{|\mathcal{C}_j^{t,e}|}\sum_{i\in \mathcal{C}_j^{t,e}}} \sum_{h=0}^{H-1} {\bm g}^{t,\tau_1}_{i,h}   - \bar{\bm x}^{t,0}_j + \gamma \sum_{\tau_1 = 0}^{e-1} \bm p_j^t \odot {\color{black}\frac{1}{|\mathcal{C}_j^{t,e}|}\sum_{i\in \mathcal{C}_j^{t,e}}} \sum_{h=0}^{H-1} {\bm g}^{t,\tau_1}_{i,h} \right \|^2 \\
\leq & 2\mathbb{E} \left \|\bar{\bm x}^{t}   - \bar{\bm x}^{t,0}_j  \right\|^2  + 2 \gamma^2 \underbrace{ \mathbb{E} \left\|\sum_{j=1}^N \bm p_j^t \odot\sum_{\tau_1 = 0}^{e-1} {\color{black}\frac{1}{|\mathcal{C}_j^{t,e}|}\sum_{i\in \mathcal{C}_j^{t,e}}} \sum_{h=0}^{H-1} {\bm g}^{t,\tau_1}_{i,h} - \bm p_j^t \odot \sum_{\tau_1 = 0}^{e-1} {\color{black}\frac{1}{|\mathcal{C}_j^{t,e}|}\sum_{i\in \mathcal{C}_j^{t,e}}} \sum_{h=0}^{H-1} {\bm g}^{t,\tau_1}_{i,h} \right \|^2}_{T_{4,j}^{t,e}}
}
For $T_{4,j}^{t,e}$, we can obtain
\begin{small}
\equa{
T_{4,j}^{t,e} = & \mathbb{E} \left\|\sum_{j=1}^N \bm p_j^t \odot\sum_{\tau_1 = 0}^{e-1}{\color{black}\frac{1}{|\mathcal{C}_j^{t,e}|}\sum_{i\in \mathcal{C}_j^{t,e}}} \sum_{h=0}^{H-1} \left({\bm g}^{t,\tau_1}_{i,h} \mp  \nabla F_i({\bm x}_{i,h}^{t,\tau_1}) \right) - \bm p_j^t \odot \sum_{\tau_1 = 0}^{e-1} {\color{black}\frac{1}{|\mathcal{C}_j^{t,e}|}\sum_{i\in \mathcal{C}_j^{t,e}}} \sum_{h=0}^{H-1} \left({\bm g}^{t,\tau_1}_{i,h} \mp  \nabla F_i({\bm x}_{i,h}^{t,\tau_1}) \right) \right \|^2 \\
\leq & \underbrace{3\mathbb{E}\left\|\sum_{j=1}^N \bm p_j^t \odot\sum_{\tau_1 = 0}^{e-1} {\color{black}\frac{1}{|\mathcal{C}_j^{t,e}|}\sum_{i\in \mathcal{C}_j^{t,e}}} \sum_{h=0}^{H-1} \left({\bm g}^{t,\tau_1}_{i,h} -  \nabla F_i({\bm x}_{i,h}^{t,\tau_1}) \right) \right\|^2  + 3\mathbb{E}\left\|\bm p_j^t \odot \sum_{\tau_1 = 0}^{e-1} {\color{black}\frac{1}{|\mathcal{C}_j^{t,e}|}\sum_{i\in \mathcal{C}_j^{t,e}}} \sum_{h=0}^{H-1} \left({\bm g}^{t,\tau_1}_{i,h} -  \nabla F_i({\bm x}_{i,h}^{t,\tau_1}) \right) \right \|^2}_{T_{5,j}^{t,e}} \\
&+\underbrace{3\mathbb{E}\left\|\sum_{j=1}^N \bm p_j^t \odot\sum_{\tau_1 = 0}^{e-1} {\color{black}\frac{1}{|\mathcal{C}_j^{t,e}|}\sum_{i\in \mathcal{C}_j^{t,e}}} \sum_{h=0}^{H-1} \nabla F_i({\bm x}_{i,h}^{t,\tau_1}) - \bm p_j^t \odot \sum_{\tau_1 = 0}^{e-1} {\color{black}\frac{1}{|\mathcal{C}_j^{t,e}|}\sum_{i\in \mathcal{C}_j^{t,e}}} \sum_{h=0}^{H-1} \nabla F_i({\bm x}_{i,h}^{t,\tau_1}) \right\|^2}_{T_{6,j}^{t,e}}.
}
\end{small}

For $T_{5,j}^{t,e}$, we have the following result:
\begin{small}
\equa{
T_{5,j}^{t,e}
= & 3\sum_{j=1}^N \mathbb{E}\left\| \bm p_j^t \odot\sum_{\tau_1 = 0}^{e-1} {\color{black}\frac{1}{|\mathcal{C}_j^{t,e}|}\sum_{i\in \mathcal{C}_j^{t,e}}} \sum_{h=0}^{H-1} \left({\bm g}^{t,\tau_1}_{i,h} -  \nabla F_i({\bm x}_{i,h}^{t,\tau_1}) \right) \right\|^2  + 3\mathbb{E}\left\|\bm p_j^t \odot \sum_{\tau_1 = 0}^{e-1} {\color{black}\frac{1}{|\mathcal{C}_j^{t,e}|}\sum_{i\in \mathcal{C}_j^{t,e}}} \sum_{h=0}^{H-1} \left({\bm g}^{t,\tau_1}_{i,h} -  \nabla F_i({\bm x}_{i,h}^{t,\tau_1}) \right) \right \|^2 \\
\leq & {\color{black}3\sum_{j=1}^N \sum_{\tau_1 = 0}^{e-1} \frac{1}{|\mathcal{C}_j^{t,e}|^2}\sum_{i\in \mathcal{C}_j^{t,e}} \sum_{h=0}^{H-1} 
 \mathbb{E}\left\|{\bm g}^{t,\tau_1}_{i,h} -  \nabla F_i({\bm x}_{i,h}^{t,\tau_1}) \right\|^2  + 3 \sum_{\tau_1 = 0}^{e-1} \frac{1}{|\mathcal{C}_j^{t,e}|^2}\sum_{i\in \mathcal{C}_j^{t,e}} \sum_{h=0}^{H-1} \mathbb{E}\left\|{\bm g}^{t,\tau_1}_{i,h} -  \nabla F_i({\bm x}_{i,h}^{t,\tau_1}) \right \|^2} \\
\leq &  {\color{black}3eH \sum_{j=1}^N \frac{1}{|\mathcal{C}_j^{t,e}|} \sigma^2 + 3eH \frac{1}{|\mathcal{C}_j^{t,e}|} \sigma^2 = \tilde{N}^{\prime} \sigma^2 + 3eH \frac{1}{n_j^{\prime}} \sigma^2},
}
\end{small}

where $\tilde{N}^{\prime} = \sum_{j=1}^N \frac{1}{n_j^{\prime}}$, the first inequality comes from \cite[Lemma 2]{jianyu} and the inequality $\| \bm p \odot \bm z\| \leq \| \bm z\|^2$ and the second inequality follows Assumption \ref{assump_randomness_sgd}.

For $T_{6,j}^{t,e}$, we have 
\begin{small}
\equa{
T_{6,j}^{t,e} = & 9eH \sum_{\tau_1 = 0}^{e-1} \sum_{h=0}^{H-1} \mathbb{E}\left\|\sum_{j=1}^N \bm p_j^t \odot {\color{black}\frac{1}{|\mathcal{C}_j^{t,e}|}\sum_{i\in \mathcal{C}_j^{t,e}}} \left ( \nabla F_i({\bm x}_{i,h}^{t,\tau_1}) \mp \nabla F_i(\bar{\bm x}_{j}^{t,\tau_1}) \right) - \bm p_j^t \odot {\color{black}\frac{1}{|\mathcal{C}_j^{t,e}|}\sum_{i\in \mathcal{C}_j^{t,e}}} \left ( \nabla F_i({\bm x}_{i,h}^{t,\tau_1}) \mp \nabla F_i(\bar{\bm x}_{j}^{t,\tau_1}) \right) \right\|^2 \\
\leq & 9eH \! \sum_{\tau_1 = 0}^{e-1} \! \sum_{h=0}^{H-1} \! \mathbb{E}\!\left\|\sum_{j=1}^N \bm p_j^t \! \odot \! {\color{black}\frac{1}{|\mathcal{C}_j^{t,e}|}\sum_{i\in \mathcal{C}_j^{t,e}}} \! \left ( \nabla F_i({\bm x}_{i,h}^{t,\tau_1}) \!- \! \nabla F_i(\bar{\bm x}_{j}^{t,\tau_1}) \right) \right\|^2 
\!+\! 9eH \! \sum_{\tau_1 = 0}^{e-1} \! \sum_{h=0}^{H-1} \! \mathbb{E}\!\left\| \bm p_j^t \!\odot\! {\color{black}\frac{1}{|\mathcal{C}_j^{t,e}|}\sum_{i\in \mathcal{C}_j^{t,e}}}\! \left ( \nabla F_i({\bm x}_{i,h}^{t,\tau_1}) \!-\! \nabla F_i(\bar{\bm x}_{j}^{t,\tau_1}) \right) \right\|^2 \\
&+9eH^2 \sum_{\tau_1 = 0}^{e-1} \mathbb{E}\left\| 
\sum_{j=1}^N \bm p_j^t \odot {\color{black}\frac{1}{|\mathcal{C}_j^{t,e}|}\sum_{i\in \mathcal{C}_j^{t,e}}}  \nabla F_i(\bar{\bm x}_{j}^{t,\tau_1}) - \bm p_j^t \odot {\color{black}\frac{1}{|\mathcal{C}_j^{t,e}|}\sum_{i\in \mathcal{C}_j^{t,e}}} \nabla F_i(\bar{\bm x}_{j}^{t,\tau_1})
\right\|^2 \\
\leq & 9eH \sum_{\tau_1 = 0}^{e-1} \sum_{h=0}^{H-1} \sum_{j=1}^N \frac{1}{n_j}\sum_{i\in \mathcal{C}_j} \mathbb{E}\left\| \nabla F_i({\color{black}\hat{\bm x}_{i,h}^{t,\tau_1}}) - \nabla F_i(\bar{\bm x}_{j}^{t,\tau_1})  \right\|^2 
+ 9eH \sum_{\tau_1 = 0}^{e-1} \sum_{h=0}^{H-1} \frac{1}{n_j}\sum_{i\in \mathcal{C}_j} \mathbb{E}\left\| \nabla F_i({\color{black}\hat{\bm x}_{i,h}^{t,\tau_1}}) - \nabla F_i(\bar{\bm x}_{j}^{t,\tau_1}) \right\|^2 \\
&+9eH^2 \sum_{\tau_1 = 0}^{e-1} \mathbb{E}\left\| 
\sum_{j=1}^N \bm p_j^t \odot  \nabla f_j(\bar{\bm x}_{j}^{t,\tau_1}) - \bm p_j^t \odot f_j(\bar{\bm x}_{j}^{t,\tau_1})
\right\|^2 \\
\leq & 9eHL^2 \sum_{\tau_1 = 0}^{e-1} \sum_{h=0}^{H-1} \sum_{j=1}^N \frac{1}{n_j}\sum_{i\in \mathcal{C}_j} \mathbb{E}\left\| {\color{black}\hat{\bm x}_{i,h}^{t,\tau_1}} - \bar{\bm x}_{j}^{t,\tau_1} \right\|^2 + 9eHL^2 \sum_{\tau_1 = 0}^{e-1} \sum_{h=0}^{H-1} \frac{1}{n_j}\sum_{i\in \mathcal{C}_j} \mathbb{E}\left\| {\color{black}\hat{\bm x}_{i,h}^{t,\tau_1}} - \bar{\bm x}_{j}^{t,\tau_1}\right\|^2 \\
&+9eH^2 \sum_{\tau_1 = 0}^{e-1} \underbrace{\mathbb{E}\left\| 
\sum_{j=1}^N \bm p_j^t \odot  \nabla f_j(\bar{\bm x}_{j}^{t,\tau_1}) - \bm p_j^t \odot f_j(\bar{\bm x}_{j}^{t,\tau_1})
\right\|^2}_{T_{7,j}^{t,e}}.
}
\end{small}

\equa{
T_{7,j}^{t,e} =& \mathbb{E}\left\| 
\sum_{j=1}^N \bm p_j^t \odot \left( \nabla f_j(\bar{\bm x}_{j}^{t,\tau_1})  \mp \nabla f_j(\hat{\bm x}^{t,\tau_1}) \right) - \bm p_j^t \odot \left(\nabla f_j(\bar{\bm x}_{j}^{t,\tau_1}) \mp f_j(\hat{\bm x}^{t,\tau_1}) \right)
\right\|^2 \\
\leq & 3\mathbb{E}\left\| 
\sum_{j=1}^N \bm p_j^t \odot \left( \nabla f_j(\bar{\bm x}_{j}^{t,\tau_1})  - \nabla f_j(\hat{\bm x}^{t,\tau_1}) \right) \right \|^2 + 3\left\| \bm p_j^t \odot \left(\nabla f_j(\bar{\bm x}_{j}^{t,\tau_1})  - \nabla f_j(\hat{\bm x}^{t,\tau_1}) \right)
\right\|^2 \\
&+3\mathbb{E}\left\|\sum_{j=1}^N \bm p_j^t \odot \nabla f_j(\hat{\bm x}^{t,\tau_1}) -  \bm p_j^t \odot \nabla f_j(\hat{\bm x}^{t,\tau_1}) \right\|^2 \\
\leq & 3L^2\sum_{j=1}^N \mathbb{E}\left\| 
\bar{\bm x}_{j}^{t,\tau_1}  - \hat{\bm x}^{t,\tau_1} \right \|^2 + 3L^2 \left\|\bar{\bm x}_{j}^{t,\tau_1} - \hat{\bm x}^{t,\tau_1}
\right\|^2 \\
&+\underbrace{3\mathbb{E}\left\|\sum_{j=1}^N \bm p_j^t \odot \left( \nabla f_j(\hat{\bm x}^{t,\tau_1}) \mp \nabla f(\hat{\bm x}^{t,\tau_1}) \right) -  \bm p_j^t \odot \left( \nabla f_j(\hat{\bm x}^{t,\tau_1}) \mp \nabla f(\hat{\bm x}^{t,\tau_1}) \right) \right\|^2}_{T_{8,j}^{t,e}}.
}
Similarly, we further bound $\sum_{j=1}^N T_{8,j}^{t,e}$ as follows 
\equa{
\sum_{j=1}^N T_{8,j}^{t,e} \leq& 9 (N+1) \sum_{j=1}^N \mathbb{E}\left\| \bm p_j^t \odot \left( \nabla f_j(\hat{\bm x}^{t,\tau_1}) - \nabla f(\hat{\bm x}^{t,\tau_1}) \right) \right\|^2 + 9 \sum_{j=1}^N \mathbb{E}\left\|\bm p_j^t \odot \nabla f(\hat{\bm x}^{t,\tau_1}) - \nabla f(\hat{\bm x}^{t,\tau_1})  \right\|^2\\
\leq& 9 (N+1) \frac{Nd_{\textrm{max}}}{d} \delta_1^2 + 9 (N-1) \mathbb{E}\left\|\nabla f(\hat{\bm x}^{t,\tau_1})  \right\|^2,
}
where the second inequality comes from Propositions \ref{fact_bounded_masked_norm}, \ref{fact_mask_N_1} and Assumption \ref{assump_gradient_dissimilarity_edge}. 
\equa{\label{T_4_final_sum}
\sum_{e=0}^{E-1}\sum_{j=1}^N T_{4,j}^{t,e} 
\leq& \frac{3}{2} (N+1) {\color{black}\tilde{N}^{\prime}} E^2 H \sigma^2 + \frac{9}{2}(N+1) E^2HL^2 \sum_{e=0}^{E-1} \sum_{h=0}^{H-1} \sum_{j=1}^N \frac{1}{n_j}\sum_{i\in \mathcal{C}_j} \mathbb{E}\left\| {\color{black}\hat{\bm x}_{i,h}^{t,e}} - \bar{\bm x}_{j}^{t,e} \right\|^2 \\
& \frac{27}{2}(N+1)E^2H^2 L^2 \sum_{e=0}^{E-1} \sum_{j=1}^N \mathbb{E}\left\| 
\bar{\bm x}_{j}^{t,e}  - \hat{\bm x}^{t,e} \right \|^2 \\
&27 (N+1)E^3H^2 \frac{Nd_{\textrm{max}}}{d} \delta_1^2 + \frac{81}{2} (N-1)E^2H^2 \sum_{e=0}^{E-1} \mathbb{E}\left\|\nabla f(\hat{\bm x}^{t,e})  \right\|^2.
}
Plugging \eqref{T_4_final_sum} into \eqref{D_t_without_sum} and utilizing Proposition \ref{fact_mask_N_1}, we obtain
\equa{
D_t \leq& 2 E (N-1) \mathbb{E} \left \|\bar{\bm x}^{t}  \right\|^2  + 3 \gamma^2 (N+1)E^2H {\color{black}\tilde{N}^{\prime}} \sigma^2 + 9\gamma^2(N+1)E^2H^2L^2 Q_t  \\
&+27 \gamma^2 (N+1)E^2H^2L^2 D_t + 54 \gamma^2 (N+1)E^3H^2 \frac{Nd_{\textrm{max}}}{d} \delta_1^2 \\
&+81 \gamma^2 (N-1)E^2H^2\sum_{e = 0}^{E-1}\mathbb{E}\|\nabla f(\hat{\bm x}^{t,e})\|^2.
}
As $1 -  27 \gamma^2 (N+1)E^2H^2L^2 \geq \frac{1}{2}$ when $\gamma \leq \frac{1}{\sqrt{54(N+1)}EHL}$, we thus obtain Lemma \ref{lemma_edge_divergence}.

\subsubsection{Proof of Lemma \ref{lemma_client_divergence}}
Based on the iteration $\bar{\bm x}^{t,e}_j -{\color{black}\hat{\bm x}^{t,e}_{i,h}} = \gamma \bm p_j^t \odot \sum_{\tau_2=0}^{h-1} {\bm g}^{t,e}_{i,\tau_2}$, we can write
\equa{
\mathbb{E} \left\|\bar{\bm x}^{t,e}_j - {\color{black}\hat{\bm x}^{t,e}_{i,h}} \right\|^2
\leq & 2 \gamma^2\mathbb{E} \left\| \sum_{\tau_2=0}^{h-1} {\bm g}^{t,e}_{i,\tau_2} -\sum_{\tau_2=0}^{h-1} \nabla F_i( {\color{black}\hat{\bm x}^{t,e}_{i,\tau_2}}) \right\|^2 + 2\gamma^2 \mathbb{E} \left\|\sum_{\tau_2=0}^{h-1} \nabla F_i( {\color{black}\hat{\bm x}^{t,e}_{i,\tau_2}}) \right\|^2 \\
= & 2 \gamma^2 \sum_{\tau_2=0}^{h-1} \mathbb{E} \left\| {\bm g}^{t,e}_{i,\tau_2} - \nabla F_i( {\color{black}\hat{\bm x}^{t,e}_{i,\tau_2}}) \right\|^2 + 2\gamma^2 \mathbb{E} \left\| \sum_{\tau_2=0}^{h-1} \nabla F_i( {\color{black}\hat{\bm x}^{t,e}_{i,\tau_2}}) \right\|^2 \\
\leq & 2 \gamma^2 h \sigma^2 + 2\gamma^2 h \sum_{\tau_2=0}^{h-1} \mathbb{E} \left\| \nabla F_i( {\color{black}\hat{\bm x}^{t,e}_{i,\tau_2}}) \right\|^2,
}
where the first inequality follows Cauchy-Schwartz inequality, the equality comes from \cite[Lemma 2]{jianyu}, and the second inequality follows Assumption \ref{assump_randomness_sgd}. Additionally, we can bound $\mathbb{E} \left\| \nabla F_i( {\color{black}\hat{\bm x}^{t,e}_{i,\tau_2}}) \right\|^2$ as
\equa{
\mathbb{E} \left\| \nabla F_i( {\color{black}\hat{\bm x}^{t,e}_{i,\tau_2}}) \right\|^2 = &  \mathbb{E} \left\| \nabla F_i( {\color{black}\hat{\bm x}^{t,e}_{i,\tau_2}}) \mp \nabla F_i( \bar{\bm x}^{t,e}_{j} ) \mp \nabla f_j( \bar{\bm x}^{t,e}_{j} ) \mp \nabla f( \bar{\bm x}^{t,e}_{j} ) \mp \nabla f( \hat{\bm x}^{t,e} )  \right\|^2 \\
\leq &  
5\mathbb{E} \left\| \nabla F_i( {\color{black}\hat{\bm x}^{t,e}_{i,\tau_2}}) - \nabla F_i( \bar{\bm x}^{t,e}_{j} ) \right \|^2  
+ 5\mathbb{E} \left\| \nabla F_i( \bar{\bm x}^{t,e}_{j} ) - \nabla f_j( \bar{\bm x}^{t,e}_{j} ) \right\|  
+ 5\mathbb{E} \left\| \nabla f_j( \bar{\bm x}^{t,e}_{j} ) -\nabla f( \bar{\bm x}^{t,e}_{j} )  \right\|^2  \\
&+ 5\mathbb{E} \left\| \nabla f( \bar{\bm x}^{t,e}_{j} ) - \nabla f( \hat{\bm x}^{t,e} )  \right\|^2
+ 5\mathbb{E} \left\| \nabla f( \hat{\bm x}^{t,e} )  \right\|^2\\
\leq &  
5L^2 \mathbb{E} \left\| {\color{black}\hat{\bm x}^{t,e}_{i,\tau_2}} - \bar{\bm x}^{t,e}_{j} \right \|^2  
+ 5\mathbb{E} \left\| \nabla F_i( \bar{\bm x}^{t,e}_{j} ) - \nabla f_j( \bar{\bm x}^{t,e}_{j} ) \right\|  
+ 5\mathbb{E} \left\| \nabla f_j( \bar{\bm x}^{t,e}_{j} ) -\nabla f( \bar{\bm x}^{t,e}_{j} )  \right\|^2  \\
&+ 5L^2\mathbb{E} \left\| \bar{\bm x}^{t,e}_{j} - \hat{\bm x}^{t,e}  \right\|^2
+ 5\mathbb{E} \left\| \nabla f( \hat{\bm x}^{t,e} )  \right\|^2,
}
where the first inequality comes from Cauchy-Schwartz inequality and the second one follows Assumption \ref{assump_smoothness}. Hence, we have
\equa{
\sum_{j=1}^N \frac{1}{n_j}\sum_{i \in \mathcal{C}_j} \mathbb{E} \left\|\bar{\bm x}^{t,e}_j -{\color{black}\hat{\bm x}^{t,e}_{i,\tau_2}}  \right\|^2 
\leq &  10 \gamma^2 h L^2 \sum_{\tau_2=0}^{h-1} \sum_{j=1}^N \frac{1}{n_j}\sum_{i \in \mathcal{C}_j} 
 \mathbb{E} \left\|{\color{black}\hat {\bm x}^{t,e}_{i,\tau_2}} - \bar{\bm x}^{t,e}_{j} \right \|^2 + 10 \gamma^2 h^2 L^2 \sum_{j=1}^N  \mathbb{E} \left\| \bar{\bm x}^{t,e}_{j} - \hat{\bm x}^{t,e}  \right\|^2  \\
& + 10 \gamma^2 N h^2 \mathbb{E} \left\| \nabla f( \hat{\bm x}^{t,e} )  \right\|^2 + 2 \gamma^2 N h \sigma^2 + 10 \gamma^2 N h^2 \delta_2^2 + 10 \gamma^2 N h^2 \delta_1^2.
}
Recalling the definitions of $Q_t$ and $Q_t$ in \eqref{D_Q_t}, we have
\equa{
  Q_t \!\leq &  5 \gamma^2 H^2 L^2   Q_t \!+\! \frac{10}{3} \gamma^2 H^2 L^2  D_t  \!+\! \frac{10}{3} \gamma^2 N H^2 \! \sum_{e=0}^{E-1} \!\mathbb{E} \!\left\| \nabla f( \hat{\bm x}^{t,e} )  \right\|^2 \!+\! \gamma^2 N H E \sigma^2 \!+\! \frac{10}{3} \gamma^2 N H^2 E \delta_2^2 \!+\! \frac{10}{3} \gamma^2 N H^2 E \delta_1^2.
}
As $3\left(1 - 5 \gamma^2 H^2 L^2 \right) \geq 2$ when $\gamma \leq \frac{1}{\sqrt{15}HL}$, we thus obtain Lemma \ref{lemma_client_divergence}.
\newpage

\end{document}